\documentclass[onecolumn]{IEEEtran}
\usepackage{amsmath,amsfonts}
\usepackage{algorithmic}
\usepackage{algorithm}
\usepackage{array}
\usepackage[caption=false,font=normalsize,labelfont=sf,textfont=sf]{subfig}
\usepackage{textcomp}
\usepackage{stfloats}
\usepackage{url}
\usepackage{verbatim}
\usepackage{graphicx}
\usepackage{cite}
\hyphenation{op-tical net-works semi-conduc-tor IEEE-Xplore}
\usepackage{hyperref}
\usepackage{url}
\usepackage{booktabs}
\usepackage{bm}
\usepackage{multirow}
\usepackage{amsfonts}
\usepackage{float}
\usepackage{booktabs}
\usepackage{amssymb}
\usepackage{mathtools}
\usepackage{amsthm}
\usepackage{enumerate}
\usepackage{adjustbox}
\usepackage{xcolor}
\usepackage{varwidth}
\usepackage{capt-of}
\usepackage{pifont}
\usepackage{enumitem}
\usepackage{wrapfig}
\usepackage[]{mdframed}

\theoremstyle{plain}
\newtheorem{theorem}{Theorem}[section]

\newenvironment{restatetheorem}[1]{%
  \restatetheoreminner[Restate]
}{\endrestatetheoreminner}
\newtheorem{proposition}[theorem]{Proposition}

\newenvironment{restateproposition}[1]{%
  \restatepropositioninner[Restate]
}{\endrestatepropositioninner}
\newtheorem{lemma}[theorem]{Lemma}

\theoremstyle{definition}
\newtheorem{definition}[theorem]{Definition}
\newtheorem{assumption}[theorem]{Assumption}
\newtheorem{problem}[theorem]{Problem}
\theoremstyle{remark}

\newcommand{\cmark}{\ding{51}}%
\newcommand{\xmark}{\ding{55}}%

\def\1{\mathbf{1}}
\def\0{\mathbf{0}}
\def\E{\mathbb{E}}
\def\W{\mathbb{W}}
\def\tr{\mathrm{tr}}
\def\KL{\mathrm{KL}}
\def\SKL{\mathrm{SKL}}
\def\TV{\mathrm{TV}}
\def\Var{\mathrm{Var}}

\def\Pr{\mathbb{P}}
\def\tr{\mathrm{tr}}
\def\rank{\mathrm{rank}}

\DeclarePairedDelimiterX{\infdivx}[2]{(}{)}{%
  #1\,\delimsize\|\,#2%
}
\newcommand{\kl}{\KL\infdivx}
\newcommand{\skl}{\SKL\infdivx}
\newcommand*\dif{\mathop{}\!\mathrm{d}}

\DeclarePairedDelimiter{\abs}{\lvert}{\rvert}
\DeclarePairedDelimiter{\norm}{\lVert}{\rVert}

\DeclarePairedDelimiter{\prn}{\lparen}{\rparen}
\DeclarePairedDelimiter{\brk}{\lbrack}{\rbrack}
\DeclarePairedDelimiter{\brc}{\lbrace}{\rbrace}

\allowdisplaybreaks

\begin{document}

\title{How Does Distribution Matching Help Domain Generalization: An Information-theoretic Analysis}

\author{Yuxin Dong, Tieliang Gong, Hong Chen, Shuangyong Song, Weizhan Zhang,~\IEEEmembership{Senior Member,~IEEE,} Chen Li
\thanks{Corresponding author: Tieliang Gong.}%
\thanks{Y. Dong (yxdong9805@gmail.com), T. Gong (adidasgtl@gmail.com),  W. Zhang (zhangwzh@xjtu.edu.cn) and C. Li (cli@xjtu.edu.cn) are with the School of Computer Science and Technology, Xi'an Jiaotong University, Xi'an 710049, China.}
\thanks{H. Chen (chenh@mail.hzau.edu.cn) is with the College of Science, Huazhong Agriculture University, Wuhan 430070, China.}
\thanks{S. Song (songshy@chinatelecom.cn) is with the China Telecom Corporation, Beijing 100033, China.}
}

\markboth{IEEE Transactions}%
{Dong \MakeLowercase{\textit{et al.}}: How Does Distribution Matching Help Domain Generalization: An Information-theoretic Analysis}


\maketitle

\begin{abstract}
Domain generalization aims to learn invariance across multiple training domains, thereby enhancing generalization against out-of-distribution data. While gradient or representation matching algorithms have achieved remarkable success, these methods generally lack generalization guarantees or depend on strong assumptions, leaving a gap in understanding the underlying mechanism of distribution matching. In this work, we formulate domain generalization from a novel probabilistic perspective, ensuring robustness while avoiding overly conservative solutions. Through comprehensive information-theoretic analysis, we provide key insights into the roles of gradient and representation matching in promoting generalization. Our results reveal the complementary relationship between these two components, indicating that existing works focusing solely on either gradient or representation alignment are insufficient to solve the domain generalization problem. In light of these theoretical findings, we introduce IDM to simultaneously align the inter-domain gradients and representations. Integrated with the proposed PDM method for complex distribution matching, IDM achieves superior performance over various baseline methods.
\end{abstract}

\begin{IEEEkeywords}
Information Theory, Domain Generalization, Out-of-distribution Generalization, Generalization Analysis.
\end{IEEEkeywords}

\section{Introduction}
\IEEEPARstart{D}{istribution} shifts are prevalent in various real-world learning contexts, often leading to machine learning systems overfitting environment-specific correlations that may negatively impact performance when facing out-of-distribution (OOD) data \cite{geirhos2018imagenet, hendrycks2019benchmarking, azulay2019deep, hendrycks2021many}. Domain generalization (DG) is then introduced to address this challenge: By assuming the training data constitutes multiple domains that share some invariant underlying correlations, DG algorithms then attempt to learn this invariance so that domain-specific variations do not affect the model's performance. To this end, various DG approaches have been proposed, including invariant representation learning \cite{coral2016, mmd2018}, adversarial learning \cite{dann2016, cdann2018}, causal inference \cite{irm2019, causirl2022}, gradient manipulation \cite{iga2020, fish2021, fishr2022}, and robust optimization \cite{groupdro2019, vrex2021, qrm2022}.

DG is typically formulated as an average-case \cite{mtl2021, arm2021} or worst-case \cite{irm2019, groupdro2019} optimization problem, which however either lacks robustness against OOD data \cite{irm2019, nagarajan2020understanding} or leads to overly conservative solutions \cite{qrm2022}. In this paper, we introduce a novel probabilistic formulation that aims to minimize the gap between training and test-domain population risks with high probability. Our comprehensive generalization analysis then reveals that the input-output mutual information and the representation space covariate shift are pivotal in controlling this domain-level generalization gap, which could be achieved by aligning inter-domain gradients and representations, respectively.

Although distribution matching techniques are already well-explored in existing DG literature, these methods generally lack generalization guarantees or depend on strong assumptions, e.g. controllable invariant features \cite{iga2020}, quadratic bowl loss landscape \cite{fishr2022} or Lipschitz-continuous gradients \cite{andmask2020}. In contrast, we derive instructive generalization bounds by leveraging a relaxed i.i.d domain assumption \cite{qrm2022}, which is easily satisfied in practice. Our results indicate that combining gradient and representation matching effectively minimizes the domain-level generalization gap. Crucially, we reveal the complementary nature of these two components, highlighting that neither of them alone is sufficient to solve the DG problem.

In light of these theoretical findings, we propose inter-domain distribution matching (IDM) for high-probability DG by simultaneously aligning gradients and representations across training domains. Furthermore, we point out the limitations of traditional distribution alignment techniques, especially for high-dimensional and complex probability distributions. To circumvent these issues, we further propose per-sample distribution matching (PDM) by slicing and aligning individual sorted data points. IDM jointly working with PDM achieves superior performance on the Colored MNIST dataset \cite{irm2019} and the DomainBed benchmark \cite{domainbed2020}. Our primary contributions can be summarized as follows:
\begin{itemize}
    \item \textbf{Probabilistic formulation}: We introduce a probabilistic perspective for evaluating DG algorithms, focusing on their ability to minimize the domain-level generalization gap with high probability. Our approach leverages milder assumptions about the domains and enables generalization analysis with information-theoretic tools.
    \item \textbf{Information-theoretic insights}: Our analysis comprehensively elucidates the role of gradient and representation matching in promoting domain generalization. Most importantly, we reveal the complementary relationship between these two components, indicating that neither of them alone is sufficient to solve the DG problem.
    \item \textbf{Novel algorithms}: We propose IDM for high-probability DG by simultaneously aligning inter-domain gradients and representations, and PDM for complex distribution matching by slicing and aligning individual sorted data points. IDM jointly working with PDM achieves superior performance over various baseline methods.
\end{itemize}

\section{Problem Setting} \label{sec:problem}
We denote random variables by capitalized letters ($X$), their realizations by lower-case letters ($x$), and the corresponding spaces by calligraphic letters ($\mathcal{X}$). Let $\mathcal{Z} = \mathcal{X} \times \mathcal{Y}$ be the instance space of interest, where $\mathcal{X}$ and $\mathcal{Y}$ are the input space and the label space, respectively. Let $\mathcal{W}$ be the hypothesis space, each $w \in \mathcal{W}$ characterizes a predictor $f_w$: $\mathcal{X} \mapsto \mathcal{Y}$, comprised of an encoder $f_\phi$: $\mathcal{X} \mapsto \mathcal{R}$ and a classifier $f_\psi$: $\mathcal{R} \mapsto \mathcal{Y}$ with the assist of the representation space $\mathcal{R}$.

Following \cite{qrm2022}, we assume that there exists a distribution $\nu$ over the space of possible environments $\mathcal{D}$, where each domain $d \in \mathcal{D}$ corresponds to a specific data-generating distribution $\mu_d = P_{Z|D=d}$. The source $D_s = \{D_i\}_{i=1}^m$ and target $D_t = \{D_k\}_{k=1}^{m'}$ domains are both random variables sampled from $\nu$. Let $S = \{S_i\}_{i=1}^m$ denote the training dataset, with each subset $S_i = \{Z_j^i\}_{j=1}^n$ containing $n$ i.i.d data sampled from $\mu_{D_i}$. The task is to design algorithm $\mathcal{A}: \mathcal{D}^m \mapsto \mathcal{W}$, taking $D_s$ as the input (with proxy $S$) and providing possibly randomized hypothesis $W = \mathcal{A}(D_s)$. Given the loss function $\ell: \mathcal{Y} \times \mathcal{Y} \mapsto \mathbb{R}^+$, the ability of some hypothesis $w \in \mathcal{W}$ to generalize in average is evaluated by the global population risk:
\begin{equation*}
    L(w) = \E_{D \sim \nu} [L_D(w)] = \E_{Z \sim \mu} [\ell(f_w(X),Y)],
\end{equation*}
where $L_d(w) = \E_{Z \sim \mu_d} [\ell(f_w(X),Y)]$ is the domain-level population risk. Since $\nu$ is unknown, only the source and target-domain population risks are tractable in practice:
\begin{equation*}
    L_s(w) = \frac{1}{m} \sum_{i=1}^m L_{D_i}(w), L_t(w) = \frac{1}{m'} \sum_{k=1}^{m'} L_{D_k}(w).
\end{equation*}
\textbf{Main Assumptions.} We list the assumptions considered in our theoretical analysis as follows:
\begin{assumption} \label{asmp:independent}
    $D_t$ is independent of $D_s$.
\end{assumption}
\begin{assumption} \label{asmp:loss_bounded}
    $\ell(\cdot, \cdot)$ is bounded in $[0, M]$.
\end{assumption}
\begin{assumption} \label{asmp:loss_subgauss}
    $\ell(f_w(X),Y)$ is $\sigma$-subgaussian w.r.t $Z \sim \mu$ for any $w \in \mathcal{W}$.
\end{assumption}
\begin{assumption} \label{asmp:loss_dist}
    $\ell(\cdot, \cdot)$ is symmetric and satisfies the triangle inequality, i.e. for any $y_1, y_2, y_3 \in \mathcal{Y}$, $\ell(y_1, y_2) = \ell(y_2, y_1)$ and $\ell(y_1, y_2) \le \ell(y_1, y_3) + \ell(y_3, y_2)$.
\end{assumption}
\begin{assumption} \label{asmp:loss_lip}
    $\ell(f_w(X),Y)$ is $\beta$-Lipschitz w.r.t a metric $c$ on $\mathcal{Z}$ for any $w \in \mathcal{W}$, i.e. for any $z_1, z_2 \in \mathcal{Z}$, $\abs{\ell(f_w(x_1),y_1) + \ell(f_w(x_2),y_2)} \le \beta c(z_1, z_2)$.
\end{assumption}
Subgaussianity (Assumption \ref{asmp:loss_subgauss}) is one of the most common assumptions for information-theoretic generalization analysis \cite{xu2017information, negrea2019information, neu2021information, wang2021generalization}. Notably, Assumption \ref{asmp:loss_bounded} is a strengthened version of Assumption \ref{asmp:loss_subgauss}, since any $[0,M]$-bounded random variable is always $M/2$-subgaussian. Lipschitzness (Assumption \ref{asmp:loss_lip}) is a crucial prerequisite for stability analysis and has also been utilized in deriving Wasserstein distance generalization bounds \cite{hardt2016train, bassily2020stability, lei2021stability, rodriguez2021tighter, yang2021simple, yang2021stability}. Assumption \ref{asmp:loss_dist} is fulfilled when distance functions, such as mean absolute error (MAE) and 0-1 loss, are used as loss functions. This assumption has also been examined in previous studies \cite{mansour2009domain, shen2018wasserstein, wang2022information}.

\textbf{High-Probability DG.} The classical empirical risk minimization (ERM) technique, which minimizes the average-case risk: $\min_w L(w)$, is found ineffective in achieving invariance across different environments \cite{irm2019, nagarajan2020understanding}. To overcome this limitation, recent works \cite{vrex2021, ibirm2021, fish2021, fishr2022, bayesirm2022, sparseirm2022} have cast DG as a worst-case optimization problem: $\min_w \max_d L_d(w)$. However, this approach is generally impractical without strong assumptions made in the literature \cite{christiansen2021causal, qrm2022}, e.g. linearity of the underlying causal mechanism \cite{irm2019, vrex2021, ibirm2021}, or strictly separable spurious and invariant features \cite{sparseirm2022}. On the contrary, we propose the following high-probability objective by leveraging the mild Assumption \ref{asmp:independent}:
\begin{problem}{(High-Probability DG)} \label{prb:dg_hp}
    \begin{equation*}
        \min_{\mathcal{A}} \E[L_s(W)],\ \textrm{ s.t. }\ \Pr \{\abs{L_t(W) - L_s(W)} \ge \epsilon\} \le \delta.
    \end{equation*}
\end{problem}
This formulation is directly motivated by the intuition that the optimal algorithm $\mathcal{A}$ should be chosen in consideration of minimizing the generalization gap between training-domain $L_s(W)$ and test-domain $L_t(W)$ population risks. The probability is taken over both the sampled domains ($D_s$ and $D_t$) and the learning algorithm ($W$). Notably, our Assumption \ref{asmp:independent} is significantly weaker than the previously adopted i.i.d domain assumption \cite{qrm2022} by allowing correlations between source (or target) domains, and should be trivially satisfied in practice.

\section{Generalization Analysis} \label{sec:theory}
The primary goal of DG is to tackle the distribution shift problem, raised by the variation in the data-generating distribution $\mu_d$ for different environments $d$. This inconsistency can be quantified by the mutual information $I(Z;D)$ between the data pair $Z$ and the environment identifier $D$, which can be further decomposed into:
\begin{equation}
    I(Z;D) \textrm{ (distribution shift)} = I(X;D) \textrm{ (covariate shift)} + I(Y;D|X) \textrm{ (concept shift)}. \label{eq:dist_shift}
\end{equation}
While $D$ is binary to distinguish training and test samples in \cite{federici2021information, li2021learning}, we extend this concept to any discrete or continuous space, provided that each $d \in \mathcal{D}$ corresponds to a distinct data distribution $\mu_d$. The right hand side (RHS) characterizes the changes in the marginal input distribution $P_X$ (covariate shift) as well as the predictive distribution $P_{Y|X}$ (concept shift). We first show that the achievable level of average-case risk $L(w)$ is constrained by the degree of concept shift as following:
\begin{proposition} \label{prop:concept_shift}
    For any predictor $Q_{Y|X}$, we have 
    \begin{equation*}
        \kl{P_{Y|X,D}}{Q_{Y|X}} \ge I(Y;D|X).
    \end{equation*}
\end{proposition}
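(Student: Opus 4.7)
The plan is to establish a Pythagorean-style decomposition of the KL divergence, viewing $P_{Y|X}$ as the information projection of $P_{Y|X,D}$ onto the family of conditional distributions that depend on $X$ alone. Under the standard convention, $\kl{P_{Y|X,D}}{Q_{Y|X}}$ denotes the expectation of the pointwise KL divergence over the joint distribution of $(X,D)$, i.e.\ $\E_{X,D}\kl{P_{Y|X,D}}{Q_{Y|X}}$, and similarly $I(Y;D\mid X)=\E_{X,D}\kl{P_{Y|X,D}}{P_{Y|X}}$. With these identifications the claim reduces to a non-negativity statement.

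First, I would write the log-ratio inside the KL divergence as a telescoping sum by inserting the marginal $P_{Y|X}$:
\begin{equation*}
\log\frac{P_{Y|X,D}(Y\mid X,D)}{Q_{Y|X}(Y\mid X)}
=\log\frac{P_{Y|X,D}(Y\mid X,D)}{P_{Y|X}(Y\mid X)}
+\log\frac{P_{Y|X}(Y\mid X)}{Q_{Y|X}(Y\mid X)}.
\end{equation*}
Taking expectation with respect to $(X,D,Y)\sim P_{X,D,Y}$, the first term becomes $I(Y;D\mid X)$ by definition of conditional mutual information.

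Second, for the second term I would use the tower property: since the integrand depends only on $(X,Y)$, the outer expectation over $D$ can be marginalized out, yielding
\begin{equation*}
\E_{X,D,Y}\!\left[\log\frac{P_{Y|X}(Y\mid X)}{Q_{Y|X}(Y\mid X)}\right]
=\E_{X,Y\sim P_{X,Y}}\!\left[\log\frac{P_{Y|X}(Y\mid X)}{Q_{Y|X}(Y\mid X)}\right]
=\E_X\kl{P_{Y|X}}{Q_{Y|X}}.
\end{equation*}
By Gibbs' inequality this quantity is non-negative, so adding it to $I(Y;D\mid X)$ only increases the bound, which gives exactly the desired inequality.

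There is no real obstacle here; the only subtlety is notational, namely confirming that $\kl{P_{Y|X,D}}{Q_{Y|X}}$ is interpreted as the expected pointwise divergence (so the expectation absorbs the randomness in $X$ and $D$) and that $Q_{Y|X}$ does not depend on $D$, which is what allows the second term to collapse to an unconditional expectation over $P_{X,Y}$. Equality holds precisely when $Q_{Y|X}=P_{Y|X}$ almost surely, which identifies the minimizer of the left-hand side over all predictors independent of $D$ and gives the proposition an operational meaning: no $D$-free predictor can achieve log-loss below the information cost $I(Y;D\mid X)$ of the concept shift.
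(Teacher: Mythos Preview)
Your proposal is correct and follows essentially the same route as the paper: insert $P_{Y|X}$ to split the log-ratio, identify the first term as $I(Y;D\mid X)$, recognize the second as $\kl{P_{Y|X}}{Q_{Y|X}}\ge 0$, and note that equality holds iff $Q_{Y|X}=P_{Y|X}$. There is no meaningful difference in approach.
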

When $\ell$ represents the cross-entropy loss, the population risk of predictor $Q$ on domain $d$ can be represented as the KL divergence $\kl{P_{Y|X,D=d}}{Q_{Y|X}}$, provided that $H(Y|X,D) = 0$ (i.e. the label can be entirely inferred from $X$ and $D$). This implies that any model fitting well in training domains will suffer from strictly positive risks in test domains once concept shift is induced. This implication verifies the trade-off between optimization and generalization as we characterized in Problem \ref{prb:dg_hp}, and highlights the inherent difficulty of solving the DG problem.

\subsection{Decomposing the Generalization Gap}
We further demonstrate that by connecting source and target-domain population risks via the average-case risk $L(W)$, one can decompose the constraint of Problem \ref{prb:dg_hp} into training and test-domain generalization gaps. To be specific, since the predictor $W$ is trained on the source domains $D_s$, it is reasonable to assume that $W$ achieves lower population risks on $D_s$ than on average, i.e. $L_s(W) \le L(W)$. Moreover, since the sampling process of test domains is independent of the hypothesis, the test-domain population risk $L_t(W)$ is an unbiased estimate of $L(W)$. Combining these two observations, it is natural to assume that $L_s(W) \le L(W) \approx L_t(W)$, implying that the average-case risk $L(W)$ acts as a natural bridge between the two. For any constant $\lambda \in (0, 1)$, one can prove that:
\begin{equation*}
    \Pr\{\abs{L_s(W) - L_t(W)} \ge \epsilon\} \le \Pr\{\abs{L_s(W) - L(W)} \ge \lambda\epsilon\} + \Pr\{\abs{L_t(W) - L(W)} \ge (1-\lambda)\epsilon\}.
\end{equation*}
While the first event heavily correlates with the hypothesis $W$, the second event is instead hypothesis-independent. This observation inspires us to explore both hypothesis-based and hypothesis-independent bounds to address source and target-domain generalization errors, respectively.

\subsection{Source-domain Generalization} \label{sec:source_gen}
We first provide a sufficient condition for source-domain generalization. Our results are motivated by recent advancements in generalization analysis within the information-theoretic framework \cite{bu2020tightening, harutyunyan2021information}. Specialized to our problem, we quantify the changes in the hypothesis once the training domains are observed through the input-output mutual information $I(W;D_i)$:
\begin{theorem} \label{thm:pp_domain}
    If Assumption \ref{asmp:loss_bounded} holds, then 
    \begin{equation*}
        \Pr\brc*{\abs*{L_s(W) - L(W)} \ge \epsilon} \le \frac{M}{m\epsilon\sqrt{2}} \sum_{i=1}^m \sqrt{I(W,D_i)} + \frac{1}{\epsilon}\E_{W,D}\abs{L_D(W) - L(W)},
    \end{equation*}
    where $D \sim \nu$ is independent of $W$.
\end{theorem}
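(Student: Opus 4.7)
The plan is to first reduce the probability statement to an expectation bound via Markov's inequality, writing $\Pr\{\abs{L_s(W) - L(W)} \ge \epsilon\} \le \epsilon^{-1}\, \E\abs{L_s(W) - L(W)}$. Since $L_s(W) - L(W) = \frac{1}{m}\sum_{i=1}^m (L_{D_i}(W) - L(W))$, a single application of the triangle inequality reduces the task to bounding each per-domain term $\E\abs{L_{D_i}(W) - L(W)}$ separately, and the remainder of the argument proceeds term by term.

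For each $i$, I would introduce an independent copy $\bar{D}_i \sim \nu$ of $D_i$, with $\bar{D}_i$ independent of $W$. Adding and subtracting the baseline quantity, I can split
\begin{equation*}
\E\abs{L_{D_i}(W) - L(W)} = \underbrace{\E\abs{L_{\bar{D}_i}(W) - L(W)}}_{\text{hypothesis-independent}} + \underbrace{\prn*{\E\abs{L_{D_i}(W) - L(W)} - \E\abs{L_{\bar{D}_i}(W) - L(W)}}}_{\text{controlled by } I(W;D_i)}.
\end{equation*}
The first summand is exactly $\E_{W,D}\abs{L_D(W) - L(W)}$ since $\bar{D}_i$ has marginal $\nu$ and is independent of $W$. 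The second summand is a difference between the expectation under the joint distribution $P_{W,D_i}$ and under the product $P_W \otimes P_{D_i}$, which is the canonical setting for the Donsker--Varadhan / subgaussian--KL change-of-measure inequality. Since Assumption~\ref{asmp:loss_bounded} implies $\abs{L_D(W) - L(W)} \in [0, M]$, this function of $(W, D)$ is $(M/2)$-subgaussian under any product measure by Hoeffding's lemma, and the change-of-measure inequality yields the penalty $\sqrt{2(M/2)^2 I(W; D_i)} = \frac{M}{\sqrt{2}}\sqrt{I(W; D_i)}$. Summing over $i$, dividing by $m$, and combining with the Markov step produces exactly the stated bound.

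The main obstacle I anticipate is the careful verification that the subgaussian--KL inequality is applied to the nonlinear function $\abs{L_D(W) - L(W)}$ rather than to $L_D(W)$ directly: although the boundedness assumption makes this immediate, one has to be explicit that the constant $M/2$ (and hence the $1/\sqrt{2}$ factor in the bound) comes from Hoeffding's lemma on the $[0,M]$-bounded absolute-value random variable, and that the mutual information on the right-hand side is $I(W; D_i)$ rather than anything involving the other $D_j$ (this is ensured because the decomposition treats each coordinate independently). A minor bookkeeping point is to note that the statement does not require the $D_i$ to be mutually independent; only that each has marginal $\nu$, so that the hypothesis-independent residual assumes the same form $\E_{W,D}\abs{L_D(W) - L(W)}$ for every $i$ and can be factored out of the sum.
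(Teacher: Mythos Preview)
Your proposal is correct and follows essentially the same approach as the paper's proof: Markov's inequality, triangle inequality over the $m$ domains, and then the subgaussian--KL change-of-measure (the paper's Lemma~\ref{lm:kl_con_pair}) applied to the $[0,M]$-bounded function $(W,D)\mapsto\abs{L_D(W)-L(W)}$ to produce the $\frac{M}{\sqrt{2}}\sqrt{I(W;D_i)}$ penalty and the residual $\E_{W,\bar D}\abs{L_{\bar D}(W)-L(W)}$. The only difference is the order in which Markov and the change-of-measure are invoked, which is immaterial.
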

Intuitively, extracting correlations between $X$ and $Y$ that are invariant across training domains enhances the generalization ability of machine learning models. The mutual information $I(W;D_i)$ approaches zero when the correlations that a model learns from a specific training domain $D_i$ are also present in other training environments. This does not imply that the model learns nothing from $D_s$: by further assuming the independence of these domains, the summation of $I(W,D_i)$ can be relaxed to $I(W;D_s)$, which measures the actual amount of information learned by the model. By minimizing each $I(W;D_i)$ and $L_s(W)$ simultaneously, learning algorithms are encouraged to discard domain-specific correlations while preserving invariant ones and thus achieve high generalization performance.


Interestingly, the second term at the RHS of Theorem \ref{thm:pp_domain} is highly relevant to the target-domain generalization gap, so we will postpone related analysis to Section \ref{sec:target_gen}.

Next, we demonstrate that the minimization of $I(W;D_i)$ can be achieved by matching the conditional distributions of inter-domain gradients. To see this, we assume that $W$ is optimized by some noisy and iterative learning algorithms, e.g. stochastic gradient descent (SGD). Then the rule of updating $W$ at step $t$ through ERM can be formulated as:
\begin{equation*}
    W_t = W_{t-1} - \eta_t \sum_{i=1}^m g(W_{t-1}, B_t^i), \quad \mathrm{where} \quad g(w, B_t^i) = \frac{1}{m\abs{B_t^i}} \sum_{z \in B_t^i} \nabla_w \ell(f_w(x),y),
\end{equation*}
providing $W_0$ as the initial guess. Here, $\eta_t$ is the learning rate, and $B_t^i$ is the batch of data points randomly drawn from training environment $D_i$ to compute the gradient. Suppose that algorithm $\mathcal{A}$ finishes in $T$ steps, we then have:
\begin{theorem} \label{thm:mi_bound}
    Let $G_t = -\eta_t \sum_{i=1}^m g(W_{t-1},B_t^i)$, then 
    \begin{equation*}
        I(W_T;D_i) \le \sum_{t=1}^T I(G_t;D_i|W_{t-1}).
    \end{equation*}
\end{theorem}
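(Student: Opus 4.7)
The plan is to exploit the one-step Markov structure of the SGD update and apply the data processing inequality (DPI) together with the chain rule of mutual information, then telescope over $t$. The key observation is that $W_t = W_{t-1} + G_t$ is a \emph{deterministic} function of the pair $(W_{t-1}, G_t)$, which is exactly the setting where DPI converts a dependence on the output iterate into a dependence on the input pair.

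First, I would establish the one-step recursion. Since $W_t$ is a deterministic function of $(W_{t-1}, G_t)$, the data processing inequality gives
\begin{equation*}
    I(W_t; D_i) \le I(W_{t-1}, G_t; D_i).
\end{equation*}
Applying the chain rule of mutual information on the right-hand side yields
\begin{equation*}
    I(W_{t-1}, G_t; D_i) = I(W_{t-1}; D_i) + I(G_t; D_i \mid W_{t-1}),
\end{equation*}
so combining the two lines gives the telescoping inequality $I(W_t; D_i) \le I(W_{t-1}; D_i) + I(G_t; D_i \mid W_{t-1})$.

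Next, I would iterate this recursion from $t = T$ down to $t = 1$, obtaining
\begin{equation*}
    I(W_T; D_i) \le I(W_0; D_i) + \sum_{t=1}^T I(G_t; D_i \mid W_{t-1}).
\end{equation*}
Finally, since $W_0$ is a fixed (or data-independent) initial guess, it is independent of $D_i$, so $I(W_0; D_i) = 0$, delivering the claimed bound.

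The argument is essentially mechanical once the correct viewpoint is adopted, so there is no serious obstacle; the only point requiring care is justifying the DPI step, namely that the randomness in $W_t$ given $(W_{t-1}, G_t)$ is degenerate (a pointwise sum). This is exactly the role played by writing the update as $W_t = W_{t-1} + G_t$ and absorbing the batch sampling and learning-rate schedule into the definition of $G_t$, which makes the conditional entropy $H(W_t \mid W_{t-1}, G_t)$ vanish and hence legitimizes the DPI inequality.
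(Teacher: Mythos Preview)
Your proof is correct and follows essentially the same route as the paper: both invoke the data processing inequality via the Markov chain $D_i \to (W_{t-1},G_t) \to W_t$ (equivalently, $W_t$ being a deterministic function of $(W_{t-1},G_t)$), apply the chain rule to split off $I(G_t;D_i\mid W_{t-1})$, and telescope down to $W_0$. The only cosmetic difference is that the paper writes the recursion top-down from $t=T$ without explicitly stating $I(W_0;D_i)=0$, whereas you make that base case explicit.
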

Although our analysis is derived from the update rule of SGD, the same conclusion applies to a variety of iterative and noisy learning algorithms, e.g. SGLD and AdaGrad. Theorem \ref{thm:mi_bound} suggests that minimizing $I(G_t;D_i|W_{t-1})$ in each update penalizes $I(W_T;D_i)$ and thus leads to training-domain generalization. Notably, this conditional mutual information $I(G_t;D_i|W_{t-1})$ can be rewritten as the KL divergence $\kl{P_{G_t|D_i,W_{t-1}}}{P_{G_t|W_{t-1}}}$, which directly motivates matching the distributions of inter-domain gradients:
\begin{mdframed}
    Gradient matching promotes training-domain generalization when $\E_{W,D}\abs{L_D(W) - L(W)}$ is minimized.
\end{mdframed}
Intuitively, gradient alignment enforces the model to learn common correlations shared across training domains, thus preventing overfitting to spurious features and promoting invariance \cite{fish2021, fishr2022}.

We further present an alternative approach by assuming Lipschitzness instead of Subgaussianity, which usually leads to tighter bounds beyond information-theoretic measures:
\begin{theorem} \label{thm:pp_domain_wass}
    If $\ell(f_w(X),Y)$ is $\beta'$-Lipschitz w.r.t $w$, then
    \begin{equation*}
        \abs*{\E_{W,D_s}[L_s(W)] - \E_W[L(W)]} \le \frac{\beta'}{m} \sum_{i=1}^m \E_{D_i} [\W(P_{W|D_i}, P_W)].
    \end{equation*}
\end{theorem}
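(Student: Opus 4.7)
The plan is to rewrite $\E_W[L(W)]$ so that it lines up term-by-term with $\E_{W,D_s}[L_s(W)]$, and then to control each resulting per-domain gap via a Kantorovich--Rubinstein argument.

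First, observe that the variable $D \sim \nu$ appearing in $L(W) = \E_{D\sim\nu}[L_D(W)]$ is drawn independently of $W$, and each $D_i$ also has marginal distribution $\nu$. I would therefore rewrite
\begin{equation*}
\E_W[L(W)] = \frac{1}{m}\sum_{i=1}^m \E_{D_i}\brk*{\E_W[L_{D_i}(W)]},
\end{equation*}
where the inner expectation is taken under the marginal $P_W$, treating $D_i$ as independent of $W$. In parallel, by definition,
\begin{equation*}
\E_{W,D_s}[L_s(W)] = \frac{1}{m}\sum_{i=1}^m \E_{D_i}\brk*{\E_{W|D_i}[L_{D_i}(W)]}.
\end{equation*}
Subtracting and applying the triangle inequality reduces the claim to bounding $\abs*{\E_{W|D_i=d_i}[L_{d_i}(W)] - \E_W[L_{d_i}(W)]}$ for each index $i$ and each realization $d_i$.

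Next, I would lift the $\beta'$-Lipschitz property of $\ell(f_\cdot(X),Y)$ to $L_{d_i}(\cdot)$ by passing the expectation over $Z \sim \mu_{d_i}$ through the absolute value, which gives $\abs{L_{d_i}(w_1) - L_{d_i}(w_2)} \le \beta'\,d_{\mathcal{W}}(w_1,w_2)$, where $d_{\mathcal{W}}$ is the metric on $\mathcal{W}$ underlying the Lipschitz assumption. The Kantorovich--Rubinstein dual characterization of the $1$-Wasserstein distance then yields, for any realization $d_i$,
\begin{equation*}
\abs*{\E_{W|D_i=d_i}[L_{d_i}(W)] - \E_W[L_{d_i}(W)]} \le \beta'\,\W(P_{W|D_i=d_i}, P_W).
\end{equation*}
Taking the outer $\E_{D_i}$ and averaging over $i$ delivers the advertised inequality.

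The main obstacle is notational rather than mathematical: although $W$ depends on the entire source tuple $D_s$, the substitution $D \leftrightarrow D_i$ in step one relies only on the equality of marginal distributions under $\nu$, not on any independence of $W$ from the remaining $D_j$'s. One must therefore carefully distinguish the joint law $P_{W,D_i}$ from the product $P_W \otimes P_{D_i}$ when interpreting the inner expectations, since it is precisely the discrepancy between these two laws that the Wasserstein term on the RHS measures. Once this bookkeeping is in place, the Kantorovich--Rubinstein step is standard and no further technical ingredients are required.
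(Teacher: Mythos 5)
Your proposal is correct and follows essentially the same route as the paper's proof: decompose the gap domain-by-domain, center each term at $\E_W[L_{D_i}(W)]$ using the fact that $D_i \sim \nu$ has the same marginal as $D$, verify that $L_{D_i}(\cdot)$ inherits the $\beta'$-Lipschitz property, and invoke Kantorovich--Rubinstein duality to bound each per-realization gap by $\beta'\,\W(P_{W|D_i}, P_W)$ before averaging. If anything, your bookkeeping (subtracting the matched term $\E_W[L_{D_i}(W)]$ before applying the triangle inequality, rather than swapping it in under an absolute value as the paper's displayed chain does) is the slightly more careful rendering of the same argument.
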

Besides the elegant symmetry compared to KL divergence metrics, Wasserstein distance bounds are generally considered to be tighter improvements over information-theoretic bounds. To see this, we assume that the adopted metric $c$ is discrete, which leads to the following reductions:
\begin{equation}
    \E_{D_i} [\W(P_{W|D_i},P_W)] = \E_{D_i} [\TV(P_{W|D_i},P_W)] \le \E_{D_i} \sqrt{\frac{1}{2} \kl{P_{W|D_i}}{P_W}} \le \sqrt{\frac{1}{2} I(W;D_i)}, \label{eq:wass_reduce}
\end{equation}
where $\TV$ is the total variation. These reductions confirm that the RHS of Theorem \ref{thm:pp_domain} also upper bounds other alternative measures of domain differences i.e. total variation and Wasserstein distance. This observation encourages us to directly penalize the mutual information $I(W;D_i)$, which is not only more stable for optimization \cite{nguyen2021kl, wang2022information} but also enables simultaneous minimization of these alternative metrics.

\subsection{Target-domain Generalization} \label{sec:target_gen}
We then investigate sufficient conditions for target-domain generalization. Since the training process is independent of the test domains, the predictor could be considered as some constant hypothesis $w \in \mathcal{W}$. It is straightforward to verify that $\E_{D_t}[L_t(w)] = L(w)$ due to the identical domain distribution $\nu$. We then establish the following bound for the target-domain generalization gap:
\begin{theorem} \label{thm:pp_con}
    If Assumption \ref{asmp:loss_subgauss} holds, then $\forall w \in \mathcal{W}$,
    \begin{equation*}
        \Pr\brc*{\abs{L_t(w) - L(w)} \ge \epsilon} \le \frac{\sigma}{\epsilon}\sqrt{2 I(Z;D)}.
    \end{equation*}
\end{theorem}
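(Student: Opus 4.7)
The plan is to reduce everything through Markov's inequality to a bound on $\E\abs{L_t(w)-L(w)}$, and then exploit the subgaussianity of $\ell(f_w(X),Y)$ under $\mu$ to convert domain discrepancies into KL divergences, whose average is exactly $I(Z;D)$.

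\textbf{Step 1 (Markov reduction).} First I would apply Markov's inequality to get
\begin{equation*}
\Pr\{\abs{L_t(w)-L(w)} \ge \epsilon\} \le \frac{1}{\epsilon}\,\E\abs{L_t(w)-L(w)}.
\end{equation*}
Since $L_t(w) = \frac{1}{m'}\sum_{k=1}^{m'} L_{D_k}(w)$ and each $D_k \sim \nu$ with $\E_{D_k}[L_{D_k}(w)] = L(w)$, the triangle inequality inside the expectation gives
\begin{equation*}
\E\abs{L_t(w)-L(w)} \le \frac{1}{m'}\sum_{k=1}^{m'} \E_{D_k}\abs{L_{D_k}(w)-L(w)} = \E_{D}\abs{L_D(w)-L(w)},
\end{equation*}
so it suffices to control $\E_D\abs{L_D(w)-L(w)}$ for a single $D \sim \nu$.

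\textbf{Step 2 (Subgaussian transport inequality).} Here I would invoke the standard Donsker--Varadhan consequence of Assumption~\ref{asmp:loss_subgauss}: since $\ell(f_w(X),Y)$ is $\sigma$-subgaussian under $Z \sim \mu$, for any distribution $\mu_d$ absolutely continuous w.r.t.\ $\mu$ one has
\begin{equation*}
\abs{\E_{Z\sim\mu_d}[\ell(f_w(X),Y)] - \E_{Z\sim\mu}[\ell(f_w(X),Y)]} \le \sigma\sqrt{2\,\kl{\mu_d}{\mu}}.
\end{equation*}
This is obtained by optimizing $\lambda$ in the variational bound $\lambda(\E_{\mu_d}[\ell]-\E_\mu[\ell]) \le \kl{\mu_d}{\mu} + \lambda^2\sigma^2/2$. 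Rewritten in our notation, this yields the pointwise inequality $\abs{L_D(w)-L(w)} \le \sigma\sqrt{2\,\kl{P_{Z|D}}{P_Z}}$ for each $D$.

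\textbf{Step 3 (Jensen and mutual information).} Taking expectation over $D$ and applying Jensen's inequality to move it inside the square root,
\begin{equation*}
\E_D\abs{L_D(w)-L(w)} \le \sigma\,\E_D\sqrt{2\,\kl{P_{Z|D}}{P_Z}} \le \sigma\sqrt{2\,\E_D \kl{P_{Z|D}}{P_Z}} = \sigma\sqrt{2\,I(Z;D)},
\end{equation*}
since $\E_D \kl{P_{Z|D}}{P_Z} = I(Z;D)$ by definition. Chaining this with Steps 1--2 yields the stated bound.

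I do not expect any serious obstacle; the whole argument is a Markov--Jensen--subgaussian-transport chain. The only delicate point worth flagging is the implicit regularity assumption $\mu_D \ll \mu$ (needed for $\kl{\mu_D}{\mu}$ to be finite), which holds here because $\mu = \E_{D\sim\nu}[\mu_D]$ dominates each $\mu_D$ by construction, so $I(Z;D)$ is well-defined and the KL inequality applies for $\nu$-a.e.\ $D$.
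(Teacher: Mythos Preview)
Your proposal is correct and follows essentially the same route as the paper: both arguments combine the subgaussian transport inequality (Donsker--Varadhan) to obtain $\abs{L_d(w)-L(w)}\le\sigma\sqrt{2\kl{P_{Z|D=d}}{P_Z}}$, then Jensen to pass to $\sigma\sqrt{2I(Z;D)}$, a triangle-inequality averaging over the $m'$ test domains, and Markov's inequality. The only cosmetic difference is ordering (you apply Markov first and then bound the expectation, the paper does the reverse), which is immaterial.
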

The result above can be interpreted from two perspectives. Firstly, evaluating the predictor $w$ on randomly sampled test environments reflects its ability to generalize on average, since $L_t(w)$ is an unbiased estimate of $L(w)$. Secondly, knowledge about $L(w)$ can be used to predict the ability of $w$ to generalize on unseen domains, which complements Theorem \ref{thm:pp_domain} in solving Problem \ref{prb:dg_hp}.

In Theorem \ref{thm:pp_con}, the probability of generalization is mainly controlled by the extent of distribution shift $I(Z;D)$. Notably, $I(Z;D)$ is an intrinsic property of the data collection procedure, and thus cannot be penalized from the perspective of learning algorithms. Fortunately, the encoder $\phi$ can be considered as part of the data preprocessing procedure, enabling learning algorithms to minimize the representation space distribution shift. Under the same conditions as Theorem \ref{thm:pp_con}, we have that for any classifier $\psi$:
\begin{equation*}
    \Pr\brc*{\abs{L_t(\psi) - L(\psi)} \ge \epsilon} \le \frac{\sigma}{\epsilon}\sqrt{2 I(R,Y;D)}.
\end{equation*}
Let $P_{R,Y}$ be the joint distribution by pushing forward $P_Z$ via the encoder as $R = f_\phi(X)$, the representation space distribution shift can then be decomposed into:
\begin{equation*}
    I(R,Y;D) \textrm{ (distribution shift)} = I(R;D) \textrm{ (covariate shift)} + I(Y;D|R) \textrm{ (concept shift)}.
\end{equation*}
This motivates us to simultaneously minimize the representation space covariate shift and concept shift to achieve test-domain generalization. We further demonstrate that bounding the covariate shift $I(R;D)$ solely is sufficient for target-domain generalization with Assumption \ref{asmp:loss_dist}:
\begin{theorem} \label{thm:pp_best}
    If Assumptions \ref{asmp:loss_bounded} and \ref{asmp:loss_dist} hold, then $\forall \psi$,
    \begin{gather*}
        \Pr\brc*{L_t(\psi) - L(\psi) \ge \epsilon} \le \frac{M}{\epsilon\sqrt{2}} \sqrt{I(R;D)} + \frac{2}{\epsilon}L^*,
    \end{gather*}
    where $L^* = \min_{f^*: \mathcal{R} \mapsto \mathcal{Y}} [L(f^*)]$.
\end{theorem}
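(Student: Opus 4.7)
The plan is to decouple the target-domain generalization gap from the classifier's approximation error by introducing an auxiliary predictor $f^*: \mathcal{R} \mapsto \mathcal{Y}$, eventually optimized to attain $L^*$. First, I would apply Markov's inequality to convert the probability bound into the expectation bound $\Pr\brc*{L_t(\psi) - L(\psi) \ge \epsilon} \le \epsilon^{-1} \E[(L_t(\psi) - L(\psi))^+]$, so that the task reduces to controlling a single nonnegative quantity.

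The core step uses Assumption \ref{asmp:loss_dist}: for any $f^*$, the triangle inequality gives $\ell(\psi(R), Y) \le \ell(\psi(R), f^*(R)) + \ell(f^*(R), Y)$ together with its reverse form obtained via symmetry. Defining $\tilde\ell(r) := \ell(\psi(r), f^*(r))$, $G(d) := \E_{R \sim P_{R|D=d}}[\tilde\ell(R)]$, and $\mu := \E_{R \sim P_R}[\tilde\ell(R)]$, this implies $\abs{L_d(\psi) - L(\psi) - (G(d) - \mu)} \le L_d(f^*) + L(f^*)$ for every $d$. Averaging over the sampled test domains $D_1, \ldots, D_{m'}$ and using the elementary fact that $(a+b+c)^+ \le a^+ + b + c$ whenever $b, c \ge 0$ yields
\begin{equation*}
    \prn*{L_t(\psi) - L(\psi)}^+ \le \prn*{\frac{1}{m'}\sum_{k=1}^{m'} G(D_k) - \mu}^+ + \frac{1}{m'}\sum_{k=1}^{m'} L_{D_k}(f^*) + L(f^*).
\end{equation*}
Taking expectation over $D_t$, the last two terms collapse to $2L(f^*)$ because each $D_k \sim \nu$, while the first is further controlled by $\E_D\abs{G(D) - \mu}$ via the triangle inequality and the identical distribution of the $D_k$.

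The final ingredient is a subgaussian-to-KL reduction: since $\tilde\ell$ is bounded in $[0,M]$ it is $M/2$-subgaussian under $P_R$, so the standard Donsker--Varadhan-type inequality yields $\abs{G(d) - \mu} \le \frac{M}{\sqrt{2}}\sqrt{\kl{P_{R|D=d}}{P_R}}$ pointwise in $d$. Applying Jensen's inequality to push the expectation inside the square root then gives $\E_D\abs{G(D) - \mu} \le \frac{M}{\sqrt{2}}\sqrt{I(R;D)}$, using $I(R;D) = \E_D\kl{P_{R|D}}{P_R}$. Because $I(R;D)$ does not depend on $f^*$, minimizing the resulting bound over $f^*: \mathcal{R} \mapsto \mathcal{Y}$ replaces $L(f^*)$ by $L^*$ and produces the claimed inequality. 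The main subtlety I anticipate is the algebraic step ensuring that the positive-part operation interacts cleanly with the triangle-inequality decomposition, so that the ``stochastic'' covariate-shift term separates from the approximation-error contribution without incurring a spurious constant; once that is handled, the subgaussian-KL inequality and Jensen's inequality finish the argument routinely.
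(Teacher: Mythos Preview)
Your proposal is correct and follows essentially the same approach as the paper's proof: both introduce the auxiliary $f^*$-based quantity $\tilde\ell(r)=\ell(\psi(r),f^*(r))$, use Assumption~\ref{asmp:loss_dist} to bound $L_d(\psi)-L(\psi)$ by the covariate-shift term $G(d)-\mu$ plus $L_d(f^*)+L(f^*)$, apply the Donsker--Varadhan/subgaussian bound (the paper's Lemma~\ref{lm:kl_con}) pointwise in $d$, push the expectation inside the square root via Jensen to obtain $\sqrt{I(R;D)}$, and finish with Markov's inequality and a minimization over $f^*$. The only cosmetic difference is that you work with the positive part $(\cdot)^+$ whereas the paper bounds the full absolute value before applying Markov; this changes nothing.
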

Similarly, we further refine these test-domain generalization bounds by incorporating the more stringent Assumption \ref{asmp:loss_lip} in Appendix \ref{sec:discuss}. Theorem \ref{thm:pp_best} indicates that test-domain generalization is mainly controlled by the amount of covariate shift. Notably, the optimal classifier $f^*$ is chosen from the entire space of functions mapping from $\mathcal{R}$ to $\mathcal{Y}$. In the noiseless case where there exists a ground-truth labeling function $h^*$ such that $Y = h^*(R)$, we have $L^* = 0$. Therefore, $L^*$ serves as an indicator for the level of label noise in the data-generating distribution $\mu$. Moreover, the representation space covariate shift $I(R;D)$ is equivalent to the KL divergence $\kl{P_{R|D}}{P_{R}}$, which directly motivates matching the distributions of inter-domain representations:
\begin{mdframed}
    Representation matching promotes test-domain generalization when the level of label noise $L^*$ is low.
\end{mdframed}

A byproduct of the proof of Theorem \ref{thm:pp_best} is an upper bound on the expected absolute target-domain generalization error:
\begin{equation*}
    \E_{W,D}\abs{L_D(W) - L(W)} \le \frac{M}{\sqrt{2}}\sqrt{I(R;D)} + 2L^*.
\end{equation*}
This result complements the training-domain generalization bound in Theorem \ref{thm:pp_domain}, confirming that penalizing $I(W;D_i)$ and $I(R;D)$ simultaneously is sufficient to minimize the training-domain generalization gap.

While minimizing $I(R;D)$ guarantees test-domain generalization, this operation requires knowledge about test-domain samples, which is not available during the entire training process. A walkaround is to match the training-domain representations instead, utilizing $I(R_i;D_i)$ for each $D_i \in D_s$ as a proxy to penalize $I(R;D)$. The following proposition verifies the feasibility of this approach:
\begin{proposition} \label{prop:cov_shift_emp}
    Under mild conditions, we have
    \begin{equation*}
        \skl{P_{R,D}}{P_{R_i,D_i}} = O\prn*{\sqrt{I(W;D_i)}},
    \end{equation*}
    where $\skl{P}{Q} = \kl{P}{Q} + \kl{Q}{P}$.
\end{proposition}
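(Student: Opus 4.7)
The plan is to reduce the SKL between the representation--domain distributions to an SKL between $(W,D_i)$-level distributions via data processing, and then relate the latter to $I(W;D_i)$. Observe that both $P_{R,D}$ and $P_{R_i,D_i}$ arise as pushforwards through the same stochastic kernel $K$ on $(W,D)$: given $(w,d)$, sample $Z \sim \mu_d$ and set $R = f_\phi(X)$, keeping the domain coordinate intact. Because the test domain $D$ in $P_{R,D}$ is independent of $W$ and has marginal $\nu$, the base measure for $P_{R,D}$ is $P_W \otimes P_{D_i}$, while the base measure for $P_{R_i,D_i}$ is the true joint $P_{W,D_i}$.

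Applying the data processing inequality to each KL direction separately (equivalently, disintegrating along the common $D$-marginal and using DPI conditional on $D=d$) yields
\begin{align*}
\kl{P_{R_i,D_i}}{P_{R,D}} &\le \E_{D_i}\kl{P_{W|D_i}}{P_W} \;=\; I(W;D_i), \\
\kl{P_{R,D}}{P_{R_i,D_i}} &\le \E_{D_i}\kl{P_W}{P_{W|D_i}}.
\end{align*}
The first line bounds one half of the SKL directly by $I(W;D_i)$, which is $O(\sqrt{I(W;D_i)})$ in the small-$I$ regime of interest. The second line is a reverse KL (sometimes called the lautum information) and is the main obstacle: it is not in general controlled by $I(W;D_i)$ without further assumptions.

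To bound the reverse KL, I would interpret the ``mild conditions'' as a boundedness hypothesis on the density ratio $dP_{W|D_i}/dP_W$. Under such a bound, a standard chain gives $\kl{P_W}{P_{W|D_i}} \le \chi^2(P_W \,\|\, P_{W|D_i}) \le C \cdot \TV(P_W, P_{W|D_i})$, and Pinsker's inequality yields $\TV(P_W, P_{W|D_i}) \le \sqrt{\tfrac{1}{2}\,\kl{P_{W|D_i}}{P_W}}$. Taking $\E_{D_i}$ and using Jensen on $\sqrt{\cdot}$ then delivers $\E_{D_i}\kl{P_W}{P_{W|D_i}} \le C'\sqrt{I(W;D_i)}$. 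Summing the two displayed lines proves $\skl{P_{R,D}}{P_{R_i,D_i}} = O(\sqrt{I(W;D_i)})$, as claimed.

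The decisive step is the third one: without a boundedness hypothesis on the density ratio, the lautum term can be arbitrarily large relative to $I(W;D_i)$, and the asymmetry of KL would prevent us from matching the $\sqrt{I(W;D_i)}$ rate on both sides of the SKL. The pushforward/DPI reduction in the first step and the Pinsker conversion in the third are routine once this hypothesis is in place; the entire content of the ``mild conditions'' is essentially to tame this reverse KL.
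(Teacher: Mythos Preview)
Your argument is correct under the boundedness hypothesis you adopt, but the paper takes a different and shorter route. Instead of splitting the SKL into two KL directions and handling the lautum term via the $\chi^2 \to \TV \to$ Pinsker chain, the paper observes that the entire SKL can be written as a single difference of expectations: with $f(W,D) = \E_{X|D}\bigl[\log\tfrac{P_{R,D}(R,D)}{P_{R_i,D_i}(R,D)}\bigr]$, one has $\E_{P_{W,D_i}}[f] = -\kl{P_{R_i,D_i}}{P_{R,D}}$ and $\E_{P_W \otimes P_{D_i}}[f] = \kl{P_{R,D}}{P_{R_i,D_i}}$, so their difference is exactly the SKL. The ``mild condition'' in the paper is a bounded density ratio $P_{R,D}/P_{R_i,D_i} \in [1/B,B]$ at the $(R,D)$ level, which makes $f$ bounded and hence $\log B$-subgaussian; a single application of the subgaussian change-of-measure lemma (Lemma~\ref{lm:kl_con_pair}) then yields $\skl{P_{R,D}}{P_{R_i,D_i}} \le \log(B)\sqrt{2I(W;D_i)}$ in one stroke.

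The tradeoffs: the paper's argument is cleaner and treats both KL directions symmetrically, avoiding the lautum detour entirely. Your DPI-based decomposition is more modular and makes explicit that the forward KL is already $\le I(W;D_i)$ without any boundedness, isolating the reverse KL as the sole source of difficulty. However, your boundedness assumption lives at the $(W,D_i)$ level rather than the $(R,D)$ level; since the pushforward kernel preserves density-ratio bounds, your assumption is actually stronger than the paper's. Both approaches ultimately hinge on a bounded density ratio to convert an asymmetric quantity into a $\sqrt{I}$ rate; the paper just encodes this more economically through subgaussianity of the log-ratio.
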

Interestingly, the discrepancy between the test-domain joint distribution of $P_{R,D}$ and its training-domain counterpart $P_{R_i,D_i}$ can be upper bounded by the input-output mutual information $I(W;D_i)$. By simultaneously minimizing $I(W;D_i)$, one can use $I(R_i;D_i)$ as a proxy to penalize $I(R;D)$ and achieve test-domain generalization.

While our analysis does not necessitate the independence condition between source domains or target domains, such a condition is also naturally satisfied in most learning scenarios and leads to tighter generalization bounds. Specifically, Theorem \ref{thm:pp_con} and \ref{thm:pp_best} can be further tightened by a factor of $\frac{1}{m'}$ when test domains are i.i.d. We refer the readers to Appendix \ref{sec:proof1} and \ref{sec:proof2} for the proof of these results.

\section{Inter-domain Distribution Matching} \label{sec:alg_design}
Motivated by our theoretical analysis in Section \ref{sec:theory}, we propose inter-domain distribution matching (IDM) to achieve high-probability DG (Problem \ref{prb:dg_hp}). Recall that the average-case risk $L(W)$ serves as a natural bridge to connect $L_s(W)$ and $L_t(W)$, the regularization in Problem \ref{prb:dg_hp} directly indicates an objective for optimization by combining the high-probability concentration bounds in Theorem \ref{thm:pp_domain} and \ref{thm:pp_con}. Specifically, for any $\lambda \in (0,1)$, we have:
\begin{equation} \label{eq:solution}
    \Pr\{\abs{L_t(W) - L_s(W)} \ge \epsilon\} \le \frac{M}{m\epsilon\lambda\sqrt{2}} \sum_{i=1}^m \sqrt{I(W,D_i)} + \frac{1}{\epsilon\lambda(1-\lambda)} \prn*{\frac{M}{\sqrt{2}}\sqrt{I(R;D)} + 2L^*}.
\end{equation}
This observation directly motivates aligning inter-domain distributions of the gradients and representations simultaneously. While the idea of distribution matching is not new, we are the first to explore the complementary relationship between gradient and representation matching:
\begin{mdframed}
    Gradient and representation matching together minimize $\Pr \{\abs{L_t(W) - L_s(W)} \ge \epsilon\}$ in Problem \ref{prb:dg_hp}.
\end{mdframed}
Specifically, training-domain generalization requires the minimization of the test-domain generalization gap (Theorem \ref{thm:pp_domain}), and test-domain generalization requires the minimization of the input-output mutual information (Proposition \ref{prop:cov_shift_emp}). Therefore, existing works focusing exclusively on either gradient or representation alignment are insufficient to fully address the domain-level generalization gap. To our best knowledge, this is the first exploration in the literature where gradient and representation matching are combined to yield a sufficient solution for the DG problem.

\subsection{Per-sample Distribution Matching}
While various distribution matching methods have been proposed in the literature, these techniques are generally either ineffective or insufficient for high-dimensional and complex distributions. Typically, learning algorithms have no knowledge about the underlying distribution of either the representation or the gradient, and the only available way is to align them across batched data points. We first provide an impossibility theorem for high-dimensional distribution matching in the cases of limited number of samples:
\begin{theorem}{(Informal)} \label{thm:indist}
    Let $n$ and $b$ be the dimension and the number of data points respectively. If $n > b + 1$, then there exist infinite environments whose conditional probabilities given an arbitrarily sampled group of data points are indistinguishable. If $n > 2b + 1$, then there exist infinite environments whose conditional probabilities cannot distinguish two arbitrarily sampled groups of data points.
\end{theorem}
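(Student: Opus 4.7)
The plan is to prove both parts by a dimension-counting argument based on the stabilizer of the sample points under rotations. The core intuition is that with $b$ data points in $\mathbb{R}^n$, the orthogonal complement of their linear span has dimension at least $n - b$; once this is large enough to admit a nontrivial rotation group, pushing an arbitrary base distribution forward by such rotations yields an infinite family of genuinely distinct distributions that all assign identical likelihoods to the observed samples, and hence induce identical posteriors on the environment.

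For the first part, I would fix any realization $g_1, \dots, g_b \in \mathbb{R}^n$ of the sampled group and set $V_1 = \mathrm{span}(g_1, \dots, g_b)$, so $\dim V_1 \le b$. When $n > b + 1$, the orthogonal complement satisfies $\dim V_1^\perp \ge n - b \ge 2$, so the group $O(V_1^\perp)$ of orthogonal transformations acting as the identity on $V_1$ is a positive-dimensional (hence uncountable) Lie group. I would then pick any base distribution $P_0$ on $\mathbb{R}^n$ whose orbit under $O(V_1^\perp)$ is nontrivial --- e.g., a full-rank Gaussian with a generic mean --- and associate to each $U \in O(V_1^\perp)$ a distinct environment $d_U$ with data distribution $P_{d_U} = U_\ast P_0$. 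Because every such $U$ fixes each $g_i \in V_1$, the pushforward density satisfies $p_{P_{d_U}}(g_i) = p_{P_0}(U^{-1} g_i) = p_{P_0}(g_i)$, so the joint likelihood $\prod_i p_{P_{d_U}}(g_i)$ is constant across $U$. Under any $O(V_1^\perp)$-invariant prior over these environments, Bayes' rule yields an identical posterior $P_{D \mid G_1, \dots, G_b}$ at every environment in the orbit, which is the claimed indistinguishability.

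For the second part, the argument is essentially identical after enlarging the relevant subspace to contain both sampled groups: setting $V_2 = \mathrm{span}(g_1^a, \dots, g_b^a, g_1^b, \dots, g_b^b)$ gives $\dim V_2 \le 2b$, and the hypothesis $n > 2b + 1$ forces $\dim V_2^\perp \ge 2$. Rotations in $O(V_2^\perp)$ fix every point in both groups simultaneously, so the likelihoods of group A and group B are each individually constant over the orbit of $P_0$. Consequently, the posterior on $D$ induced by either group is the same, and no test based on comparing the two groups can distinguish the constructed environments. Note that the thresholds $n > b + 1$ and $n > 2b + 1$ are exactly what is needed for $\dim V^\perp \ge 2$ in the two cases, since $O(1) = \{\pm 1\}$ is finite.

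The main obstacle is choosing a precise formalization of ``indistinguishable'' that is both faithful to the informal statement and strong enough to motivate PDM; I would adopt the likelihood/posterior interpretation above, since it directly parallels the hypothesis-based reasoning in Section \ref{sec:source_gen}. A secondary subtlety is ensuring that the orbit $\{U_\ast P_0 : U \in O(V^\perp)\}$ consists of genuinely distinct distributions --- which is automatic once the support or covariance of $P_0$ extends beyond the relevant $V$ --- and handling degenerate configurations where the samples fail to be in general position, which only \emph{decreases} $\dim V$ and therefore strengthens the conclusion.
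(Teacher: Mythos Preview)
Your argument for the first part is correct and essentially the same as the paper's: both exploit that the data span a subspace of dimension at most $b$, and both construct infinitely many environments by acting on the orthogonal complement. The paper specializes to zero-mean Gaussians and rotates the eigenvectors of the covariance living in $V_1^\perp$; your pushforward-by-$O(V_1^\perp)$ construction is the same idea in slightly more generality.

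For the second part there is a genuine gap: you have inverted the direction of ``distinguish.'' The paper's formal restatement requires, for the two batches $s_1,s_2$, infinitely many environments $d$ with
\[
p(S=s_1\mid D=d)\;=\;p(S=s_2\mid D=d),
\]
i.e.\ each such environment assigns \emph{equal likelihood} to the two groups. Your rotation argument only shows that $p(s_1\mid d_U)$ and $p(s_2\mid d_U)$ are each constant over the orbit $\{d_U:U\in O(V_2^\perp)\}$; it does not force these two constants to coincide. Indeed, for a generic base $P_0$ they will differ, and rotating in $V_2^\perp$ cannot change either value. So the posterior-equality conclusion you draw (both posteriors uniform over the orbit) is correct under your interpretation but is not what the theorem asserts.

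The paper's proof of the second part is genuinely different from the first: it parametrizes zero-mean Gaussians by $2b+1$ eigenvalues $\lambda_1,\dots,\lambda_{2b+1}$ of a covariance block whose eigenspace contains both column spaces, and observes that the requirement $p(s_1\mid d)=p(s_2\mid d)$ becomes a system of $b$ homogeneous linear equations in these $2b+1$ unknowns. Since $2b+1>b$, the solution space is positive-dimensional, yielding infinitely many covariances---hence environments---satisfying the equality. This linear-system step is the missing ingredient in your proposal; the rotation trick alone cannot deliver it.
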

We refer the readers to Appendix \ref{sec:proof2} for a formal statement of Theorem \ref{thm:indist}. In real-world scenarios, the dimensionality of the feature or the gradient easily exceeds that of the batch size, making algorithms that aim to align the entire distribution (e.g. CORAL \cite{coral2016} and MMD \cite{mmd2018}) generally ineffective since distribution alignment is basically impossible given such few data points. This observation is also verified by \cite{fishr2022} that aligning the entire covariance matrix achieves no better performance than aligning the diagonal elements only. Furthermore, prior distribution alignment techniques mainly focus on aligning the directions \cite{andmask2020, sandmask2021, fish2021} or low-order moments \cite{coral2016, iga2020, fishr2022}, which are insufficient for complex probability distributions. For example, while the standard Gaussian distribution $N(0,1)$ and the uniform distribution $U(-\sqrt{3}, \sqrt{3})$ share the same expectation and variance, they are fundamentally different to one another. To address these issues, we propose the per-sample distribution matching (PDM) technique that aligns distributions in a per-dimension manner, by minimizing an upper bound of the KL divergence between probability density estimators.

Let $\{x_i^1\}_{i=1}^b$ and $\{x_i^2\}_{i=1}^b$ be two groups of 1-dimensional data points drawn from probability distributions $P$ and $Q$ respectively. Let $p_i$ denote the density of Gaussian distribution with expectation $x_i^1$ and variance $\sigma^2$, then the kernel density estimator $\bar{P}$ for $P$ can be written as $\bar{p}(x) = \frac{1}{b} \sum_i p_i(x)$ (respectively for $q_i$, $\bar{Q}$ and $\bar{q}$). The following theorem suggests a computable upper bound for the KL divergence (Wasserstein distance) between probability density estimators:
\begin{theorem} \label{thm:bijection}
    Let $f$ be a bijection: $[1,b] \leftrightarrow [1,b]$ and $P_i$ be the probability measure defined by $p_i$ (respectively for $Q_i$ and $q_i$), then $\kl{\bar{P}}{\bar{Q}} \le \frac{1}{b} \sum_{i=1}^b \kl{P_i}{Q_{f(i)}}$, and $\W(\bar{P}, \bar{Q}) \le \frac{1}{b} \sum_{i=1}^b \W(P_i, Q_{f(i)})$.
\end{theorem}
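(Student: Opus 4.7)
The plan is to exploit the fact that both kernel density estimators are \emph{uniform} mixtures of the individual Gaussian components, together with the joint convexity of the KL divergence and the Wasserstein distance in both arguments. The bijection $f$ enters for free because the order of summation in a uniform mixture is irrelevant.

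First I would observe that since $f:[1,b]\leftrightarrow[1,b]$ is a bijection,
\begin{equation*}
    \bar{P} = \frac{1}{b}\sum_{i=1}^b P_i, \qquad \bar{Q} = \frac{1}{b}\sum_{i=1}^b Q_i = \frac{1}{b}\sum_{i=1}^b Q_{f(i)},
\end{equation*}
so the two mixtures share identical weights and we may pair the $i$-th component of $\bar{P}$ with the $f(i)$-th component of $\bar{Q}$. For the KL bound I would invoke the standard joint convexity of the KL divergence: for any probability weights $\{\alpha_i\}$ and any measures $\{P_i\},\{Q_i\}$,
\begin{equation*}
    \kl*{\sum_i \alpha_i P_i}{\sum_i \alpha_i Q_i} \le \sum_i \alpha_i \kl{P_i}{Q_i}.
\end{equation*}
This follows directly from the log-sum inequality (or, equivalently, from the data-processing inequality applied to the channel that first draws a mixture index and then a sample). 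Setting $\alpha_i=1/b$ and pairing via $f$ yields the first claim immediately.

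For the Wasserstein bound I would argue by explicit coupling. Let $\pi_i$ denote an optimal coupling of $P_i$ and $Q_{f(i)}$, so that $\int c\,d\pi_i = \W(P_i,Q_{f(i)})$. Define $\pi = \frac{1}{b}\sum_{i=1}^b \pi_i$; a direct check on marginals shows that $\pi$ is a valid coupling of $\bar{P}$ and $\bar{Q}$, because the first marginal is $\frac{1}{b}\sum_i P_i = \bar{P}$ and the second is $\frac{1}{b}\sum_i Q_{f(i)} = \bar{Q}$. Hence
\begin{equation*}
    \W(\bar{P},\bar{Q}) \le \int c\,d\pi = \frac{1}{b}\sum_{i=1}^b \W(P_i, Q_{f(i)}),
\end{equation*}
which is the second claim. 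Equivalently, one may cite joint convexity of $\W$ in both arguments, which is proved in exactly this way.

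There is no real obstacle; the only subtlety is making sure that the bijection reshuffling is applied \emph{before} invoking convexity, since without the uniform weights the two sides of the convexity inequality would involve different mixtures. Once the relabeling $\bar Q=\frac{1}{b}\sum_i Q_{f(i)}$ is in place, both inequalities are one-line consequences of well-known convexity properties, and no use is made of the Gaussian form of $p_i,q_i$ — the statement in fact holds for arbitrary component measures.
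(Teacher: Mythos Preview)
Your proposal is correct. For the KL part, your argument via joint convexity is essentially the paper's proof repackaged: the paper unrolls exactly the log-sum/Jensen computation that underlies joint convexity, writing $\bar q(x)=\tfrac{1}{b}\sum_i q_{f(i)}(x)$ and applying Jensen to $-\log$ with weights $\tfrac{1}{b}\cdot\tfrac{p_i(x)}{\bar p(x)}$. For the Wasserstein part you take a genuinely different route: you work on the primal side, constructing the coupling $\pi=\tfrac{1}{b}\sum_i\pi_i$ and bounding $\W(\bar P,\bar Q)$ directly, whereas the paper works on the dual side via Kantorovich--Rubinstein, pulling the $\sup_{f\in Lip_1}$ inside the average. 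Both are one-line arguments; your coupling version is slightly more self-contained (it does not invoke duality) and, as you note, makes it transparent that the Gaussian form of the components is irrelevant.
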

Hence, distribution matching can be achieved by minimizing the KL divergence or Wasserstein distances between point Gaussian densities, which can be achieved by aligning individual data points. The following theorem suggests an optimal bijection for choosing the order of alignment:
\begin{theorem} \label{thm:minimizer}
    Suppose that $\{x_i^1\}_{i=1}^b$ and $\{x_i^2\}_{i=1}^b$ are both sorted in the same order, then $f(j) = j$ is the minimizer of both $\sum_{i=1}^b \kl{P_i}{Q_{f(i)}}$ and $\sum_{i=1}^b \W(P_i, Q_{f(i)})$.
\end{theorem}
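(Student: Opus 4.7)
The plan is to reduce both sums to classical one-dimensional pairing problems on the means of the Gaussians. Since $P_i$ and $Q_j$ are Gaussian measures on $\mathbb{R}$ with a common variance $\sigma^2$ and means $x_i^1$ and $x_j^2$, standard closed-form expressions give
\begin{equation*}
    \kl{P_i}{Q_j} = \frac{(x_i^1 - x_j^2)^2}{2\sigma^2}, \qquad \W(P_i, Q_j) = \abs{x_i^1 - x_j^2},
\end{equation*}
where the Wasserstein identity holds whether one interprets $\W$ as $W_1$ or $W_2$, since the two laws differ only by a shift. So it suffices to show that, given two sequences sorted in the same order, the identity bijection minimizes $\sum_i (x_i^1 - x_{f(i)}^2)^2$ and $\sum_i \abs{x_i^1 - x_{f(i)}^2}$.

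For the squared (KL) cost I would expand
\begin{equation*}
    \sum_{i=1}^b (x_i^1 - x_{f(i)}^2)^2 = \sum_{i=1}^b (x_i^1)^2 + \sum_{i=1}^b (x_i^2)^2 - 2 \sum_{i=1}^b x_i^1 x_{f(i)}^2,
\end{equation*}
using that the second sum is invariant in $f$ because $f$ is a bijection. Minimization therefore amounts to maximizing $\sum_i x_i^1 x_{f(i)}^2$, which is precisely the setting of the rearrangement inequality: when $\{x_i^1\}$ and $\{x_i^2\}$ are sorted in the same order, the identity pairing $f(j) = j$ attains the maximum, and hence the minimum of the squared cost.

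For the absolute (Wasserstein) cost I would run an adjacent-swap argument. The key pointwise lemma is that for $a \le b$ and $c \le d$,
\begin{equation*}
    \abs{a - c} + \abs{b - d} \le \abs{a - d} + \abs{b - c},
\end{equation*}
which is immediate by checking the six possible orderings of $\{a,b\}$ interleaved with $\{c,d\}$. Now suppose $f \ne \mathrm{id}$; then there is an inversion $i < j$ with $f(i) > f(j)$. Taking $a = x_i^1$, $b = x_j^1$, $c = x_{f(j)}^2$, $d = x_{f(i)}^2$ (so $a \le b$ and $c \le d$, since both sequences are sorted), the lemma shows that exchanging $f(i)$ and $f(j)$ weakly decreases $\sum_i \abs{x_i^1 - x_{f(i)}^2}$. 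Any bijection can be transformed into the identity by finitely many such adjacent inversion swaps (bubble sort), each one non-increasing the objective, so $f = \mathrm{id}$ is a minimizer.

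The hard part is really just writing down the pointwise swap lemma cleanly: it is elementary but the case analysis is where one has to be careful. Ties in either sequence only render the minimizer non-unique, which does not affect the statement; and the unspecified convention between $W_1$ and $W_2$ does not affect the sorting conclusion because both Wasserstein distances coincide for equal-variance shifted Gaussians on $\mathbb{R}$. No ingredient beyond the KL/Wasserstein formulas for Gaussians and the rearrangement/swap arguments is required.
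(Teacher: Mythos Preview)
Your proposal is correct and follows the same underlying reduction as the paper: both compute the closed forms $\kl{P_i}{Q_j} = (x_i^1 - x_j^2)^2/(2\sigma^2)$ and $\W(P_i,Q_j) = \abs{x_i^1 - x_j^2}$, then argue that the identity pairing minimizes the resulting sums via an exchange/inversion argument.

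The presentation differs slightly. The paper treats both powers $p \in \{1,2\}$ uniformly by picking an inversion $(i,j)$ with $f(i) > f(j)$ and verifying, through explicit case analysis on the three possible interleavings of $x_i^1 < x_j^1$ with $x_{f(j)}^2 < x_{f(i)}^2$, that swapping never increases $\abs{x_i^1 - x_{f(i)}^2}^p + \abs{x_j^1 - x_{f(j)}^2}^p$. You instead split the two cases: for $p=2$ you appeal directly to the rearrangement inequality after expanding the square, and for $p=1$ you isolate the pointwise swap lemma $\abs{a-c}+\abs{b-d} \le \abs{a-d}+\abs{b-c}$ and finish with a bubble-sort descent. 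Your route avoids the bulk of the case analysis for the squared cost; the paper's unified treatment is a bit more self-contained since it does not invoke the rearrangement inequality as a black box. Either way the content is the same.
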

To summarize, the procedure of PDM is to slice the data points into separate dimensions, sort the data points in ascending (or descending) order for each dimension, and then match the sorted data points across different training domains. PDM improves over previous distribution matching techniques by simultaneously capturing multiple orders of moments, avoiding ineffective high-dimensional distribution matching, as well as enabling straightforward implementation and efficient computation. We provide pseudo-codes for both PDM and IDM in Appendix \ref{sec:setting}.


\subsection{Algorithm Design}
Combining the methods discussed above, we finally propose the IDM algorithm for high-probability DG by simultaneously aligning inter-domain gradients and representations. Recall that Problem \ref{prb:dg_hp} incorporates an additional regularization based on ERM, we adopt the following Lagrange multipliers to optimize the IDM objective:
\begin{equation}
    \mathcal{L}_{\mathrm{IDM}} = \mathcal{L}_{\mathrm{E}} + \lambda_1 \mathcal{L}_{\mathrm{G}} + \lambda_2 \mathcal{L}_{\mathrm{R}} = \frac{1}{m} \sum_{i=1}^m \brk*{L_{D_i}(W) + \lambda_1 \mathrm{PDM}(G_i) + \lambda_2 \mathrm{PDM}(R_i)}. \label{eq:penalty}
\end{equation}
Here $\mathcal{L}_{\mathrm{E}}$ is the risk of ERM, $\mathcal{L}_{\mathrm{G}}$ and $\mathcal{L}_{\mathrm{R}}$ denote the penalty of distribution matching for the gradients and representations respectively, implemented with the proposed PDM method. To cooperate representation alignment which regards the classifier $\psi$ as the true predictor and also for memory and time concerns, we only apply gradient alignment for the classifier $\psi$ as in \cite{fishr2022}. Furthermore, $\lambda_1$ and $\lambda_2$ should be adaptively chosen according to the extent of covariate and concept shifts respectively: Firstly, $I(R;D)$ is upper bounded by $I(X;D)$ by the Markov chain $D \rightarrow X \rightarrow R$, so the representations are naturally aligned when $I(X;D) = 0$. Secondly, gradient alignment is not required when $I(Y;D|X) = 0$, since the distribution shift can then be minimized by aligning the representations solely. Therefore, $\lambda_1$ and $\lambda_2$ should scale with the amount of the covariate and concept shifts, respectively.

\begin{figure*}[t]
\begin{minipage}[t]{0.6\linewidth}
\vspace{0pt}
\centering
\includegraphics[width=\linewidth]{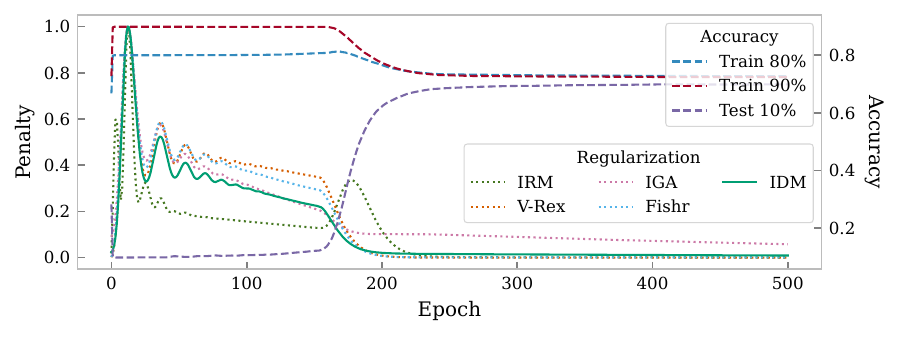}
\vspace{-20pt}
\captionof{figure}{Learning dynamics of IDM.}
\label{fig:cmnist}
\end{minipage}%
\begin{minipage}[t]{0.4\linewidth}
\vspace{0pt}\raggedright
\centering
\small
\captionof{table}{The Colored MNIST task.}
\label{tbl:cmnist}
\adjustbox{max width=\textwidth}{%
\begin{tabular}{lccc}
\toprule
Method & Train Acc & Test Acc & Gray Acc \\
\midrule
ERM & 86.4 \scriptsize{$\pm$ 0.2} & 14.0 \scriptsize{$\pm$ 0.7} & 71.0 \scriptsize{$\pm$ 0.7} \\
IRM & 71.0 \scriptsize{$\pm$ 0.5} & 65.6 \scriptsize{$\pm$ 1.8} & 66.1 \scriptsize{$\pm$ 0.2} \\
V-REx & 71.7 \scriptsize{$\pm$ 1.5} & 67.2 \scriptsize{$\pm$ 1.5} & 68.6 \scriptsize{$\pm$ 2.2} \\
IGA & 68.9 \scriptsize{$\pm$ 3.0} & 67.7 \scriptsize{$\pm$ 2.9} & 67.5 \scriptsize{$\pm$ 2.7} \\
Fishr & 69.6 \scriptsize{$\pm$ 0.9} & 71.2 \scriptsize{$\pm$ 1.1} & 70.2 \scriptsize{$\pm$ 0.7} \\
\midrule
IDM & 70.2 \scriptsize{$\pm$ 1.4} & 70.6 \scriptsize{$\pm$ 0.9} & 70.5 \scriptsize{$\pm$ 0.7} \\
\bottomrule
\end{tabular}}
\end{minipage}
\end{figure*}

\section{Related Works}
In the literature, various approaches have been proposed by incorporating external domain information to achieve OOD generalization. Most recent works achieve invariance by employing additional regularization criteria based on ERM. These methods differ in the choice of the statistics used to match across training domains and can be categorized by the corresponding objective of 1) gradient, 2) representation, and 3) predictor, as follows:

\textbf{Invariant Gradients.} Gradient alignment enforces batched data points from different domains to cooperate and promotes OOD generalization by finding loss minima shared across training domains. Specifically, IGA \cite{iga2020} aligns the empirical expectations, Fish \cite{fish2021} maximizes the dot-product of inter-domain gradients, AND-mask \cite{andmask2020} and SAND-mask \cite{sandmask2021} only update weights when the gradients share the same direction, and Fishr \cite{fishr2022} matches the gradient variance. These gradient-based objectives are generally restricted to aligning the directions or low-order moments, resulting in substantial information loss in more granular statistics. Besides, these works either lack generalization guarantees or rely on strong assumptions including the existence of invariant and controllable features (IGA), the shape of loss landscape around the local minima (Fishr), Lipschitz continuous gradients and co-diagonalizable Hessian matrix (AND-mask). In contrast, our analysis successfully connects gradient alignment and training-domain generalization by leveraging the mild assumption of identical domain distributions.

\textbf{Invariant Representations.} Extracting domain-invariant features has been extensively studied to solve both DG and domain adaptation (DA) problems. DANN \cite{dann2016} and CDANN \cite{cdann2018} align inter-domain representations via adversarial learning, MMD \cite{mmd2018} uses kernel methods for distribution alignment, and CORAL \cite{coral2016} matches low-order moments of the representations. Still, these methods are insufficient for complex probability distributions \cite{zhao2019learning}, ineffective for high-dimensional distributions (Theorem \ref{thm:indist}), and incapable of addressing the concept shift. Besides, the viability of minimizing the representation shift $I(R;D)$ through the training-domain proxy $I(R_i;D_i)$ remains questionable without gradient matching (Proposition \ref{prop:concept_shift}). Our analysis sheds light on understanding how representation alignment enhances test-domain generalization by minimizing the variance of target-domain risks.

\textbf{Invariant Predictors.} A recent line of works proposes to explore the connection between invariance and causality. IRM \cite{irm2019} and subsequent works \cite{sparseirm2022,bayesirm2022} learn an invariant classifier that is simultaneously optimal for all training domains. However, later works have shown that IRM may fail on non-linear data and lead to sub-optimal predictors \cite{kamath2021does, ibirm2021}. Parallel works include: V-REx \cite{vrex2021} which minimizes the variance of training-domain risks, GroupDRO \cite{groupdro2019} which minimizes the worst-domain training risk, and QRM \cite{qrm2022} which optimizes a quantile of the risk distribution. As shown in the next section, IDM also promotes domain-invariant predictors and ensures optimality across different training domains.

\begin{table*}[t]
\centering
\small
\caption{The DomainBed benchmark. We format \textbf{best}, \underline{second best} and \textcolor{gray}{worse than ERM} results.}
\label{tbl:domainbed_oracle}
\adjustbox{max width=\textwidth}{%
\begin{tabular}{l|cccccccc|ccc}
\toprule
\multirow{2}{*}{Algorithm} & \multicolumn{8}{c|}{Accuracy ($\uparrow$)} & \multicolumn{3}{c}{Ranking ($\downarrow$)} \\
& \textbf{CMNIST} & \textbf{RMNIST} & \textbf{VLCS} & \textbf{PACS} & \textbf{OffHome} & \textbf{TerraInc} & \textbf{DomNet} & \textbf{Avg} & \textbf{Mean} & \textbf{Median} & \textbf{Worst} \\
\midrule
ERM & 57.8 \scriptsize{$\pm$ 0.2} & 97.8 \scriptsize{$\pm$ 0.1} & 77.6 \scriptsize{$\pm$ 0.3} & 86.7 \scriptsize{$\pm$ 0.3} & 66.4 \scriptsize{$\pm$ 0.5} & 53.0 \scriptsize{$\pm$ 0.3} & 41.3 \scriptsize{$\pm$ 0.1} & 68.7 & 12.3 & 11 & 20 \\
IRM & 67.7 \scriptsize{$\pm$ 1.2} & \textcolor{gray}{97.5} \scriptsize{$\pm$ 0.2} & \textcolor{gray}{76.9} \scriptsize{$\pm$ 0.6} & \textcolor{gray}{84.5} \scriptsize{$\pm$ 1.1} & \textcolor{gray}{63.0} \scriptsize{$\pm$ 2.7} & \textcolor{gray}{50.5} \scriptsize{$\pm$ 0.7} & \textcolor{gray}{28.0} \scriptsize{$\pm$ 5.1} & \textcolor{gray}{66.9} & \textcolor{gray}{18.3} & \textcolor{gray}{20} & \textcolor{gray}{22} \\
GroupDRO & 61.1 \scriptsize{$\pm$ 0.9} & 97.9 \scriptsize{$\pm$ 0.1} & \textcolor{gray}{77.4} \scriptsize{$\pm$ 0.5} & 87.1 \scriptsize{$\pm$ 0.1} & \textcolor{gray}{66.2} \scriptsize{$\pm$ 0.6} & \textcolor{gray}{52.4} \scriptsize{$\pm$ 0.1} & \textcolor{gray}{33.4} \scriptsize{$\pm$ 0.3} & \textcolor{gray}{67.9} & 11.7 & 10 & 19 \\
Mixup & 58.4 \scriptsize{$\pm$ 0.2} & 98.0 \scriptsize{$\pm$ 0.1} & 78.1 \scriptsize{$\pm$ 0.3} & 86.8 \scriptsize{$\pm$ 0.3} & 68.0 \scriptsize{$\pm$ 0.2} & \textbf{54.4} \scriptsize{$\pm$ 0.3} & \textcolor{gray}{39.6} \scriptsize{$\pm$ 0.1} & 69.0 & 7.3 & 6 & 15 \\
MLDG & 58.2 \scriptsize{$\pm$ 0.4} & 97.8 \scriptsize{$\pm$ 0.1} & \textcolor{gray}{77.5} \scriptsize{$\pm$ 0.1} & 86.8 \scriptsize{$\pm$ 0.4} & 66.6 \scriptsize{$\pm$ 0.3} & \textcolor{gray}{52.0} \scriptsize{$\pm$ 0.1} & 41.6 \scriptsize{$\pm$ 0.1} & 68.7 & \textcolor{gray}{12.6} & \textcolor{gray}{13} & 18 \\
CORAL & 58.6 \scriptsize{$\pm$ 0.5} & 98.0 \scriptsize{$\pm$ 0.0} & 77.7 \scriptsize{$\pm$ 0.2} & 87.1 \scriptsize{$\pm$ 0.5} & \textbf{68.4} \scriptsize{$\pm$ 0.2} & \textcolor{gray}{52.8} \scriptsize{$\pm$ 0.2} & 41.8 \scriptsize{$\pm$ 0.1} & 69.2 & 6.4 & 5 & \underline{14} \\
MMD & 63.3 \scriptsize{$\pm$ 1.3} & 98.0 \scriptsize{$\pm$ 0.1} & 77.9 \scriptsize{$\pm$ 0.1} & 87.2 \scriptsize{$\pm$ 0.1} & \textcolor{gray}{66.2} \scriptsize{$\pm$ 0.3} & \textcolor{gray}{52.0} \scriptsize{$\pm$ 0.4} & \textcolor{gray}{23.5} \scriptsize{$\pm$ 9.4} & \textcolor{gray}{66.9} & 10.0 & 10 & \textcolor{gray}{22} \\
DANN & \textcolor{gray}{57.0} \scriptsize{$\pm$ 1.0} & 97.9 \scriptsize{$\pm$ 0.1} & \underline{79.7} \scriptsize{$\pm$ 0.5} & \textcolor{gray}{85.2} \scriptsize{$\pm$ 0.2} & \textcolor{gray}{65.3} \scriptsize{$\pm$ 0.8} & \textcolor{gray}{50.6} \scriptsize{$\pm$ 0.4} & \textcolor{gray}{38.3} \scriptsize{$\pm$ 0.1} & \textcolor{gray}{67.7} & \textcolor{gray}{15.0} & \textcolor{gray}{18} & \textcolor{gray}{22} \\
CDANN & 59.5 \scriptsize{$\pm$ 2.0} & 97.9 \scriptsize{$\pm$ 0.0} & \textbf{79.9} \scriptsize{$\pm$ 0.2} & \textcolor{gray}{85.8} \scriptsize{$\pm$ 0.8} & \textcolor{gray}{65.3} \scriptsize{$\pm$ 0.5} & \textcolor{gray}{50.8} \scriptsize{$\pm$ 0.6} & \textcolor{gray}{38.5} \scriptsize{$\pm$ 0.2} & \textcolor{gray}{68.2} & \textcolor{gray}{12.4} & \textcolor{gray}{14} & 18 \\
MTL & \textcolor{gray}{57.6} \scriptsize{$\pm$ 0.3} & 97.9 \scriptsize{$\pm$ 0.1} & 77.7 \scriptsize{$\pm$ 0.5} & 86.7 \scriptsize{$\pm$ 0.2} & 66.5 \scriptsize{$\pm$ 0.4} & \textcolor{gray}{52.2} \scriptsize{$\pm$ 0.4} & \textcolor{gray}{40.8} \scriptsize{$\pm$ 0.1} & \textcolor{gray}{68.5} & 11.7 & 10 & \textcolor{gray}{21} \\
SagNet & 58.2 \scriptsize{$\pm$ 0.3} & 97.9 \scriptsize{$\pm$ 0.0} & 77.6 \scriptsize{$\pm$ 0.1} & \textcolor{gray}{86.4} \scriptsize{$\pm$ 0.4} & 67.5 \scriptsize{$\pm$ 0.2} & \textcolor{gray}{52.5} \scriptsize{$\pm$ 0.4} & \textcolor{gray}{40.8} \scriptsize{$\pm$ 0.2} & 68.7 & 11.3 & 9 & 17 \\
ARM & 63.2 \scriptsize{$\pm$ 0.7} & \textbf{98.1} \scriptsize{$\pm$ 0.1} & 77.8 \scriptsize{$\pm$ 0.3} & \textcolor{gray}{85.8} \scriptsize{$\pm$ 0.2} & \textcolor{gray}{64.8} \scriptsize{$\pm$ 0.4} & \textcolor{gray}{51.2} \scriptsize{$\pm$ 0.5} & \textcolor{gray}{36.0} \scriptsize{$\pm$ 0.2} & \textcolor{gray}{68.1} & \textcolor{gray}{13.0} & \textcolor{gray}{16} & \textcolor{gray}{21} \\
VREx & 67.0 \scriptsize{$\pm$ 1.3} & 97.9 \scriptsize{$\pm$ 0.1} & 78.1 \scriptsize{$\pm$ 0.2} & 87.2 \scriptsize{$\pm$ 0.6} & \textcolor{gray}{65.7} \scriptsize{$\pm$ 0.3} & \textcolor{gray}{51.4} \scriptsize{$\pm$ 0.5} & \textcolor{gray}{30.1} \scriptsize{$\pm$ 3.7} & \textcolor{gray}{68.2} & 10.6 & 8 & 20 \\
RSC & 58.5 \scriptsize{$\pm$ 0.5} & \textcolor{gray}{97.6} \scriptsize{$\pm$ 0.1} & 77.8 \scriptsize{$\pm$ 0.6} & \textcolor{gray}{86.2} \scriptsize{$\pm$ 0.5} & 66.5 \scriptsize{$\pm$ 0.6} & \textcolor{gray}{52.1} \scriptsize{$\pm$ 0.2} & \textcolor{gray}{38.9} \scriptsize{$\pm$ 0.6} & \textcolor{gray}{68.2} & \textcolor{gray}{13.4} & \textcolor{gray}{13} & 19 \\
AND-mask & 58.6 \scriptsize{$\pm$ 0.4} & \textcolor{gray}{97.5} \scriptsize{$\pm$ 0.0} & \textcolor{gray}{76.4} \scriptsize{$\pm$ 0.4} & \textcolor{gray}{86.4} \scriptsize{$\pm$ 0.4} & \textcolor{gray}{66.1} \scriptsize{$\pm$ 0.2} & \textcolor{gray}{49.8} \scriptsize{$\pm$ 0.4} & \textcolor{gray}{37.9} \scriptsize{$\pm$ 0.6} & \textcolor{gray}{67.5} & \textcolor{gray}{17.0} & \textcolor{gray}{16} & \textcolor{gray}{22} \\
SAND-mask & 62.3 \scriptsize{$\pm$ 1.0} & \textcolor{gray}{97.4} \scriptsize{$\pm$ 0.1} & \textcolor{gray}{76.2} \scriptsize{$\pm$ 0.5} & \textcolor{gray}{85.9} \scriptsize{$\pm$ 0.4} & \textcolor{gray}{65.9} \scriptsize{$\pm$ 0.5} & \textcolor{gray}{50.2} \scriptsize{$\pm$ 0.1} & \textcolor{gray}{32.2} \scriptsize{$\pm$ 0.6} & \textcolor{gray}{67.2} & \textcolor{gray}{17.9} & \textcolor{gray}{19} & \textcolor{gray}{22} \\
Fish & 61.8 \scriptsize{$\pm$ 0.8} & 97.9 \scriptsize{$\pm$ 0.1} & 77.8 \scriptsize{$\pm$ 0.6} & \textcolor{gray}{85.8} \scriptsize{$\pm$ 0.6} & \textcolor{gray}{66.0} \scriptsize{$\pm$ 2.9} & \textcolor{gray}{50.8} \scriptsize{$\pm$ 0.4} & \textbf{43.4} \scriptsize{$\pm$ 0.3} & 69.1 & 11.3 & 11 & 18 \\
Fishr & \underline{68.8} \scriptsize{$\pm$ 1.4} & 97.8 \scriptsize{$\pm$ 0.1} & 78.2 \scriptsize{$\pm$ 0.2} & 86.9 \scriptsize{$\pm$ 0.2} & 68.2 \scriptsize{$\pm$ 0.2} & \underline{53.6} \scriptsize{$\pm$ 0.4} & 41.8 \scriptsize{$\pm$ 0.2} & \underline{70.8} & 5.4 & \textbf{3} & 16 \\
SelfReg & 58.0 \scriptsize{$\pm$ 0.7} & \textbf{98.1} \scriptsize{$\pm$ 0.7} & 78.2 \scriptsize{$\pm$ 0.1} & \textbf{87.7} \scriptsize{$\pm$ 0.1} & 68.1 \scriptsize{$\pm$ 0.3} & \textcolor{gray}{52.8} \scriptsize{$\pm$ 0.9} & \underline{43.1} \scriptsize{$\pm$ 0.1} & 69.4 & \underline{5.0} & \textbf{3} & 19 \\
CausIRL\scriptsize{CORAL} & 58.4 \scriptsize{$\pm$ 0.3} & 98.0 \scriptsize{$\pm$ 0.1} & 78.2 \scriptsize{$\pm$ 0.1} & \underline{87.6} \scriptsize{$\pm$ 0.1} & 67.7 \scriptsize{$\pm$ 0.2} & 53.4 \scriptsize{$\pm$ 0.4} & 42.1 \scriptsize{$\pm$ 0.1} & 69.4 & \underline{5.0} & \textbf{3} & 15 \\
CausIRL\scriptsize{MMD} & 63.7 \scriptsize{$\pm$ 0.8} & 97.9 \scriptsize{$\pm$ 0.1} & 78.1 \scriptsize{$\pm$ 0.1} & \textcolor{gray}{86.6} \scriptsize{$\pm$ 0.7} & \textcolor{gray}{65.2} \scriptsize{$\pm$ 0.6} & \textcolor{gray}{52.2} \scriptsize{$\pm$ 0.3} & \textcolor{gray}{40.6} \scriptsize{$\pm$ 0.2} & 69.2 & 10.4 & 10 & 20 \\
\midrule
IDM & \textbf{72.0} \scriptsize{$\pm$ 1.0} & 98.0 \scriptsize{$\pm$ 0.1} & 78.1 \scriptsize{$\pm$ 0.4} & \underline{87.6} \scriptsize{$\pm$ 0.3} & \underline{68.3} \scriptsize{$\pm$ 0.2} & \textcolor{gray}{52.8} \scriptsize{$\pm$ 0.5} & 41.8 \scriptsize{$\pm$ 0.2} & \textbf{71.2} & \textbf{3.3} & \textbf{3} & \textbf{6} \\
\bottomrule
\end{tabular}}
\end{table*}

\section{Experimental Results} \label{sec:expr}
In this section, we evaluate the proposed IDM algorithm on the Colored MNIST task \cite{irm2019} and the DomainBed benchmark \cite{domainbed2020} to demonstrate its capability of generalizing against various types of distribution shifts\footnote{The source code is available at \url{https://github.com/Yuxin-Dong/IDM}.}. Detailed settings of these experiments and further empirical results including ablation studies are reported in Appendix \ref{sec:setting} and \ref{sec:results}.

\subsection{Colored MNIST}
The Colored MNIST task \cite{irm2019} is carefully designed to create high correlations between image colors and the true labels, leading to spurious features that possess superior predictive power ($90\%$ and $80\%$ accuracy) over the actual digits ($75\%$). However, this correlation is reversed in the test domain ($10\%$), causing any learning algorithm that solely minimizes training errors to overfit the color information and fail when testing. As such, Colored MNIST is an ideal task to evaluate the capability of learning algorithms to achieve invariance across source domains.

Following the settings of \cite{irm2019}, we adopt a two-stage training technique, where the penalty strength $\lambda$ is set low initially and higher afterward. We visualize the learning dynamics of relevant DG penalties, including IRM, V-Rex, IGA, and Fishr, using the IDM objective for optimization in Figure \ref{fig:cmnist}. The penalty values are normalized for better clarity. This visualization confirms Theorem \ref{thm:pp_domain} that IDM promotes source-domain generalization by minimizing the gap between training risks, thus ensuring the optimality of the predictor across different training domains. Moreover, it verifies the superiority of PDM by showing that penalizing the IDM objective solely is sufficient to minimize other types of invariance penalties.

Table \ref{tbl:cmnist} presents the performance comparison on Colored MNIST across $10$ independent runs. Following the hyper-parameter tuning technique as \cite{irm2019}, we select the best model by $\max_w \min(L_s(w), L_t(w))$. As can be seen, IDM achieves the best trade-off between training and test-domain accuracies (70.2\%), and near-optimal gray-scale accuracy (70.5\%) compared to the Oracle predictor (71.0\%, ERM trained with gray-scale images).

\subsection{DomainBed Benchmark}
The DomainBed Benchmark \cite{domainbed2020} comprises multiple synthetic and real-world datasets for assessing the performance of both DA and DG algorithms. To ensure a fair comparison, DomainBed limits the number of attempts for hyper-parameter tuning to $20$, and the results are averaged over $3$ independent trials. Therefore, DomainBed serves as a rigorous and comprehensive benchmark to evaluate different DG strategies. We compare the performance of our method with $20$ baselines in total for a thorough evaluation. Table \ref{tbl:domainbed_oracle} summarizes the results using test-domain model selection, which is a common choice for validation purposes \cite{fishr2022, vrex2021} and highly motivated by our discussion in Appendix \ref{sec:full_domainbed}.

As can be seen, IDM achieves top-1 accuracy (72.0\%) on CMNIST which is competitive with the Oracle (75.0\%), outperforming all previous distribution alignment techniques by aligning the directions (AND-mask, SAND-mask, Fish) or low-order moments (Fishr). This verifies the superiority of the proposed PDM method as well as the complementary relationship between gradient and representation alignment. On the contrary, algorithms that only align the representations (CORAL, MMD, DANN, CDANN) are incapable of addressing the concept shift, thus performing poorly on CMNIST. Moreover, IDM achieves the highest accuracy among all distribution matching algorithms on RMNIST / PACS, competitive performances to the best algorithm on RMNIST (98.0\% v.s.~98.1\%), PACS (87.6\% v.s.~87.7\%), OfficeHome (68.3\% v.s.~68.4\%), the highest average accuracy (71.2\%) and best rankings (mean, median and worst rankings on $7$ datasets) among all baseline methods. IDM also enables efficient computation, such that the running-time overhead is only $5\%$ compared to ERM on the largest DomainNet dataset, and negligible for other smaller datasets. Notably, IDM is the only algorithm that consistently achieves top rankings (Top 6 of 22), while any other method failed to outperform most of the competitors on at least $1$ dataset.

While the overall performance is promising, we notice that IDM is not very effective on TerraIncognita. There are several possible reasons: Firstly, the number of hyper-parameters in IDM exceeds most competing methods, which is critical to model selection since the number of tuning attempts is limited in DomainBed. Recall that the value of $\lambda_1$ and $\lambda_2$ should adapt to the amount of covariate and concept shifts respectively: While CMNIST manually induces high concept shift, covariate shift is instead dominant in other datasets, raising extra challenges for hyper-parameter tuning. Secondly, representation space distribution alignment may not always help since $L_t(w) \le L(w)$ is possible by the randomized nature of target domains. These factors together result in sub-optimal hyper-parameter selection results.

\section{Conclusion}
In this work, we explore a novel perspective for DG by minimizing the domain-level generalization gap with high probability, which facilitates information-theoretic analysis for the generalization behavior of learning algorithms. Our analysis sheds light on understanding how gradient or representation matching enhances generalization and unveils the complementary relationship between these two elements. These theoretical insights inspire us to design the IDM algorithm by simultaneously aligning inter-domain gradients and representations, which then achieves superior performance on the DomainBed benchmark.

\section*{Acknowledgments}
This work has been supported by the Key Research and Development Program of China under Grant 2021ZD0110700; The National Natural Science Foundation of China under Grant 62106191;

\appendix

\section*{Prerequisite Definitions and Lemmas}

\begin{definition}{(Subgaussian)}
    A random variable $X$ is $\sigma$-subgaussian if for any $\rho \in \mathbb{R}$, $\E[\exp(\rho(X - \E[X]))] \le \exp(\rho^2\sigma^2/2)$.
\end{definition}

\begin{definition}{(Kullback-Leibler Divergence)}
    Let $P$ and $Q$ be probability measures on the same space $\mathcal{X}$, the KL divergence from $P$ to $Q$ is defined as $\kl{P}{Q} \triangleq \int_{\mathcal{X}} p(x) \log(p(x)/q(x)) \dif x$.
\end{definition}

\begin{definition}{(Mutual Information)}
    Let $(X,Y)$ be a pair of random variables with values over the space $\mathcal{X} \times \mathcal{Y}$. Let their joint distribution be $P_{X,Y}$ and the marginal distributions be $P_X$ and $P_Y$ respectively, the mutual information between $X$ and $Y$ is defined as $I(X;Y) = \kl{P_{X,Y}}{P_X P_Y}$.
\end{definition}

\begin{definition}{(Wasserstein Distance)}
    Let $c(\cdot, \cdot)$ be a metric and let $P$ and $Q$ be probability measures on $\mathcal{X}$. Denote $\Gamma(P,Q)$ as the set of all couplings of $P$ and $Q$ (i.e. the set of all joint distributions on $\mathcal{X} \times \mathcal{X}$ with two marginals being $P$ and $Q$), then the Wasserstein distance of order $p$ between $P$ and $Q$ is defined as $\W_p(P,Q) \triangleq \prn*{\inf_{\gamma \in \Gamma(P,Q)} \int_{\mathcal{X} \times \mathcal{X}} c(x,x')^p \dif \gamma(x,x')}^{1/p}$.
\end{definition}

Unless otherwise noted, we use $\log$ to denote the logarithmic function with base $e$, and use $\W(\cdot, \cdot)$ to denote the Wasserstein distance of order $1$.

\begin{definition}{(Total Variation)}
    The total variation between two probability measures $P$ and $Q$ is $\TV(P,Q) \triangleq \sup_E \abs{P(E) - Q(E)}$, where the supremum is over all measurable set $E$.
\end{definition}

\begin{lemma}{(Lemma 1 in \cite{harutyunyan2021information})} \label{lm:kl_con_pair}
    Let $(X,Y)$ be a pair of random variables with joint distribution $P_{X,Y}$ and let $\bar{Y}$ be an independent copy of $Y$. If $f(x,y)$ is a measurable function such that $E_{X,Y}[f(X,Y)]$ exists and $f(X,\bar{Y})$ is $\sigma$-subgaussian, then
    \begin{equation*}
        \abs*{\E_{X,Y}[f(X,Y)] - \E_{X,\bar{Y}}[f(X,\bar{Y})]} \le \sqrt{2\sigma^2 I(X;Y)}.
    \end{equation*}
    Furthermore, if $f(x,Y)$ is $\sigma$-subgaussian for each $x$ and the expectation below exists, then
    \begin{equation*}
        \E_{X,Y}\brk*{\prn*{f(X,Y) - \E_{\bar{Y}}[f(X,\bar{Y})]}^2} \le 4\sigma^2(I(X;Y) + \log 3),
    \end{equation*}
    and for any $\epsilon > 0$, we have
    \begin{equation*}
        \Pr\brc*{\abs*{f(X,Y) - \E_{\bar{Y}}[f(X,\bar{Y})]} \ge \epsilon} \le \frac{4\sigma^2(I(X;Y) + \log 3)}{\epsilon^2}.
    \end{equation*}
\end{lemma}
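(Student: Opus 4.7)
The plan is to derive all three inequalities from the Donsker-Varadhan variational representation of the KL divergence, which states that $\kl{P}{Q} \ge \E_P[h] - \log \E_Q[\exp(h)]$ for any measurable $h$ with finite exponential moment under $Q$. Taking $P = P_{X,Y}$ and $Q = P_X \otimes P_Y$ (the joint law of the decoupled pair $(X, \bar{Y})$), one has $\kl{P}{Q} = I(X;Y)$, so each of the three bounds reduces to controlling $\log \E_Q[\exp(h)]$ for an appropriate choice of $h$.

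For the first inequality, I would plug in $h = \lambda(f - \E_Q[f])$ for a free parameter $\lambda \in \mathbb{R}$. Subgaussianity of $f(X,\bar{Y})$ under $Q$ then gives $\log \E_Q[\exp(h)] \le \lambda^2\sigma^2/2$, so Donsker-Varadhan reads $\lambda(\E_P[f] - \E_Q[f]) \le I(X;Y) + \lambda^2\sigma^2/2$. Optimizing over both signs of $\lambda$ with $|\lambda| = \sqrt{2I(X;Y)/\sigma^2}$ yields $|\E_P[f] - \E_Q[f]| \le \sqrt{2\sigma^2 I(X;Y)}$.

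For the second inequality, the centerpiece is an exponential-moment estimate for the squared deviation $Z := (f(X,Y) - \E_{\bar{Y}}[f(X,\bar{Y})])^2$ under $Q$. For each fixed $x$, the variable $f(x,Y) - \E_{\bar{Y}}[f(x,\bar{Y})]$ is mean-zero and $\sigma$-subgaussian, so the tail bound $\Pr(|W| > t) \le 2\exp(-t^2/(2\sigma^2))$ combined with the layer-cake identity $\E[\exp(\lambda W^2)] = 1 + \int_1^\infty \Pr(\lambda W^2 > \log t)\,dt$ gives, for any $\lambda < 1/(2\sigma^2)$, the closed-form bound $\E_{P_Y}[\exp(\lambda W^2) \mid X = x] \le 1 + 4\lambda\sigma^2/(1 - 2\lambda\sigma^2)$. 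At the sweet spot $\lambda = 1/(4\sigma^2)$ this evaluates exactly to $3$, uniformly in $x$, so $\E_Q[\exp(Z/(4\sigma^2))] \le 3$. Feeding $h = Z/(4\sigma^2)$ into Donsker-Varadhan now gives $\E_P[Z]/(4\sigma^2) \le I(X;Y) + \log 3$, which rearranges to the variance bound.

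The third inequality is an immediate corollary of the second via Markov's inequality: $\Pr\{|f(X,Y) - \E_{\bar{Y}}[f(X,\bar{Y})]| \ge \epsilon\} = \Pr\{Z \ge \epsilon^2\} \le \E_P[Z]/\epsilon^2 \le 4\sigma^2(I(X;Y) + \log 3)/\epsilon^2$. The main technical obstacle is the precise exponential-moment calculation in the second step — selecting the scaling $\lambda = 1/(4\sigma^2)$ that keeps the tail integral convergent while producing the clean constant $\log 3$; once that bound is in hand, the remaining steps are routine applications of Donsker-Varadhan and Markov.
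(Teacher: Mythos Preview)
Your proposal is correct and matches the paper's approach. The paper does not prove this lemma directly (it is cited from \cite{harutyunyan2021information}), but it proves the closely related Lemma~\ref{lm:kl_con} by exactly the same Donsker--Varadhan scheme you describe: for the first inequality, apply the variational formula with $h=\lambda f$ and optimize over $\lambda$; for the second, apply it with $h=\lambda(f-\E f)^2$ and invoke a sub-exponential bound on the squared subgaussian variable. The only cosmetic difference is in that sub-exponential step: the paper uses Lemma~\ref{lm:sg_sq}, which gives $\E[e^{\lambda X^2}]\le 1+8\lambda\sigma^2$ for $\lambda\in[0,1/4\sigma^2)$ and then sends $\lambda\to 1/(4\sigma^2)$ to get the constant $3$, whereas you obtain the sharper (but equivalent at the endpoint) bound $1+4\lambda\sigma^2/(1-2\lambda\sigma^2)$ via the layer-cake formula and the subgaussian tail. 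Both routes land on the same $\log 3$, and the Markov step for the third inequality is identical.
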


\begin{lemma}{(Lemma 2 in \cite{harutyunyan2021information})} \label{lm:sg_sq}
    Let $X$ be $\sigma$-subgaussian and $\E[X] = 0$, then for any $\lambda \in [0, 1/4\sigma^2)$:
    \begin{equation*}
        \E_X \brk*{e^{\lambda X^2}} \le 1 + 8\lambda \sigma^2.
    \end{equation*}
\end{lemma}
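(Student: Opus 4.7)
The plan is to reduce the quadratic exponent to a linear one via the Gaussian mixing identity
\[
e^{\lambda x^{2}} \;=\; \E_{G \sim N(0,1)}\!\left[e^{\sqrt{2\lambda}\,x\,G}\right],
\]
which is just a rewriting of the moment generating function of $N(0,1)$. Applying this identity inside the expectation and invoking Fubini yields
\[
\E_X\!\left[e^{\lambda X^{2}}\right] \;=\; \E_G\,\E_X\!\left[e^{\sqrt{2\lambda}\,G\,X}\right].
\]
Treating $G$ as a fixed constant in the inner expectation, I can apply the subgaussian definition with parameter $\rho = \sqrt{2\lambda}\,G$, using $\E[X]=0$, to obtain $\E_X[e^{\sqrt{2\lambda}GX}] \le e^{\lambda\sigma^{2}G^{2}}$. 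This eliminates $X$ and leaves a pure Gaussian integral in $G$.

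Next I would evaluate the remaining one-dimensional Gaussian integral exactly: for $\lambda < 1/(2\sigma^{2})$,
\[
\E_G\!\left[e^{\lambda \sigma^{2} G^{2}}\right] \;=\; (1 - 2\lambda\sigma^{2})^{-1/2}.
\]
The restriction $\lambda \in [0, 1/(4\sigma^{2}))$ in the lemma is stronger than what is needed for this integral to converge, so the next step is to check that the upper bound tightens to the affine form stated in the lemma.

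The final, purely algebraic, step is to verify that $(1-2\lambda\sigma^{2})^{-1/2} \le 1 + 8\lambda\sigma^{2}$ on the given interval. Writing $u = 2\lambda\sigma^{2} \in [0, 1/2)$, the claim becomes $(1-u)^{-1/2} \le 1 + 4u$; squaring and clearing the denominator turns this into $1 \le (1-u)(1+4u)^{2}$, which expands to $7u + 8u^{2} - 16u^{3} \ge 0$, i.e.\ $u(7 + 8u - 16u^{2}) \ge 0$. Since $16u^{2} < 4$ on $[0,1/2)$, the quadratic factor is bounded below by $3$, so the inequality holds.

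The main obstacle is really a choice of technique rather than a hard estimate: one must recognize that subgaussianity interacts badly with $e^{\lambda X^{2}}$ directly (because the tail bound only controls linear exponential moments), and so the Gaussian-mixing trick is essential. Once that step is in place, everything else is a mechanical Gaussian integral plus a short elementary inequality; one should just be a bit careful that the radius $1/(4\sigma^{2})$ (rather than the convergence radius $1/(2\sigma^{2})$) is exactly what makes the clean linear upper bound $1+8\lambda\sigma^{2}$ valid.
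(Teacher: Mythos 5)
Your proof is correct. Note that the paper itself does not prove this statement: it imports it verbatim as Lemma~2 of \cite{harutyunyan2021information}, so there is no in-paper argument to match; the original reference establishes the bound by the more standard route of expanding $\E[e^{\lambda X^2}] = 1 + \sum_{k\ge 1} \lambda^k \E[X^{2k}]/k!$, controlling the even moments of a subgaussian variable by $\E[X^{2k}] \le 2\,k!\,(2\sigma^2)^k$, and summing the resulting geometric series, which is exactly where the radius $1/(4\sigma^2)$ and the constant $8$ come from. Your Gaussian-decoupling argument, $e^{\lambda x^2} = \E_{G\sim N(0,1)}[e^{\sqrt{2\lambda}\,xG}]$ followed by Tonelli (legitimate since the integrand is nonnegative), the subgaussian MGF bound at $\rho=\sqrt{2\lambda}\,G$, and the exact Gaussian integral $(1-2\lambda\sigma^2)^{-1/2}$, is a genuinely different and equally valid route; in fact it yields the strictly sharper intermediate bound $(1-2\lambda\sigma^2)^{-1/2}$, from which the stated affine bound follows by your elementary check that $u(7+8u-16u^2)\ge 0$ on $u=2\lambda\sigma^2\in[0,1/2)$. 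What the moment-series proof buys is that it needs no auxiliary randomization and transfers directly to settings where one only has moment bounds; what yours buys is a cleaner closed-form estimate and a one-line reduction of the quadratic exponent to the linear one where subgaussianity actually applies, which is precisely the obstacle you correctly identified.
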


\begin{lemma}{(Donsker-Varadhan formula)} \label{lm:dv_ineq} 
    Let $P$ and $Q$ be probability measures defined on the same measurable space, where $P$ is absolutely continuous with respect to $Q$. Then
    \begin{equation*}
        \kl{P}{Q} = \sup_X \brc*{\E_P[X] - \log \E_Q[e^X]},
    \end{equation*}
    where $X$ is any random variable such that $e^X$ is $Q$-integrable and $\E_P[X]$ exists.
\end{lemma}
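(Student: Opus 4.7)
The plan is to establish the identity by proving two inequalities: the variational upper bound $\E_P[X] - \log \E_Q[e^X] \le \kl{P}{Q}$ for every admissible $X$, and then exhibiting a specific $X^*$ that attains equality. The standard tool is a change-of-measure (exponential tilting) argument combined with the non-negativity of KL divergence.

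First I would introduce, for a fixed admissible $X$, the tilted probability measure $Q_X$ defined by its Radon--Nikodym derivative $\frac{dQ_X}{dQ} = \frac{e^X}{\E_Q[e^X]}$. The integrability condition on $e^X$ ensures this is a bona fide probability measure. A direct computation then yields
\begin{equation*}
    \kl{P}{Q_X} = \E_P\!\left[\log \frac{dP}{dQ} - X + \log \E_Q[e^X]\right] = \kl{P}{Q} - \E_P[X] + \log \E_Q[e^X],
\end{equation*}
where I use $P \ll Q$ to write $\log(dP/dQ_X) = \log(dP/dQ) - X + \log \E_Q[e^X]$ $P$-almost surely. Since $\kl{P}{Q_X} \ge 0$ by Gibbs' inequality (which follows from Jensen applied to $-\log$), rearranging gives $\E_P[X] - \log \E_Q[e^X] \le \kl{P}{Q}$, proving the $\le$ direction of the supremum.

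Next I would show the supremum is attained (or at least approached) by choosing $X^* = \log \frac{dP}{dQ}$. A direct substitution gives $\E_P[X^*] = \kl{P}{Q}$ by definition, while $\E_Q[e^{X^*}] = \E_Q\!\left[\frac{dP}{dQ}\right] = 1$, so $\log \E_Q[e^{X^*}] = 0$. Hence $\E_P[X^*] - \log \E_Q[e^{X^*}] = \kl{P}{Q}$, which matches the upper bound. Combined with the first inequality this establishes the equality claimed in the lemma.

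The main subtlety — rather than a deep obstacle — is a measure-theoretic one: if $\kl{P}{Q} = +\infty$, then $X^* = \log(dP/dQ)$ may not satisfy the integrability hypotheses literally, so one must approach the supremum by a truncation argument, e.g. taking $X_n^* = \log(dP/dQ) \wedge n$ and letting $n \to \infty$, invoking monotone/dominated convergence to conclude $\sup_X \{\E_P[X] - \log \E_Q[e^X]\} = +\infty = \kl{P}{Q}$. The absolute continuity assumption $P \ll Q$ is essential throughout; without it, $\kl{P}{Q} = +\infty$ by convention and the truncation argument still recovers the identity. All remaining steps are routine manipulations of the Radon--Nikodym derivative.
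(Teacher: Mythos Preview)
Your proof is correct and follows the standard route to the Donsker--Varadhan identity: the tilted-measure argument for the upper bound and the explicit maximizer $X^* = \log(dP/dQ)$ for attainment, with the truncation remark handling the $\kl{P}{Q} = +\infty$ case. There is nothing to compare against, however, because the paper does not prove this lemma; it is listed among the prerequisite lemmas and invoked as a known result (it is used, for instance, in the proof of Lemma~\ref{lm:kl_con}). So your proposal supplies a proof where the paper simply cites the formula.
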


\begin{lemma} \label{lm:kl_con}
    Let $P$, and $Q$ be probability measures defined on the same measurable space. Let $X \sim P$ and $X' \sim Q$. If $f(X)$ is $\sigma$-subgaussian w.r.t $X$ and the following expectations exists, then
    \begin{gather*}
        \abs*{\E_{X'}[f(X')] - \E_X[f(X)]} \le \sqrt{2\sigma^2 \kl{Q}{P}}, \\
        \E_{X'}\brk*{\prn*{f(X') - \E_X[f(X)]}^2} \le 4\sigma^2 (\kl{Q}{P} + \log 3).
    \end{gather*}
    Furthermore, by combining the results above and Markov's inequality, we have that for any $\epsilon > 0$:
    \begin{equation*}
        \Pr\brc*{\abs{f(X') - \E_X[f(X)]} \ge \epsilon} \le \frac{4\sigma^2}{\epsilon^2} \prn*{\kl{Q}{P} + \log 3}.
    \end{equation*}
\end{lemma}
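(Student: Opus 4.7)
The plan is to derive this change-of-measure lemma directly from the Donsker-Varadhan variational formula (Lemma~\ref{lm:dv_ineq}), applied with two carefully chosen test functions: a linear function for the first bound and a quadratic function for the second. The proof closely parallels the unconditional/mutual-information version in Lemma~\ref{lm:kl_con_pair}, with $\kl{Q}{P}$ playing the role previously played by $I(X;Y)$.

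For the first inequality, I would apply Donsker-Varadhan with the test function $g(x) = \lambda(f(x) - \E_X[f(X)])$ for an arbitrary $\lambda \in \mathbb{R}$, which gives
\begin{equation*}
\lambda\bigl(\E_{X'}[f(X')] - \E_X[f(X)]\bigr) \le \kl{Q}{P} + \log \E_X\!\left[e^{\lambda(f(X) - \E_X[f(X)])}\right] \le \kl{Q}{P} + \frac{\lambda^2 \sigma^2}{2},
\end{equation*}
where the second step invokes the $\sigma$-subgaussianity of $f(X)$ under $P$. Minimizing the upper bound on $\E_{X'}[f(X')] - \E_X[f(X)] \le \kl{Q}{P}/\lambda + \lambda \sigma^2/2$ over $\lambda > 0$ at $\lambda^\star = \sqrt{2\kl{Q}{P}/\sigma^2}$ yields the one-sided bound $\sqrt{2\sigma^2 \kl{Q}{P}}$. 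Repeating the argument with $-f$ in place of $f$ (which is also $\sigma$-subgaussian) gives the matching lower bound, establishing the two-sided statement.

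For the second inequality, I would apply Donsker-Varadhan with the nonnegative test function $g(x) = \lambda(f(x) - \E_X[f(X)])^2$ for $\lambda \in (0, 1/(4\sigma^2))$, giving
\begin{equation*}
\lambda \,\E_{X'}\!\left[(f(X') - \E_X[f(X)])^2\right] \le \kl{Q}{P} + \log \E_X\!\left[e^{\lambda(f(X) - \E_X[f(X)])^2}\right].
\end{equation*}
Since $f(X) - \E_X[f(X)]$ is centered and $\sigma$-subgaussian, Lemma~\ref{lm:sg_sq} bounds the log-moment-generating function on the right by $\log(1 + 8\lambda\sigma^2)$. Taking $\lambda \uparrow 1/(4\sigma^2)$ (so that $1 + 8\lambda\sigma^2 \to 3$) and passing to the limit by monotone convergence yields $\E_{X'}[(f(X') - \E_X[f(X)])^2] \le 4\sigma^2(\kl{Q}{P} + \log 3)$. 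The concentration bound then follows immediately from Markov's inequality applied to the nonnegative random variable $(f(X') - \E_X[f(X)])^2$, noting that $\{|f(X') - \E_X[f(X)]| \ge \epsilon\} = \{(f(X') - \E_X[f(X)])^2 \ge \epsilon^2\}$.

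The only subtle point is the boundary treatment $\lambda \to 1/(4\sigma^2)$ in the quadratic step, since Lemma~\ref{lm:sg_sq} is stated on the open interval; this is handled by continuity of both sides in $\lambda$ together with monotone convergence of $e^{\lambda(f - \E f)^2}$. All other steps are routine: the linear case reduces to optimizing a quadratic in $\lambda$, and the probability statement is one line from Markov. No assumption beyond subgaussianity and the existence of the stated expectations is needed.
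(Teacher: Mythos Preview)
Your proposal is correct and follows essentially the same approach as the paper: apply Donsker--Varadhan with a linear test function and use subgaussianity for the first bound, then with a quadratic test function and Lemma~\ref{lm:sg_sq} for the second, and finish with Markov's inequality. The only cosmetic difference is that for the first inequality the paper optimizes over all $\lambda \in \mathbb{R}$ in one stroke (obtaining $\kl{Q}{P} \ge \frac{1}{2\sigma^2}(\E_{X'}[f(X')]-\E_X[f(X)])^2$ directly, which handles both signs), whereas you optimize over $\lambda>0$ and then repeat with $-f$; both are standard and equivalent.
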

\begin{proof}
    Let $\lambda \in \mathbb{R}$ be any non-zero constant, then by the subgaussian property of $f(X)$:
    \begin{align*}
        \log \E_X\brk*{e^{\lambda (f(X) - \E_X[f(X)])}} &\le \frac{\lambda^2 \sigma^2}{2}, \\
        \log \E_X\brk*{e^{\lambda f(X)}} - \lambda \E_X[f(X)] &\le \frac{\lambda^2 \sigma^2}{2}.
    \end{align*}
    By applying Lemma \ref{lm:dv_ineq} with $X = \lambda f(X)$ we have
    \begin{align*}
        \kl{Q}{P} &\ge \sup_\lambda \brc*{\E_{X'}[\lambda f(X')] - \log \E_X\brk*{e^{\lambda f(X)}}} \\
        &\ge \sup_\lambda \brc*{\E_{X'}[\lambda f(X')] - \lambda \E_X[f(X)] - \frac{\lambda^2 \sigma^2}{2}} \\
        &= \frac{1}{2\sigma^2} \prn*{\E_{X'}[f(X')] - \E_X[f(X)]}^2,
    \end{align*}
    where the supremum is taken by setting $\lambda = \frac{1}{\sigma^2} (\E_{X'}[f(X')] - \E_X[f(X)])$. This completes the proof of the first inequality.

    To prove the second inequality, let $g(x) = (f(x) - \E_X[f(X)])^2$ and $\lambda \in [0, 1/4\sigma^2)$. Apply Lemma \ref{lm:dv_ineq} again with $X = \lambda g(X)$, we have
    \begin{align*}
        \kl{Q}{P} &\ge \sup_\lambda \brc*{\E_{X'}[\lambda g(X')] - \log \E_X\brk*{e^{\lambda g(X)}}} \\
        &= \sup_\lambda \brc*{\E_{X'}\brk*{\lambda \prn*{f(X') - \E_X[f(X)]}^2} - \log \E_X\brk*{e^{\lambda \prn*{f(X) - \E_X[f(X)]}^2}}} \\
        &\ge \sup_\lambda \brc*{\E_{X'}\brk*{\lambda \prn*{f(X') - \E_X[f(X)]}^2} - \log(1 + 8\lambda \sigma^2)} \\
        &\ge \frac{1}{4\sigma^2} \E_{X'}\brk*{\prn*{f(X') - \E_X[f(X)]}^2} - \log 3,
    \end{align*}
    where the second inequality follows by applying Lemma \ref{lm:sg_sq} and the last inequality follows by taking $\lambda \rightarrow \frac{1}{4\sigma^2}$. This finishes the proof of the second inequality.
    
    Furthermore, by applying Markov's inequality, we can get:
    \begin{align*}
        \Pr\brc*{\abs{f(X') - \E_X[f(X)]} \ge \epsilon} &= \Pr\brc*{\prn*{f(X') - \E_X[f(X)]}^2 \ge \epsilon^2} \\
        &\le \frac{1}{\epsilon^2} \E_{X'}\brk*{\prn*{f(X') - \E_X[f(X)]}^2} \\
        &\le \frac{4\sigma^2}{\epsilon^2} \prn*{\kl{Q}{P} + \log 3},
    \end{align*}
    which completes the proof.
\end{proof}

\begin{lemma}{(Kantorovich-Rubinstein Duality)} \label{lm:kr_dual}
    Let $P$ and $Q$ be probability measures defined on the same measurable space $\mathcal{X}$, then
    \begin{equation*}
        \W(P,Q) = \sup_{f \in Lip_1} \brc*{\int_\mathcal{X} f \dif P - \int_\mathcal{X} f \dif Q},
    \end{equation*}
    where $Lip_1$ denotes the set of $1$-Lipschitz functions in the metric $c$, i.e. $\abs{f(x) - f(x')} \le c(x,x')$ for any $f \in Lip_1$ and $x, x' \in \mathcal{X}$.
\end{lemma}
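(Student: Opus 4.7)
The plan is to derive this identity from the general Kantorovich duality for optimal transport and then simplify the dual using the metric structure of $c$. First, I would rewrite the Wasserstein distance as the linear program
\begin{equation*}
    \W(P,Q) = \inf_{\gamma \in \Gamma(P,Q)} \int_{\mathcal{X} \times \mathcal{X}} c(x,x') \dif \gamma(x,x'),
\end{equation*}
whose formal dual is the supremum of $\int f \dif P + \int g \dif Q$ over bounded measurable pairs $(f, g)$ satisfying the pointwise constraint $f(x) + g(x') \le c(x,x')$. Strong duality (primal equal to dual) is the Kantorovich--Rubinstein theorem; it can be obtained either via a minimax argument (Sion's theorem on the Lagrangian, with coupling measures as one variable and $(f,g)$ as the other) or via Hahn--Banach separation applied in the space of signed measures on $\mathcal{X} \times \mathcal{X}$.

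Second, I would reduce the dual supremum to one over a single function via the $c$-transform. For any admissible pair $(f,g)$, replacing $g$ by $g^c(x') := \inf_x \{c(x,x') - f(x)\}$ preserves the pointwise constraint and weakly increases $\int g \dif Q$, so one may restrict attention to pairs of the form $(f, f^c)$. Next I would invoke the assumption that $c$ is a metric: given any bounded $f$, the function $f^c$ is $1$-Lipschitz with respect to $c$ by the triangle inequality for $c$, and moreover applying the $c$-transform a second time yields $(f^c)^c \ge f$, so the pair $(f^c, -f^c)$ is admissible and dominates the original pair in the objective. Conversely, for any $1$-Lipschitz $h$, the pair $(h, -h)$ is admissible and yields the value $\int h \dif P - \int h \dif Q$. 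Therefore the dual value equals $\sup_{h \in Lip_1} \{\int h \dif P - \int h \dif Q\}$, which combined with strong duality gives the claimed identity.

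The main obstacle is the strong duality step, since it requires topological assumptions on $\mathcal{X}$ (a Polish space suffices) and careful measurability of the $c$-transform. A cleaner route, if one wishes to avoid invoking Sion's theorem directly, is to prove the identity first for finitely supported measures via finite-dimensional LP duality, and then extend to arbitrary Borel probability measures by weak approximation: tightness controls the Wasserstein side, while uniformly bounded $1$-Lipschitz test functions control the dual side. The second step, reducing to $1$-Lipschitz functions via the $c$-transform, is essentially mechanical once strong duality is in hand.
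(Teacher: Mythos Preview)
Your proof sketch is correct and follows the standard textbook argument (general Kantorovich duality, then reduction via the $c$-transform using the metric properties of $c$). However, there is nothing to compare against: the paper states Lemma~\ref{lm:kr_dual} as a prerequisite result without proof, treating Kantorovich--Rubinstein duality as a known fact from optimal transport (cf.\ Villani). So your proposal supplies a proof where the paper simply cites the result.
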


\begin{lemma}{(Pinsker's Inequality)} \label{lm:pinsker}
    Let $P$ and $Q$ be probability measures defined on the same space, then $\TV(P, Q) \le \sqrt{\frac{1}{2} \kl{Q}{P}}$.
\end{lemma}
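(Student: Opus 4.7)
The plan is to reduce Pinsker's inequality to the two-point (Bernoulli) case via a coarse-graining argument, and then verify the Bernoulli version by a short one-dimensional calculus argument.

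First, I would fix a common dominating measure $\mu$ for $P$ and $Q$ with densities $p = dP/d\mu$ and $q = dQ/d\mu$, and set $A = \{x : p(x) \ge q(x)\}$. A standard characterization of total variation gives $\TV(P,Q) = P(A) - Q(A)$: for any measurable set $E$, $\abs{P(E) - Q(E)} \le P(A) - Q(A)$, with equality at $E = A$. This reduces the problem to bounding the single scalar $P(A) - Q(A)$.

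Next I would coarse-grain by the indicator $\mathbf{1}_A$, which pushes $P$ and $Q$ forward to Bernoulli distributions $\tilde{P} = \mathrm{Bern}(P(A))$ and $\tilde{Q} = \mathrm{Bern}(Q(A))$. By the data processing inequality for KL divergence (a direct consequence of the log-sum inequality applied to $A$ and $A^c$ separately), $\kl{\tilde{Q}}{\tilde{P}} \le \kl{Q}{P}$. Since $\TV(\tilde{P},\tilde{Q}) = P(A) - Q(A) = \TV(P,Q)$, it suffices to prove the scalar inequality $(a - b)^2 \le \tfrac{1}{2}\bigl(b \log(b/a) + (1-b)\log((1-b)/(1-a))\bigr)$ for all $a, b \in (0,1)$.

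Finally, fixing $b \in (0,1)$, I would define $h(a) = b\log(b/a) + (1-b)\log((1-b)/(1-a)) - 2(a-b)^2$ on $(0,1)$. A direct calculation yields $h(b) = 0$ and $h'(a) = (a - b)\bigl[\tfrac{1}{a(1-a)} - 4\bigr]$. Since $a(1-a) \le \tfrac{1}{4}$ on $(0,1)$, the bracketed factor is nonnegative, so $h'(a)$ shares the sign of $a - b$. Hence $h$ attains its minimum at $a = b$ with value $0$, giving $h \ge 0$ throughout $(0,1)$.

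The main obstacle I anticipate is the data-processing step: one must handle boundary cases where $P(A) \in \{0, 1\}$ (when the inequality is either trivial or the right-hand side is $+\infty$) and apply the log-sum inequality carefully when $Q$ fails to be absolutely continuous with respect to $P$. Once that reduction is secured, the Bernoulli calculation is elementary, exploiting the identity $a(1-a) \le \tfrac{1}{4}$ in place of a second-derivative analysis.
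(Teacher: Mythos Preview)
Your proof is correct and follows the classical route to Pinsker's inequality: coarse-grain to the Bernoulli case via the optimal set $A=\{p\ge q\}$, invoke data processing for KL, and finish with the one-variable calculus computation $h'(a)=(a-b)\bigl[\tfrac{1}{a(1-a)}-4\bigr]\ge 0$ on the appropriate side of $b$. The computation checks out, and your remarks about boundary cases are the right caveats.

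There is nothing to compare against in the paper: Lemma~\ref{lm:pinsker} is stated without proof as a standard prerequisite and is simply invoked where needed (e.g.\ in the reduction from Wasserstein distance to mutual information in~\eqref{eq:wass_reduce} and in Theorem~\ref{thm:pp_wass_con}). Your argument supplies exactly what the paper leaves to the literature.
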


\begin{proposition}
    For any constant $\lambda \in (0, 1)$, we have
    \begin{align*}
        \Pr\{\abs{L_s(W) - L_t(W)} \ge \epsilon\} &\le \Pr\{\abs{L_s(W) - L(W)} \ge \lambda\epsilon\} \\
        &\quad + \Pr\{\abs{L_t(W) - L(W)} \ge (1-\lambda)\epsilon\}.
    \end{align*}
\end{proposition}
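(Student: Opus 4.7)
The plan is to prove this by a triangle-inequality decomposition followed by a union bound, using the average-case risk $L(W)$ as the intermediate quantity. The statement is essentially the probabilistic version of the splitting trick $|a-b| \le |a-c| + |c-b|$, and the whole argument is deterministic set-containment plus one application of subadditivity of $\Pr$.

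First, I would fix an outcome and apply the triangle inequality pointwise: for every realization of $W$, $D_s$, $D_t$,
\begin{equation*}
    \abs{L_s(W) - L_t(W)} \le \abs{L_s(W) - L(W)} + \abs{L_t(W) - L(W)}.
\end{equation*}
Next, I would argue by contrapositive that if both $\abs{L_s(W) - L(W)} < \lambda\epsilon$ and $\abs{L_t(W) - L(W)} < (1-\lambda)\epsilon$, then the right-hand side above is strictly less than $\lambda\epsilon + (1-\lambda)\epsilon = \epsilon$, and consequently $\abs{L_s(W) - L_t(W)} < \epsilon$. Contraposing yields the event inclusion
\begin{equation*}
    \{\abs{L_s(W) - L_t(W)} \ge \epsilon\} \subseteq \{\abs{L_s(W) - L(W)} \ge \lambda\epsilon\} \cup \{\abs{L_t(W) - L(W)} \ge (1-\lambda)\epsilon\}.
\end{equation*}

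Finally, I would apply monotonicity and subadditivity of probability (the union bound) to the display above, which immediately gives the desired inequality. No independence or distributional assumption on $W$, $D_s$, or $D_t$ is needed for this step, so the claim holds verbatim for any $\lambda \in (0,1)$.

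There is essentially no obstacle here; the only thing to be careful about is that the split of $\epsilon$ into $\lambda\epsilon$ and $(1-\lambda)\epsilon$ is tight in the contrapositive argument (strict versus non-strict inequalities), so I would state the contrapositive using strict inequalities on the small side and non-strict on the exceedance side to keep the event inclusion clean. Once that bookkeeping is fixed, the proof is two lines: triangle inequality plus union bound.
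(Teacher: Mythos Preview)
Your proposal is correct and follows essentially the same route as the paper's proof: the triangle inequality (which the paper uses implicitly), the contrapositive to obtain the event inclusion, and then Boole's inequality. Your care with strict versus non-strict inequalities in the contrapositive is actually slightly more precise than the paper's version.
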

\begin{proof}
    Notice that $\abs{L_s(W) - L(W)} \le \lambda\epsilon$ and $\abs{L_t(W) - L(W)} \le (1-\lambda)\epsilon$ together implies $\abs{L_s(W) - L_t(W)} \le \epsilon$, we then have
    \begin{equation*}
        \Pr\{\abs{L_s(W) - L_t(W)} \le \epsilon\} \ge \Pr\{\abs{L_s(W) - L(W)} \le \lambda\epsilon \bigcap \abs{L_t(W) - L(W)} \le (1-\lambda)\epsilon\}.
    \end{equation*}
    This implies that
    \begin{equation*}
        \Pr\{\abs{L_s(W) - L_t(W)} \ge \epsilon\} \le \Pr\{\abs{L_s(W) - L(W)} \ge \lambda\epsilon \bigcup \abs{L_t(W) - L(W)} \ge (1-\lambda)\epsilon\}.
    \end{equation*}
    By applying Boole's inequality, we then have
    \begin{align*}
        \Pr\{\abs{L_s(W) - L_t(W)} \ge \epsilon\} &\le \Pr\{\abs{L_s(W) - L(W)} \ge \lambda\epsilon\} \\
        &\quad + \Pr\{\abs{L_t(W) - L(W)} \ge (1-\lambda)\epsilon\}.
    \end{align*}
\end{proof}

\section*{Omitted Proofs in Section \ref{sec:theory}} \label{sec:proof1}

\subsection{Proof of Proposition \ref{prop:concept_shift}}

\begin{restateproposition}{\ref{prop:concept_shift}}
    Let $Q_{R|X}$ and $Q_{Y|R}$ denote the encoder and classifier characterized by $f_\phi$ and $f_\psi$ respectively, and let $Q_{Y|X}$ denote the whole model by $Q_{Y|X} = \int_\mathcal{R} Q_{Y|R} \dif Q_{R|X}$. Then for any domain $d \in \mathcal{D}$:
    \begin{gather*}
        \kl{P_{Y|X,D}}{Q_{Y|X}} \ge I(Y;D|X), \\
        \kl{P_{Y|X,D=d}}{Q_{Y|X}} \le I(X;Y|R,D=d) + \kl{P_{Y|R,D=d}}{Q_{Y|R}}.
    \end{gather*}
\end{restateproposition}
\begin{proof}
    For better clarity, we abbreviate $P_X(X)$ as $P_X$ in the following expectations for any random variable $X$.
    \begin{align*}
        \kl{P_{Y|X,D}}{Q_{Y|X}} &= \E_{D,X,Y} \brk*{\log \frac{P_{Y|X,D}}{Q_{Y|X}}} \\
        &= \E_{D,X,Y} \brk*{\log \frac{P_{Y|X,D}}{P_{Y|X}} \cdot \frac{P_{Y|X}}{Q_{Y|X}}} \\
        &= \E_{D,X,Y} \brk*{\log \frac{P_{Y,D|X}}{P_{Y|X}P_{D|X}}} + \E_{X,Y} \brk*{\log \frac{P_{Y|X}}{Q_{Y|X}}} \\
        &= I(Y;D|X) + \kl{P_{Y|X}}{Q_{Y|X}} \ge I(Y;D|X).
    \end{align*}
    The last inequality is by the positiveness of the KL divergence. It holds with equality if and only if $Q_{Y|X} = P_{Y|X}$.
    
    To prove the second inequality, we apply Jensen's inequality on the concave logarithmic function:
    \begin{align*}
        \kl{P_{Y|X,D=d}}{Q_{Y|X}} &= \E_{X,Y|D=d} \brk*{\log \frac{P_{Y|X,D=d}}{Q_{Y|X}}} \\
        &= \E_{X,Y|D=d} \brk*{\log \frac{P_{Y|X,D=d}}{\E_{R|X=x} [Q_{Y|R}]}} \\
        &\le \E_{X,Y|D=d} \E_{R|X=x} \brk*{\log \frac{P_{Y|X,D=d}}{Q_{Y|R}}} \\
        &= \E_{X,Y,R|D=d} \brk*{\log \frac{P_{Y|X,D=d}}{P_{Y|R,D=d}} \cdot \frac{P_{Y|R,D=d}}{Q_{Y|R}}} \\
        &= \E_{X,Y,R|D=d} \brk*{\log \frac{P_{Y,X|R,D=d}}{P_{Y|R,D=d}P_{X|R,D=d}}} + \E_{Y,R|D=d} \brk*{\log \frac{P_{Y|R,D=d}}{Q_{Y|R}}} \\
        &= I(X;Y|R,D=d) + \kl{P_{Y|R,D=d}}{Q_{Y|R}}.
    \end{align*}
    The only inequality holds with equality when $\Var[Q_{R|X=x}] = 0$ for any $x \in \mathcal{X}$, i.e. $f_\phi$ is deterministic. This completes the proof.
\end{proof}

\subsection{Proof of Theorem \ref{thm:pp_domain}}

\begin{restatetheorem}{\ref{thm:pp_domain}}
    If Assumption \ref{asmp:loss_bounded} holds, then
    \begin{gather*}
        \abs*{\E_{W,D_s}[L_s(W)] - \E_{W}[L(W)]} \le \frac{1}{m} \sum_{i=1}^m \sqrt{\frac{M^2}{2} I(W,D_i)}, \\
        \Pr\brc*{\abs*{L_s(W) - L(W)} \ge \epsilon} \le \frac{M}{m\epsilon\sqrt{2}} \sum_{i=1}^m \sqrt{I(W,D_i)} + \frac{1}{\epsilon}\E_{W,D}\abs{L_D(W) - L(W)},
    \end{gather*}
    where $D \sim \nu$ is independent of $W$.
\end{restatetheorem}
\begin{proof}
    For any $D \in D_s$, let $\bar{D}$ be an independent copy of the marginal distribution of $D$. Since $\ell(\cdot, \cdot) \in [0, M]$, we know that $L_{\bar{D}}(W)$ is $\frac{M}{2}$-subgaussian. Then by setting $X = W$, $Y = D$ and $f(W,D) = L_D(W)$ in Lemma \ref{lm:kl_con_pair}, we have
    \begin{align*}
        \abs{\E_{W,D}[L_D(W)] - \E_W[L(W)]} &= \abs{\E_{W,D}[L_D(W)] - \E_{W,\bar{D}}[L_{\bar{D}}(W)]} \le \sqrt{\frac{M^2}{2} I(W,D)}.
    \end{align*}
    By summing up the inequality above over each training domain, we can get
    \begin{align*}
        \abs{\E_{W,D_s}[L_s(W)] - \E_W[L(W)]} &= \abs*{\frac{1}{m} \E_{W,D_s}\brk*{\sum_{i=1}^m L_{D_i}(W)} - \E_W[L(W)]} \\
        &\le \frac{1}{m} \sum_{i=1}^m \abs*{\E_{W,D_i}[L_{D_i}(W)] - \E_W[L(W)]} \\
        &\le \frac{1}{m} \sum_{i=1}^m \sqrt{\frac{M^2}{2} I(W,D_i)}.
    \end{align*}
    Similarly, for any $D \in D_s$, one can verify that $\abs{L_{\bar{D}}(W) - L(W)} \in [0, M]$ and is thus $\frac{M}{2}$-subgaussian. By applying Lemma \ref{lm:kl_con_pair} with $X = W$, $Y = D$ and $f(W, D) = \abs{L_D(W) - L(W)}$, we obtain
    \begin{align*}
        \E_{W,D}\abs{L_D(W) - L(W)} - \E_{W,\bar{D}}\abs{L_{\bar{D}}(W) - L(W)} \le \sqrt{\frac{M^2}{2} I(W,D)}.
    \end{align*}
    Summing up this inequality over each training domain, we then have
    \begin{align*}
        \E_{W,D_s}\abs{L_s(W) - L(W)} &= \E_{W,D_s}\abs*{\frac{1}{m} \sum_{i=1}^m L_{D_i}(W) - L(W)} \\
        &\le \frac{1}{m} \sum_{i=1}^m \E_{W,D_i}\abs{L_{D_i}(W) - L(W)} \\
        &\le \frac{1}{m} \sum_{i=1}^m \sqrt{\frac{M^2}{2} I(W,D_i)} + \E_{W,\bar{D}}\abs{L_{\bar{D}}(W) - L(W)}.
    \end{align*}
    By applying Markov's inequality, we finally have
    \begin{align*}
        \Pr\brc*{\abs{L_s(W) - L(W)} \ge \epsilon} \le \frac{M}{m\epsilon\sqrt{2}} \sum_{i=1}^m \sqrt{I(W,D_i)} + \frac{1}{\epsilon}\E_{W,\bar{D}}\abs{L_{\bar{D}}(W) - L(W)}.
    \end{align*}
    
    Additionally, by assuming that the training domains are independent, we have
    \begin{align*}
        I(W;D_s) &= I(W;\{D_i\}_{i=1}^m) = I(W;D_1) + I(W;\{D_i\}_{i=2}^m|D_1) \\
        &= I(W;D_1) + I(W;\{D_i\}_{i=2}^m) - I(\{D_i\}_{i=2}^m;D_1) + I(\{D_i\}_{i=2}^m;D_1|W) \\
        &= I(W;D_1) + I(W;\{D_i\}_{i=2}^m) + I(\{D_i\}_{i=2}^m;D_1|W) \\
        &\ge I(W;D_1) + I(W;\{D_i\}_{i=2}^m) \\
        &\ge \cdots \\
        &\ge \sum_{i=1}^m I(W,D_i).
    \end{align*}
\end{proof}

\subsection{Proof of Theorem \ref{thm:pp_domain_wass}}

\begin{restatetheorem}{\ref{thm:pp_domain_wass}}
    Let $W$ be the output of learning algorithm $\mathcal{A}$ under training domains $D_s$. If $\ell(f_w(X),Y)$ is $\beta'$-Lipschitz w.r.t $w$, i.e. $\abs{\ell(f_{w_1}(X),Y) - \ell(f_{w_2}(X),Y)} \le \beta' c(w_1,w_2)$ for any $w_1, w_2 \in \mathcal{W}$, then
    \begin{equation*}
        \abs{\E_{W,D_s}[L_s(W)] - \E_W[L(W)]} \le \frac{\beta'}{m} \sum_{i=1}^m \E_{D_i} [\W(P_{W|D_i}, P_W)].
    \end{equation*}
\end{restatetheorem}
\begin{proof}
    For any $D_i \in D_s$, let $P = P_{W|D_i=d}$, $Q = P_W$ and $f(w) = L_{D_i}(w)$ in Lemma \ref{lm:kr_dual}, then
    \begin{align*}
        \abs{\E_{W,D_s}[L_s(W)] - \E_W[L(W)]} &\le \frac{1}{m} \E_{D_s}\brk*{ \sum_{i=1}^m \abs*{\E_{W|D_i}\brk*{L_{D_i}(W)} - \E_W[L(W)]}} \\
        &= \frac{1}{m} \E_{D_s}\brk*{\sum_{i=1}^m \abs{\E_{W|D_i}[L_{D_i}(W)] - \E_W[L_{D_i}(W)]}} \\
        &\le \frac{1}{m} \E_{D_s}\brk*{\sum_{i=1}^m \beta' \W(P_{W|D_i}, P_W)} \\
        &= \frac{\beta'}{m} \sum_{i=1}^m \E_{D_i} [\W(P_{W|D_i}, P_W)].
    \end{align*}
    When the metric $d$ is discrete, the Wasserstein distance is equal to the total variation. Combining with Lemma \ref{lm:pinsker}, we have the following reductions:
    \begin{align*}
        \E_{D_i}[\W(P_{W|D_i}, P_W)] &= \E_{D_i}[\TV(P_{W|D_i}, P_W)] \\
        &\le \E_{D_i}\brk*{\sqrt{\frac{1}{2} \kl{P_{W|D_i}}{P_W}}} \\
        &\le \sqrt{\frac{1}{2} I(W;D_i)},
    \end{align*}
    where the last inequality follows by applying Jensen's inequality on the concave square root function.
\end{proof}

\subsection{Proof of Theorem \ref{thm:pp_con}}

\begin{restatetheorem}{\ref{thm:pp_con}}
    For any $w \in \mathcal{W}$, $\E_{D_t}[L_t(w)] = L(w)$. Additionally if Assumption \ref{asmp:loss_subgauss} holds, then
    \begin{equation*}
        \Pr\brc*{\abs{L_t(w) - L(w)} \ge \epsilon} \le \frac{\sigma}{\epsilon}\sqrt{2 I(Z;D)}.
    \end{equation*}
\end{restatetheorem}
\begin{proof}
    By the identical marginal distribution of the test domains $\mathcal{D}_t = \{D_k\}_{k=1}^{m'}$, we have
    \begin{align*}
        \E_{D_t}[L_t(w)] &= \frac{1}{m'} \sum_{k=1}^{m'} \E_{D_k}[L_{D_k}(w)] = \frac{1}{m'} \sum_{k=1}^{m'} \E_D[L_D(w)] \\
        &= \E_D[L_D(w)] = L(w).
    \end{align*}
    For any $d \in \mathcal{D}$, by applying Lemma \ref{lm:kl_con} with $P = P_Z$, $Q = P_{Z|D=d}$ and $f(Z) = \ell(f_w(X),Y)$, we can get
    \begin{align*}
        \abs{L_d(w) - L(w)} &= \abs*{\E_{Z|D=d}[\ell(f_w(X),Y)] - \E_Z[\ell(f_w(X),Y]} \\
        &\le \sqrt{2\sigma^2 \kl{P_{Z|D=d}}{P_Z}}.
    \end{align*}
    Taking the expectation over $D \sim \nu$, we can get
    \begin{align*}
        \E_D\abs{L_d(w) - L(w)} &\le \E_D \sqrt{2\sigma^2 \kl{P_{Z|D}}{P_Z}} \\
        &\le \sqrt{2\sigma^2 \E_D[\kl{P_{Z|D}}{P_Z}]} \\
        &= \sigma\sqrt{2 I(Z;D)}.
    \end{align*}
    By summing up the inequality above over each test domain, we can get
    \begin{align*}
        \E_{D_t}\abs{L_t(w) - L(w)} &= \E_{D_t}\abs*{\frac{1}{m'}\sum_{k=1}^{m'} L_{D_k}(w) - L(W)} \\
        &\le \frac{1}{m'}\sum_{k=1}^{m'} \E_{D_k}\abs*{L_{D_k}(w) - L(W)} \\
        &\le \frac{1}{m'}\sum_{k=1}^{m'} \sigma\sqrt{2 I(Z;D_k)} \\
        &= \sigma\sqrt{2 I(Z;D)}.
    \end{align*}
    By applying Markov's inequality, we finally have
    \begin{align*}
        \Pr\brc*{\abs{L_t(w) - L(w)} \ge \epsilon} \le \frac{\sigma}{\epsilon}\sqrt{2 I(Z;D)}.
    \end{align*}
\end{proof}

\subsection{Proof of Theorem \ref{thm:pp_best}}

\begin{restatetheorem}{\ref{thm:pp_best}}
    If Assumption \ref{asmp:loss_bounded} and \ref{asmp:loss_dist} hold, then for any $w = (\phi, \psi) \in \mathcal{W}$,
    \begin{equation*}
        \Pr\brc*{L_t(\psi) - L(\psi) \ge \epsilon} \le \frac{\sigma}{\epsilon} \sqrt{2 I(R;D)} + \frac{2}{\epsilon} L^*.
    \end{equation*}
    where $L^* = \min_{f^*: \mathcal{R} \mapsto \mathcal{Y}} L(f^*)$.
\end{restatetheorem}
\begin{proof}
    For any environment $d \in \mathcal{D}$, classifier $\psi$ and $f^*: \mathcal{R} \mapsto \mathcal{Y}$, denote
    \begin{gather*}
        L_d(\psi, f^*) = \E_{R|D=d}[\ell(f_\psi(R), f^*(R))], \\
        L(\psi, f^*) = \E_R[\ell(f_\psi(R), f^*(R))].
    \end{gather*}
    By setting $P = P_R$, $Q = P_{R|D=d}$ and $f(R) = \ell(f_\psi(R), f^*(R))$ and applying Lemma \ref{lm:kl_con}, we have
    \begin{equation*}
        \abs*{L_d(\psi, f^*) - L(\psi, f^*)} \le \sqrt{2\sigma^2 \kl{P_{R|D=d}}{P_R}}.
    \end{equation*}
    By taking the expectation over $D \sim \nu$, we get
    \begin{align*}
        \E_D \abs*{L_D(\psi, f^*) - L(\psi, f^*)} &\le \E_D\sqrt{2\sigma^2 \kl{P_{R|D}}{P_R}} \\
        &\le \sqrt{2\sigma^2 I(R;D)}.
    \end{align*}
    If Assumption \ref{asmp:loss_dist} holds, then by the symmetry and triangle inequality of $\ell(\cdot, \cdot)$, we have
    \begin{align*}
        L_d(\psi, f^*) &= \E_{R|D=d}[\ell(f_\psi(R), f^*(R))] \\
        &\le \E_{R,Y|D=d}[\ell(f_\psi(R), Y) + \ell(f^*(R), Y)] \\
        &= L_d(\psi) + L_d(f^*). \\
        L(\psi, f^*) &\le L(\psi) + L(f^*).
    \end{align*}
    Similarly, we can prove that
    \begin{align*}
        L_d(\psi, f^*) &= \E_{R|D=d}[\ell(f_\psi(R), f^*(R))] \\
        &\ge \E_{R,Y|D=d}[\ell(f_\psi(R), Y) - \ell(f^*(R), Y)] \\
        &= L_d(\psi) - L_d(f^*). \\
        L(\psi, f^*) &\ge L(\psi) - L(f^*).
    \end{align*}
    Combining the results above, we have
    \begin{align*}
        L_d(\psi) - L(\psi) &\le L_d(\psi, f^*) + L_d(f^*) - L(\psi, f^*) + L(f^*), \\
        L(\psi) - L_d(\psi) &\le L(\psi, f^*) + L(f^*) - L_d(\psi, f^*) + L_d(f^*).
    \end{align*}
    Combining the two inequalities above, we then get
    \begin{equation*}
        \abs*{L_d(\psi) - L(\psi)} \le \abs*{L_d(\psi, f^*) - L(\psi, f^*)} + L_d(f^*) + L(f^*).
    \end{equation*}
    By taking the expectation over $D \sim \nu$, we obtain
    \begin{align*}
        \E_D \abs*{L_D(\psi) - L(\psi)} &\le \E_D \abs*{L_D(\psi, f^*) - L(\psi, f^*)} + \E_D [L_D(f^*) + L(f^*)] \\
        &\le \sqrt{2\sigma^2 I(R;D)} + 2L(f^*).
    \end{align*}
    Summing up the inequality above over each test domain, we can get
    \begin{align*}
        \E_{D_t} \abs*{L_t(\psi) - L(\psi)} &\le \E_{D_t} \abs*{\frac{1}{m'}\sum_{k=1}^{m'} L_{D_k}(\psi) - L(\psi)} \\
        &\le \frac{1}{m'}\sum_{k=1}^{m'} \E_{D_k} \abs*{L_{D_k}(\psi) - L(\psi)} \\
        &\le \frac{1}{m'}\sum_{k=1}^{m'} \prn*{\sqrt{2\sigma^2 I(R;D)} + 2L(f^*)} \\
        &= \sqrt{2\sigma^2 I(R;D)} + 2L(f^*).
    \end{align*}
    By applying Markov's inequality, we finally have
    \begin{align*}
        \Pr\brc*{L_t(\psi) - L(\psi) \ge \epsilon} \le \frac{\sigma}{\epsilon} \sqrt{2 I(R;D)} + \frac{2}{\epsilon} L(f^*).
    \end{align*}
    The proof is complete by taking the minimizer of $\min_{f^*}[L(f^*)]$.
\end{proof}

\subsection{Proof o Proposition \ref{prop:cov_shift_emp}}

\begin{restateproposition}{\ref{prop:cov_shift_emp}}
    Assume that $P_{R,D} \ll P_{R_i,D_i}$ and $P_{R_i,D_i} \ll P_{R,D}$, then
    \begin{equation*}
        \skl{P_{R,D}}{P_{R_i,D_i}} = O(\sqrt{I(W;D_i)}).
    \end{equation*}
\end{restateproposition}
\begin{proof}
    The condition $P_{R,D} \ll P_{R_i,D_i}$ and $P_{R_i,D_i} \ll P_{R,D}$ implies that there exists $B > 1$, such that for any $r \in \mathcal{R}$, $d \in \mathcal{D}$, we have $\frac{P_{R,D}(r,d)}{P_{R_i,D_i}(r,d)} \in [\frac{1}{B}, B]$. Therefore, $\log\frac{P_{R,D}}{P_{R_i,D_i}} \in [-\log(B), \log(B)]$ and is $\log(B)$-subgaussian.

    By applying Lemma \ref{lm:kl_con_pair} with $X = W$, $Y = D_i$ and $f(W, D) = \E_{X|D}[\log\frac{P_{R,D}}{P_{R_i,D_i}}]$, we have
    \begin{align*}
        \abs*{\E_{W,D_i,R_i}\brk*{\log\frac{P_{R,D}(R_i,D_i)}{P_{R_i,D_i}(R_i,D_i)}} - \E_{W,D,R}\brk*{\log\frac{P_{R,D}(R,D)}{P_{R_i,D_i}(R,D)}}} &\le \sqrt{2\log^2(B) I(W;D_i)} \\
        \abs*{\kl{P_{R,D}}{P_{R_i,D_i}} + \kl{P_{R_i,D_i}}{P_{R,D}}} &\le \sqrt{2\log^2(B) I(W;D_i)} \\
        \skl{P_{R,D}}{P_{R_i,D_i}} &\le \log(B)\sqrt{2I(W;D_i)}.
    \end{align*}
\end{proof}

\section*{Omitted Proofs in Section \ref{sec:alg_design}} \label{sec:proof2}

\subsection{Proof of Theorem \ref{thm:mi_bound}}

\begin{restatetheorem}{\ref{thm:mi_bound}}
    Let $G_t^i = -\eta_t g(W_{t-1},B_t^i)$ and $G_t = \sum_{i=1}^m G_t^i$, then
    \begin{equation*}
        I(W_T;D_i) \le \sum_{t=1}^T I(G_t; D_i|W_{t-1}).
    \end{equation*}
    Additionally, if the training domains are independent, then
    \begin{equation*}
        I(W_T;D_i) \le \sum_{t=1}^T I(G_t^i; D_i|W_{t-1}).
    \end{equation*}
\end{restatetheorem}
\begin{proof}
    Noticing the Markov chain relationship $D_i \rightarrow (W_{T-1}, G_T) \rightarrow W_{T-1} + G_T$, then by the data processing inequality
    \begin{align*}
        I(W_T;D_i) &= I\prn*{W_{T-1} + G_T; D_i} \\
        &\le I\prn*{W_{T-1}, G_T; D_i} \\
        &= I(W_{T-1}; D_i) + I\prn*{G_T; D_i| W_{T-1}}.
    \end{align*}
    where the last equality is by the chain rule of conditional mutual information. By applying the reduction steps above recursively, we can get
    \begin{align*}
        I(W_T;D_i) &\le I(W_{T-1}; D_i) + I\prn*{G_T; D_i| W_{T-1}} \\
        &\le I(W_{T-2}; D_i) + I\prn*{G_{T-1}; D_i| W_{T-2}} \\
        &\qquad+ I\prn*{G_T; D_i| W_{T-1}} \\
        &\le \cdots \\
        &\le \sum_{t=1}^T I\prn*{G_t; D_i| W_{t-1}}.
    \end{align*}
    When the training domains are independent, we additionally have
    \begin{align*}
        I(W_T;D_i) &\le I(W_{T-1}; D_i) + I\prn*{G_T; D_i| W_{T-1}} \\
        &\le I(W_{T-1}; D_i) + I\prn*{\{G_T^k\}_{k=1}^m; D_i| W_{T-1}} \\
        &= I(W_{T-1}; D_i) + I\prn*{G_T^i; D_i| W_{T-1}} + I\prn*{\{G_T^k\}_{k=1}^m \setminus G_T^i; D_i| W_{T-1}, G_T^i} \\
        &= I(W_{T-1}; D_i) + I\prn*{G_T^i; D_i| W_{T-1}}.
    \end{align*}
    Then by the same scheme of recursive reduction, we can prove that
    \begin{equation*}
        I(W_T;D_i) \le \sum_{t=1}^T I\prn*{G_t^i; D_i| W_{t-1}}.
    \end{equation*}
\end{proof}



\subsection{Proof of Theorem \ref{thm:indist}}

\begin{restatetheorem}{\ref{thm:indist}}[Formal]
    Let $n$ be the dimensionality of the distribution and $b$ be the number of data points, then
    \begin{itemize}
        \item If $n > b + 1$, then for any sampled data points $s = \{x_i\}_{i=1}^b$, there exists infinite environments $d_1$, $d_2$, $\cdots$ such that $p(S=s|D=d_1) = p(S=s|D=d_2) = \cdots$.
        \item If $n > 2b + 1$, then for any two batch of sampled data points $s_1 = \{x_i^1\}_{i=1}^b$ and $s_2 = \{x_i^2\}_{i=1}^b$, there exists infinite environments $d_1$, $d_2$, $\cdots$ such that for each $j \in [1, \infty)$, $p(S=s_1|D=d_j) = p(S=s_2|D=d_j)$.
    \end{itemize}
\end{restatetheorem}
\begin{proof}
    Without loss of generality, we assume that the data-generating distributions $p(X|D)$ are Gaussian with zero means for simplicity, i.e.
    \begin{equation*}
        p(x|D=d) = \frac{1}{\sqrt{(2\pi)^n \abs{\Sigma_d}}} \exp\prn*{-\frac{1}{2} x^\top \Sigma_d x},
    \end{equation*}
    where $\Sigma_d$ is the corresponding covariance matrix of environment $d$. Let $X \in \mathbb{R}^{n \times b}$ be the data matrix of $S$ such that the $i$-th column of $X$ equals $x_i$, we then have
    \begin{equation*}
        p(S=s|D=d) = \frac{1}{\sqrt{(2\pi)^{bn} \abs{\Sigma_d}^b}} \exp\prn*{-\frac{1}{2} \tr(X^\top \Sigma_d X)}.
    \end{equation*}
    Since the rank of $X$ is at most $b$, one can decompose $\Sigma_d = \Sigma_d^1 + \Sigma_d^2$ with $\rank(\Sigma_d^1) = b$ and $\rank(\Sigma_d^2) = n - b \ge 2$ through eigenvalue decomposition, and let the eigenvector space of $\Sigma_d^1$ cover the column space of $X$. Then we have
    \begin{equation*}
        \tr(X^\top \Sigma_d^1 X) = \tr(X^\top \Sigma_d X), \quad \textrm{and} \quad \tr(X^\top \Sigma_d^2 X) = 0.
    \end{equation*}
    Therefore, one can arbitrarily modify the eigenvector space of $\Sigma_d^2$ as long as keeping it orthogonal to that of $\Sigma_d^1$, without changing the value of $\tr(X^\top \Sigma_d X)$. This finishes the proof of the first part.

    To prove the second part, similarly we decompose $\Sigma_d$ by $\Sigma_d^1 + \Sigma_d^2$ such that $\rank(\Sigma_d^1) = 2b + 1$ and $\rank(\Sigma_d^2) = n - 2b - 1 \ge 1$, and make the eigenvector space of $\Sigma_d^1$ cover the column space of both $X_1$ and $X_2$, where $X_1$ and $X_2$ are the data matrix of $S_1$ and $S_2$ respectively. We then have
    \begin{align*}
        \tr(X_1^\top \Sigma_d^1 X_1) &= \tr(X_1^\top \Sigma_d X_1), \\
        \tr(X_2^\top \Sigma_d^1 X_2) &= \tr(X_2^\top \Sigma_d X_2), \\
        \textrm{and} \quad \tr(X_1^\top \Sigma_d^2 X_1) &= \tr(X_2^\top \Sigma_d^2 X_2) = 0.
    \end{align*}
    Let $\Sigma_d^1 = U_d^\top \Lambda_d U_d$ be the eigenvalue decomposition of $\Sigma_d^1$, where $U_d \in \mathbb{R}^{(2b+1) \times n}$ and $\Lambda_d = \mathrm{diag}(\lambda_1^d, \cdots, \lambda_{2b+1}^d)$. Notice that for any $x \in \mathbb{R}^n$, we have $x^\top \Sigma_d x = (U_d x)^\top \Lambda_d (U_d x) = \sum_{i=1}^{2b+1} (U_d x)_i^2 \lambda_i$. By assuming that $p(S=s_1|D=d) = p(S=s_2|D=d)$, we have the following homogeneous linear equations:
    \begin{align*}
        a_1^1 \lambda_1 + a_1^2 \lambda_2 + \cdots + a_1^{2b+1} \lambda_{2b+1} &= 0, \\
        a_2^1 \lambda_1 + a_2^2 \lambda_2 + \cdots + a_2^{2b+1} \lambda_{2b+1} &= 0, \\
        &\cdots \\
        a_b^1 \lambda_1 + a_b^2 \lambda_2 + \cdots + a_b^{2b+1} \lambda_{2b+1} &= 0,
    \end{align*}
    where $a_i^j = (U_d x_i^1)_j^2 - (U_d x_i^2)_j^2$. Since $2b + 1 > b$, the linear system above has infinite non-zero solutions, which finishes the proof of the second part.
\end{proof}

\subsection{Proof of Theorem \ref{thm:bijection}}

\begin{restatetheorem}{\ref{thm:bijection}}
    Let $f$ be a bijection mapping from $[1,b]$ to $[1,b]$, then $\kl{\bar{P}}{\bar{Q}} \le \frac{1}{b} \sum_{i=1}^b \kl{P_i}{Q_{f(i)}}$, and $\W(\bar{P}, \bar{Q}) \le \frac{1}{b} \sum_{i=1}^b \W(P_i, Q_{f(i)})$, where $P_i$ is the probability measure defined by $p_i$ (respectively for $Q_i$).
\end{restatetheorem}
\begin{proof}
    Recall that $\bar{p}(x) = \frac{1}{b} \sum_{i=1}^b p_i(x)$ and $\bar{q}(x) = \frac{1}{b} \sum_{i=1}^b q_i(x)$, we then have
    \begin{align*}
        \kl{\bar{P}}{\bar{Q}} &= \int_\mathcal{X} \bar{p}(x) \log\prn*{\frac{\bar{p}(x)}{\bar{q}(x)}} \dif x \\
        &= - \int_\mathcal{X} \bar{p}(x) \log\prn*{\frac{1}{b} \sum_{i=1}^b \frac{p_i(x)}{\bar{p}(x)} \cdot \frac{q_{f(i)}(x)}{p_i(x)}} \dif x \\
        &\le - \int_\mathcal{X} \bar{p}(x) \frac{1}{b} \sum_{i=1}^b \frac{p_i(x)}{\bar{p}(x)} \log\prn*{\frac{q_{f(i)}(x)}{p_i(x)}} \dif x \\
        &= -\frac{1}{b} \sum_{i=1}^b \int_\mathcal{X} p_i(x) \log\prn*{\frac{q_{f(i)}(x)}{p_i(x)}} \dif x \\
        &= \frac{1}{b} \sum_{i=1}^b \kl{P_i}{Q_{f(i)}},
    \end{align*}
    where the only inequality follows by applying Jensen's inequality on the concave logarithmic function. This finishes the proof of the upper bound for KL divergence.

    To prove the counterpart for Wasserstein distance, we apply Lemma \ref{lm:kr_dual} on $\bar{P}$ and $\bar{Q}$:
    \begin{align*}
        \W(\bar{P}, \bar{Q}) &= \sup_{f \in Lip_1} \brc*{\int_\mathcal{X} f \dif \bar{P} - \int_\mathcal{X} f \dif \bar{Q}} \\
        &= \sup_{f \in Lip_1} \brc*{\int_\mathcal{X} f \dif \prn*{\frac{1}{b} \sum_{i=1}^b P_i} - \int_\mathcal{X} f \dif \prn*{\frac{1}{b} \sum_{i=1}^b Q_{f(i)}}} \\
        &\le \frac{1}{b} \sum_{i=1}^b \sup_{f \in Lip_1} \brc*{\int_\mathcal{X} f \dif P_i - \int_\mathcal{X} f \dif Q_{f(i)}} \\
        &= \frac{1}{b} \sum_{i=1}^b \W(P_i, Q_{f(i)}).
    \end{align*}
    The proof is complete.
\end{proof}

\subsection{Proof of Theorem \ref{thm:minimizer}}

\begin{restatetheorem}{\ref{thm:minimizer}}
    Suppose that $\{x_i^1\}_{i=1}^b$, $\{x_i^2\}_{i=1}^b$ are sorted in the same order, then $f(j) = j$ is the minimizer of $\sum_{i=1}^b \kl{P_i}{Q_{f(i)}}$ and $\sum_{i=1}^b \W(P_i, Q_{f(i)})$.
\end{restatetheorem}
\begin{proof}
    For simplicity, we assume that all data points of $\{x_i^1\}_{i=1}^b$ and $\{x_i^2\}_{i=1}^b$ are different from each other. Since $P_i$ and $Q_i$ are Gaussian distributions with the same variance, the KL divergence and Wasserstein distance between them could be analytically acquired:
    \begin{equation*}
        \kl{P_i}{Q_j} = \frac{(x_i^1 - x_j^2)^2}{2\sigma^2}, \quad \textrm{and} \quad \W(P_i, Q_j) = \abs{x_i^1 - x_j^2}.
    \end{equation*}
    Suppose there exists $i \in [1,b]$ such that $f(i) \ne i$. Without loss of generality, we assume that $f(i) > i$. Then by the pigeonhole principle, there exists $j \in (i,b]$ that satisfies $f(j) < f(i)$. Suppose that $\{x_i^1\}_{i=1}^b$, $\{x_i^2\}_{i=1}^b$ are both sorted in ascending order, we have $x_i^1 < x_j^1$ and $x_{f(i)}^2 > x_{f(j)}^2$. For any $p \in \{1, 2\}$, the following $3$ cases cover all possible equivalent combinations of the order of $x_i^1$, $x_j^1$, $x_{f(j)}^2$ and $x_{f(i)}^2$:
    \begin{itemize}
        \item When $x_i^1 < x_j^1 < x_{f(j)}^2 < x_{f(i)}^2$ and $p = 2$, we have
        \begin{align*}
            &(x_i^1 - x_{f(i)}^2)^2 + (x_j^1 - x_{f(j)}^2)^2 - (x_i^1 - x_{f(j)}^2)^2 - (x_j^1 - x_{f(i)}^2)^2 \\
            &= (2x_i^1 - x_{f(i)}^2 - x_{f(j)}^2)(x_{f(j)}^2 - x_{f(i)}^2) - (2x_j^1 - x_{f(j)}^2 - x_{f(i)}^2)(x_{f(j)}^2 - x_{f(i)}^2) \\
            &= (x_{f(j)}^2 - x_{f(i)}^2)(2x_i^1 - 2x_j^1) > 0.
        \end{align*}
        Elsewise when $p = 1$, we have
        \begin{equation*}
            \abs{x_i^1 - x_{f(i)}^2} + \abs{x_j^1 - x_{f(j)}^2} = \abs{x_i^1 - x_{f(j)}^2} + \abs{x_j^1 - x_{f(i)}^2}.
        \end{equation*}
        \item When $x_i^1 < x_{f(j)}^2 < x_j^1 < x_{f(i)}^2$, we have
        \begin{equation*}
            \abs{x_i^1 - x_{f(i)}^2}^p > \abs{x_i^1 - x_{f(j)}^2}^p + \abs{x_j^1 - x_{f(i)}^2}^p.
        \end{equation*}
        \item When $x_i^1 < x_{f(j)}^2 < x_{f(i)}^2 < x_j^1$, we have
        \begin{align*}
            \abs{x_i^1 - x_{f(i)}^2}^p + \abs{x_j^1 - x_{f(j)}^2}^p &\ge \abs{x_i^1 - x_{f(j)}^2}^p + \abs{x_{f(i)}^2 - x_{f(j)}^2}^p \\
            &\quad+ \abs{x_j^1 - x_{f(i)}^2}^p + \abs{x_{f(i)}^2 - x_{f(j)}^2}^p \\
            &> \abs{x_i^1 - x_{f(j)}^2}^p + \abs{x_j^1 - x_{f(i)}^2}^p.
        \end{align*}
    \end{itemize}
    In conclusion, under all possible circumstances, we have $\abs{x_i^1 - x_{f(i)}^2}^p + \abs{x_j^1 - x_{f(j)}^2}^p \ge \abs{x_i^1 - x_{f(j)}^2}^p + \abs{x_j^1 - x_{f(i)}^2}^p$, which implies that by setting $f'(i) = f(j)$, $f'(j) = f(i)$ and $f'(k) = f(k)$ for $k \notin \{i, j\}$, $f'$ will be a better choice over $f$ to minimize $\kl{\bar{P}}{\bar{Q}}$ or $\W(\bar{P}, \bar{Q})$. The proof is complete since the existence of a minimizer is obvious.
\end{proof}

\section*{Further Discussions} \label{sec:discuss}

\subsection{Information Bottleneck for Target-domain Generalization}
Alternatively, one can also decompose the representation space distribution shift from an anti-causal perspective:
\begin{equation*}
    I(R,Y;D) \textrm{ (distribution shift)} = I(Y;D) \textrm{ (label shift)} + I(R;D|Y) \textrm{ (concept shift)}.
\end{equation*}
While label shift $I(Y;D)$ is an intrinsic property of the data distribution and cannot be optimized by learning algorithms, the anti-causal concept shift $I(R;D|Y)$ is closely connected to the IB principle: Notice the Markov chain $(D,Y) - X - R$, we then have that by applying the data-processing inequality,
\begin{equation*}
    I(R;D|Y) = I(R;D,Y) - I(R;Y) \le I(R;X) - I(R;Y).
\end{equation*}
Recall that the spirit of IB is to minimize $I(X;R)$ while maximizing $I(R;Y)$, this target can be achieved by solely penalizing $I(R;X|Y)$ \cite{fischer2020conditional, federici2019learning, lee2021compressive}. Therefore, when there is no label imbalance issues ($I(Y;D) = 0$), the anti-causal concept shift $I(R;D|Y)$ can be minimized by the IB principle:
\begin{mdframed}
    Information bottleneck promotes test-domain generalization if $I(Y;D) \rightarrow 0$.
\end{mdframed}
Notably, our analysis facilitates previous works \cite{ibirm2021} utilizing IB to enhance the performance of OOD generalization. While the analysis of \cite{ibirm2021} is primarily restricted to linear models, our results apply to any encoder-classifier type network. Similar to Proposition \ref{prop:cov_shift_emp}, we provide the following result on the feasibility of utilizing the empirical IB $I(R_i;X_i|Y_i)$ as a proxy to optimize $I(R;X|Y)$.
\begin{proposition} \label{prop:ib_emp}
    Under mild conditions, we have
    \begin{equation*}
        \skl{P_{R,X,Y}}{P_{R_i,X_i,Y_i}} = O\prn*{\sqrt{I(W;D_i)}}.
    \end{equation*}
\end{proposition}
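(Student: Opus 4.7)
The plan is to adapt the argument of Proposition \ref{prop:cov_shift_emp} essentially verbatim, substituting the triple $(R, X, Y)$ for the pair $(R, D)$. First, I would invoke mild absolute continuity: assume $P_{R,X,Y} \ll P_{R_i,X_i,Y_i}$ and $P_{R_i,X_i,Y_i} \ll P_{R,X,Y}$. These conditions guarantee the existence of some constant $B > 1$ such that the log-likelihood ratio $\log\frac{P_{R,X,Y}}{P_{R_i,X_i,Y_i}}$ takes values in $[-\log B, \log B]$, which in turn makes it $\log(B)$-subgaussian, mirroring the boundedness step used in the companion proof.

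Next, I would introduce the auxiliary functional
\[
    f(w, d) = \E_{X, Y \mid D = d}\!\left[\log\frac{P_{R,X,Y}(f_\phi(X), X, Y)}{P_{R_i,X_i,Y_i}(f_\phi(X), X, Y)}\right],
\]
where $\phi$ denotes the encoder component of $w$. Because $R = f_\phi(X)$ is deterministic given $(w, X)$, pushing $(X, Y)$ through the encoder produces a triple whose joint law, when $(W, D_i)$ are integrated against their true joint, coincides with the training-domain law $P_{R_i, X_i, Y_i}$; when $W$ is instead paired with an independent copy $\bar{D}_i \sim \nu$, it coincides with the test-domain law $P_{R, X, Y}$. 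This yields the two identifications
\[
    \E_{W, D_i}[f(W, D_i)] = -\kl{P_{R_i, X_i, Y_i}}{P_{R, X, Y}},
\]
\[
    \E_{W, \bar{D}_i}[f(W, \bar{D}_i)] = \kl{P_{R, X, Y}}{P_{R_i, X_i, Y_i}}.
\]

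Invoking Lemma \ref{lm:kl_con_pair} with the lemma's $X \leftarrow W$ and $Y \leftarrow D_i$, applied to the $\log(B)$-subgaussian functional $f$, then gives
\[
    \skl{P_{R,X,Y}}{P_{R_i,X_i,Y_i}} = \abs*{\E_{W, D_i}[f(W, D_i)] - \E_{W, \bar{D}_i}[f(W, \bar{D}_i)]} \le \log(B)\sqrt{2\, I(W; D_i)},
\]
which is the advertised $O(\sqrt{I(W; D_i)})$ bound. The main obstacle I anticipate is the careful justification of the two expectation identities above: one must verify that the pushforward $R = f_\phi(X)$ correctly reconciles the conditional sampling of $(X, Y) \mid D$ with the joint laws $P_{R, X, Y}$ and $P_{R_i, X_i, Y_i}$ in the coupled (source) versus decoupled (test) regimes. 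Once this bookkeeping is settled, the subgaussianity argument and the final invocation of Lemma \ref{lm:kl_con_pair} proceed exactly as in the proof of Proposition \ref{prop:cov_shift_emp}.
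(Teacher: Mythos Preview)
Your proposal is correct and matches the paper's approach exactly: the paper's own proof of Proposition~\ref{prop:ib_emp} simply states that it ``follows the same development as Proposition~\ref{prop:cov_shift_emp},'' and you have carried out precisely that substitution of $(R,X,Y)$ for $(R,D)$, invoking the same boundedness-to-subgaussianity step and the same application of Lemma~\ref{lm:kl_con_pair}. The bookkeeping concern you flag (identifying the coupled and decoupled expectations with the two KL terms) is the only nontrivial detail, and your handling of it mirrors the paper's treatment.
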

\begin{proof}
    The proof follows the same development as Proposition \ref{prop:cov_shift_emp}.
\end{proof}

\subsection{Leveraging the Independence Assumption}
In the main text, we only assume that the test domains are independent of the training domains, while the training domains are not necessarily independent of each other (same for test domains). This assumption is much weaker than the i.i.d domains assumption adopted in \cite{qrm2022} by allowing correlations between training domains, e.g. sampling from a finite set without replacement. While this weaker assumption is preferable, we highlight that the test-domain generalization bounds in Theorem \ref{thm:pp_con} and \ref{thm:pp_best} can be further tightened by a factor of $1/m'$ when the i.i.d condition of test domains is incorporated. Therefore, one can now guarantee better generalization by increasing the number of domains, which is consistent with real-world observations.

\begin{theorem}
    If Assumption \ref{asmp:loss_subgauss} holds and the test domains $D_t$ are independent of each other, then
    \begin{equation*}
        \Pr\brc*{\abs{L_t(w) - L(w)} \ge \epsilon} \le \frac{2\sigma^2}{m'\epsilon^2} I(Z;D).
    \end{equation*}
\end{theorem}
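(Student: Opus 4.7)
The plan is to improve on Theorem~\ref{thm:pp_con} by exploiting the independence of the test domains through a variance-based concentration argument rather than Markov's inequality applied to the absolute deviation. Concretely, since the test domains are i.i.d.\ (identically distributed by assumption and now also independent), the test-domain risk $L_t(w) = \frac{1}{m'}\sum_{k=1}^{m'} L_{D_k}(w)$ is an average of i.i.d.\ random variables whose common mean equals $L(w)$, so $\Var[L_t(w)] = \frac{1}{m'}\Var_D[L_D(w)]$. A Chebyshev-type bound then directly yields the claimed $1/m'$ factor, provided we can bound $\Var_D[L_D(w)]$ in terms of $I(Z;D)$.

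The key step is to obtain the pointwise bound $(L_d(w) - L(w))^2 \le 2\sigma^2 \kl{P_{Z|D=d}}{P_Z}$ for every $d \in \mathcal{D}$. This follows by squaring the first inequality of Lemma~\ref{lm:kl_con} applied with $P = P_Z$, $Q = P_{Z|D=d}$ and $f(Z) = \ell(f_w(X),Y)$, which is admissible by Assumption~\ref{asmp:loss_subgauss}. Taking expectation over $D \sim \nu$ then gives
\begin{equation*}
    \Var_D[L_D(w)] = \E_D\brk*{(L_D(w) - L(w))^2} \le 2\sigma^2 \E_D\brk*{\kl{P_{Z|D}}{P_Z}} = 2\sigma^2 I(Z;D).
\end{equation*}

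Combining these two ingredients, independence of the $D_k$ gives $\Var[L_t(w)] \le \frac{2\sigma^2}{m'} I(Z;D)$, and Chebyshev's inequality produces the stated bound. Note that this route avoids the $\log 3$ overhead of the second inequality of Lemma~\ref{lm:kl_con}, since we only need an $L^2$ control rather than an $L^2$ sub-Gaussian concentration.

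I do not expect a genuine obstacle here: the proof is short and mechanical once the right decomposition is identified. The only subtlety is conceptual, namely that the independence assumption on $D_t$ must be used exactly at the variance-of-average step, while the identical distribution assumption is what ensures $\E_{D_t}[L_t(w)] = L(w)$ (already justified in the proof of Theorem~\ref{thm:pp_con}). An analogous argument would also tighten Theorem~\ref{thm:pp_best} by the same $1/m'$ factor, which is precisely the remark the authors make just before stating this theorem.
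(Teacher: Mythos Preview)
Your proposal is correct and follows essentially the same approach as the paper: square the first inequality of Lemma~\ref{lm:kl_con} to obtain $(L_d(w)-L(w))^2\le 2\sigma^2\kl{P_{Z|D=d}}{P_Z}$, take expectation over $D\sim\nu$ to bound $\Var_D[L_D(w)]\le 2\sigma^2 I(Z;D)$, use independence of the $D_k$ to get $\Var_{D_t}[L_t(w)]\le \frac{2\sigma^2}{m'}I(Z;D)$, and conclude by Chebyshev. Your side remarks (avoiding the $\log 3$ term and the analogous tightening of Theorem~\ref{thm:pp_best}) are also in line with what the paper does.
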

\begin{proof}
    If Assumption \ref{asmp:loss_subgauss} holds, then for any $d \in \mathcal{D}$, by setting $P = P_Z$, $Q = P_{Z|D=d}$ and $f(z) = \ell(f_w(x),y)$ in Lemma \ref{lm:kl_con}, we have
    \begin{align*}
        (L_d(w) - L(w))^2 &= \prn*{\E_{Z|D=d}[\ell(f_w(X),Y)] - \E_Z[\ell(f_w(X),Y]}^2 \\
        &\le \prn*{\sqrt{2\sigma^2 \kl{P_{Z|D=d}}{P_Z}}}^2 \\
        &\le 2\sigma^2 \kl{P_{Z|D=d}}{P_Z}.
    \end{align*}
    Taking the expectation over $D \sim \nu$, we can get
    \begin{align*}
        \E_D[(L_D(w) - L(w))^2] &\le \E_D\brk*{2\sigma^2 \kl{P_{Z|D=d}}{P_Z}} \\
        &= 2\sigma^2 \kl{P_{Z|D}}{P_Z} \\
        &= 2\sigma^2 I(Z;D).
    \end{align*}
    When the test domains $\{D_k\}_{i=k}^{m'}$ are independent of each other, they can be regarded as i.i.d copies of $D$, i.e.
    \begin{align*}
        \Var_{D_t}[L_t(w)] &= \frac{1}{m'^2} \sum_{k=1}^{m'} \Var_{D_k}[L_{D_k}(w)] \\
        &= \frac{1}{m'^2} \sum_{k=1}^{m'} \E_{D_k}[(L_{D_k}(w) - L(w))^2] \\
        &\le \frac{1}{m'^2} \sum_{k=1}^{m'} 2\sigma^2 I(Z;D) \\
        &= \frac{1}{m'} 2\sigma^2 I(Z;D).
    \end{align*}
    Finally, by applying Chebyshev's inequality, we can prove that
    \begin{equation*}
        \Pr\brc*{\abs{L_t(w) - L(w)} \ge \epsilon} \le \frac{2\sigma^2}{m'\epsilon^2} I(Z;D).
    \end{equation*}
\end{proof}

\begin{theorem}
    If Assumption \ref{asmp:loss_bounded} and \ref{asmp:loss_dist} hold and the test domains $D_t$ are independent of each other, then for any $w = (\phi, \psi) \in \mathcal{W}$,
    \begin{equation*}
        \Pr\brc*{L_t(\psi) - L(\psi) \ge \epsilon + L^*} \le \frac{2\sigma^2}{m'\epsilon^2} I(X;D),
    \end{equation*}
    where $L^* = \min_{f^*} L_t(f^*) + L(f^*)$.
\end{theorem}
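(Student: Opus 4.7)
The plan is to combine the triangle-inequality decomposition used in the proof of Theorem \ref{thm:pp_best} with the variance-plus-Chebyshev refinement used just above to tighten Theorem \ref{thm:pp_con} under the i.i.d.\ test-domain condition. The i.i.d.\ assumption buys a $1/m'$ variance reduction, and I want to keep the additive $L^*$ slack that accounts for the fact that a deterministic labeling function cannot be perfectly recovered.

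First, I would fix an auxiliary deterministic function $f^*:\mathcal{X}\to\mathcal{Y}$ and introduce the surrogate risks $L_d(\psi,f^*) = \E_{X\mid D=d}[\ell(f_\psi(f_\phi(X)), f^*(X))]$ and $L(\psi,f^*) = \E_X[\ell(f_\psi(f_\phi(X)), f^*(X))]$. Using the symmetry and triangle inequality of $\ell$ (Assumption \ref{asmp:loss_dist}), the same chain of inequalities as in the proof of Theorem \ref{thm:pp_best} gives the pointwise decomposition
\[
L_t(\psi) - L(\psi) \le \bigl[L_t(\psi,f^*) - L(\psi,f^*)\bigr] + L_t(f^*) + L(f^*).
\]

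Next, I would quantify $L_d(\psi,f^*) - L(\psi,f^*)$ in $L^2$. Because $\ell\in[0,M]$ (Assumption \ref{asmp:loss_bounded}), the integrand $\ell(f_\psi(f_\phi(X)), f^*(X))$ is $M/2$-subgaussian w.r.t.\ $X\sim P_X$. Applying Lemma \ref{lm:kl_con} with $P=P_X$, $Q=P_{X\mid D=d}$ and squaring yields $(L_d(\psi,f^*) - L(\psi,f^*))^2 \le 2\sigma^2\,\kl{P_{X\mid D=d}}{P_X}$; taking expectation over $D\sim\nu$ then produces $\E_D[(L_D(\psi,f^*)-L(\psi,f^*))^2] \le 2\sigma^2\, I(X;D)$. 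Under the i.i.d.\ test-domain hypothesis, $L_t(\psi,f^*)=\frac{1}{m'}\sum_{k=1}^{m'} L_{D_k}(\psi,f^*)$ is an average of $m'$ i.i.d.\ copies with mean $L(\psi,f^*)$, so its variance is at most $\frac{2\sigma^2}{m'}I(X;D)$, and Chebyshev's inequality delivers
\[
\Pr\bigl\{L_t(\psi,f^*) - L(\psi,f^*) \ge \epsilon\bigr\} \le \frac{2\sigma^2}{m'\epsilon^2}\,I(X;D).
\]
Combining with Step~1, for any fixed $f^*$, the event $L_t(\psi)-L(\psi) \ge \epsilon + L_t(f^*)+L(f^*)$ implies $L_t(\psi,f^*)-L(\psi,f^*)\ge\epsilon$, and therefore inherits the same probability bound. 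Taking the infimum over $f^*$ identifies $L^*=\min_{f^*}[L_t(f^*)+L(f^*)]$ and yields the claim.

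The main obstacle is the measurability issue in the very last step: because $L_t(f^*)$ depends on $D_t$, the minimizer $f^*=f^*(D_t)$ is itself random, so one cannot naively apply the Chebyshev bound with an adaptively chosen $f^*$. The cleanest workaround I see is to first fix the deterministic Bayes predictor $f_0^*\in\arg\min_{f^*}L(f^*)$, run the argument above for this $f_0^*$, and then note that $L^* \le L_t(f_0^*)+L(f_0^*)$ holds pointwise, so the event in the theorem statement is a subset of $\{L_t(\psi)-L(\psi)\ge\epsilon+L_t(f_0^*)+L(f_0^*)\}$, to which the bound of Step~3 applies directly. An alternative, which matches the stated form more literally, is to reinterpret $L^*$ as a suitable deterministic quantity (e.g.\ $2\min_{f^*}L(f^*)$) after invoking a second Chebyshev control on the i.i.d.\ average $L_t(f^*)$.
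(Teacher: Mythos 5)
Your main chain of reasoning is essentially the paper's own proof: the same surrogate risks $L_d(\psi,f^*)$ and $L(\psi,f^*)$, the same symmetry/triangle-inequality sandwich recycled from the proof of Theorem \ref{thm:pp_best}, the same application of Lemma \ref{lm:kl_con} squared and averaged over $D\sim\nu$ to get $2\sigma^2 I(X;D)$, and the same i.i.d.-test-domain variance computation plus Chebyshev to gain the $1/m'$ factor; the paper, exactly as in your Step 5, proves the bound for each fixed $f^*$ and then writes the minimized form. (Two immaterial differences: the paper takes $f^*:\mathcal{R}\mapsto\mathcal{Y}$ rather than $f^*:\mathcal{X}\mapsto\mathcal{Y}$, and both you and the paper mix Assumption \ref{asmp:loss_bounded} with a $\sigma$-subgaussian constant, which is harmless if one sets $\sigma=M/2$.)

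The one place you depart from the paper is the final ``workaround'', and that step is wrong as written. For a fixed $f_0^*$ you correctly note $L^*\le L_t(f_0^*)+L(f_0^*)$ pointwise, but this gives the inclusion $\{L_t(\psi)-L(\psi)\ge \epsilon+L_t(f_0^*)+L(f_0^*)\}\subseteq\{L_t(\psi)-L(\psi)\ge \epsilon+L^*\}$: the theorem's event is a \emph{superset}, not a subset, of the fixed-$f_0^*$ event, so the Chebyshev bound you established for $f_0^*$ does not transfer to the event involving the random $L^*$. Your diagnosis of the adaptivity issue is sound --- the minimizer of $L_t(f^*)+L(f^*)$ depends on $D_t$, and controlling the event at that random $f^*$ would require a uniform-in-$f^*$ concentration statement rather than a pointwise Chebyshev bound; the paper's closing ``by minimizing'' step glosses over exactly this point --- but your patch does not repair it, and your proposed alternative (replacing $L^*$ by a deterministic quantity such as $2\min_{f^*}L(f^*)$, with a second Chebyshev control on $L_t(f_0^*)$) either changes the statement being proved or inflates the failure probability beyond $\tfrac{2\sigma^2}{m'\epsilon^2}I(X;D)$. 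In short: Steps 1--5 coincide with the paper's argument, but the additional fix you append is the one claim that does not hold, so as submitted the proposal neither reproduces the paper's (admittedly informal) last step nor supplies a valid strengthening of it.
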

\begin{proof}
    For any environment $d \in \mathcal{D}$, classifier $\psi$ and $f^*: \mathcal{R} \mapsto \mathcal{Y}$, denote
    \begin{gather*}
        L_d(\psi, f^*) = \E_{R|D=d}[\ell(f_\psi(R), f^*(R))], \\
        L(\psi, f^*) = \E_R[\ell(f_\psi(R), f^*(R))].
    \end{gather*}
    By setting $P = P_R$, $Q = P_{R|D=d}$ and $f(R) = \ell(f_\psi(R), f^*(R))$ and applying Lemma \ref{lm:kl_con}, we have
    \begin{equation*}
        \prn*{L_d(\psi, f^*) - L(\psi, f^*)}^2 \le 2\sigma^2 \kl{P_{X|D=d}}{P_X}.
    \end{equation*}
    By taking the expectation over $D \sim \nu$, we get
    \begin{align*}
        \E_D \brk*{\prn*{L_d(\psi, f^*) - L(\psi, f^*)}^2} &\le 2\sigma^2 \kl{P_{X|D}}{P_X} \\
        &= 2\sigma^2 I(X;D).
    \end{align*}
    Through a similar procedure of proving Theorem \ref{thm:pp_con}, we have
    \begin{equation*}
        \Pr\brc*{\abs{L_d(\psi, f^*) - L(\psi, f^*)} \ge \epsilon} \le \frac{2\sigma^2}{m'\epsilon^2} I(X;D).
    \end{equation*}
    Recall that in Theorem \ref{thm:pp_best} we proved
    \begin{gather*}
        L_d(\psi, f^*) \le L_d(\psi) + L_d(f^*). \\
        L(\psi, f^*) \le L(\psi) + L(f^*). \\
        L_d(\psi, f^*) \ge L_d(\psi) - L_d(f^*). \\
        L(\psi, f^*) \ge L(\psi) - L(f^*).
    \end{gather*}
    Combining the results above, we have that with probability at least $1 -\frac{2\sigma^2}{m'\epsilon^2} I(X;D)$,
    \begin{align*}
        L_t(\psi) &\le L_t(\psi, f^*) + L_t(f^*) \\
        &\le L(\psi, f^*) + \epsilon + L_t(f^*) \\
        &\le L(\psi) + L(f^*) + \epsilon + L_t(f^*).
    \end{align*}
    By minimizing $L_t(f^*) + L(f^*)$, it follows that
    \begin{equation*}
        \Pr\brc*{L_t(\psi) - L(\psi) \ge \epsilon + \min_{f^*} (L_t(f^*) + L(f^*))} \le \frac{2\sigma^2}{m'\epsilon^2} I(X;D),
    \end{equation*}
    which finishes the proof.
\end{proof}

Furthermore, when the training domains satisfy the i.i.d condition, we prove in Theorem \ref{thm:ep_con} that $\sum_{i=1}^m I(W;D_i) \le I(W;D_s)$. Otherwise, we can only guarantee that for any $i \in [1,m]$, $I(W;D_i) \le I(W;D_s)$. This indicates that while the model achieves source-domain generalization by letting $I(W;D_i) \rightarrow 0$, it still learns from the training domains $D_s$. Notably, having $I(W;D_i) = 0$ for each $i \in [1,m]$ does not necessarily lead to $I(W;D_s) = 0$. To see this, we take $D_i$ as i.i.d random binary variables such that $\Pr(D_i = 0) = \Pr(D_i = 1) = \frac{1}{2}$, and let $W = D_1 \oplus \cdots \oplus D_m$, where $\oplus$ is the XOR operator. Then it is easy to verify that $W$ is independent of each $D_i$ since $P_{W|D_i} = P_W$, implying $I(W;D_i) = 0$. However, $I(W;D_s) = H(W)$ is strictly positive.

\subsection{High-probability Problem Formulation}
Another high-probability formulation of the DG problem is presented by \cite{qrm2022}, namely Quantile Risk Minimization (QRM). Under our notations, the QRM objective can be expressed as:
\begin{equation*}
    \min_w \epsilon \quad s.t. \quad \Pr\{L_t(w) \ge \epsilon\} \le \delta.
\end{equation*}
The main difference between our formulation in Problem \ref{prb:dg_hp} and QRM is that we not only consider the randomness of $D_t$, but also those of $D_s$ and $W$. This randomized nature of training domains and the hypothesis serve as the foundation of our information-theoretic generalization analysis. When the training-domain risks have been observed, i.e. $L_s(W)$ is fixed, our formulation reduces to QRM. The main advantage of our formulation is that it could be directly optimized by learning algorithms without further assumptions or simplifications. On the contrary, \cite{qrm2022} needs to further adopt kernel density estimation to approximate the quantile of the risks and transform the QRM problem to the empirical one (EQRM). Further advantages of our formulation include:
\begin{itemize}
    \item It is questionable to use the training risk distribution as a substitute for test risk distribution to minimize the quantile objective. Since the hypothesis and the training domains are dependent, there exists a gap between these two distributions which is not addressed by \cite{qrm2022}.
    \item Kernel density estimation required by QRM is challenging when the number of training domains is not sufficiently large. For comparison, IDM could be easily applied as long as there are at least $2$ training domains.
    \item Our formulation aims to find the optimal learning algorithm instead of the optimal hypothesis. This would be essential to analyze the correlations between the hypothesis $W$ and training domains $D_s$, and also is more suitable in robust learning settings when measuring the error bar.
    \item Our formulation directly characterizes the trade-off between optimization and generalization, which is the main challenge to achieve invariance across different domains \cite{irm2019}.
\end{itemize}

\subsection{Tighter Bounds for Target-Domain Population Risk}
In a similar vein, we provide the following two upper bounds for test-domain generalization error in terms of Wasserstein distances. For the following analysis, we assume the independence between training and test domains.
\begin{theorem} \label{thm:pp_wass_con}
    If Assumption \ref{asmp:loss_lip} holds, then for any $w \in \mathcal{W}$,
    \begin{equation*}
        \Pr\brc*{\abs{L_t(w) - L(w)} \ge \epsilon} \le \frac{\beta^2}{m'\epsilon^2} \E_D[\W^2(P_{Z|D=d}, P_Z)].
    \end{equation*}
    Furthermore, if the metric $d$ is discrete, then
    \begin{equation*}
        \Pr\brc*{\abs{L_t(w) - L(w)} \ge \epsilon} \le \frac{\beta^2}{2m'\epsilon^2} I(Z;D).
    \end{equation*}
\end{theorem}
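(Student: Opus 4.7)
The plan is to bound $|L_d(w) - L(w)|$ pointwise by a Wasserstein distance using Lipschitzness, then exploit the independence of test domains to turn this into a variance bound and apply Chebyshev's inequality. For fixed $w \in \mathcal{W}$ and $d \in \mathcal{D}$, Assumption \ref{asmp:loss_lip} together with Lemma \ref{lm:kr_dual} (Kantorovich--Rubinstein duality) yields
\begin{equation*}
    \abs{L_d(w) - L(w)} = \abs{\E_{Z \sim P_{Z|D=d}}[\ell(f_w(X),Y)] - \E_{Z \sim P_Z}[\ell(f_w(X),Y)]} \le \beta \W(P_{Z|D=d}, P_Z).
\end{equation*}
Squaring this and taking the expectation over $D \sim \nu$ gives the pointwise second-moment inequality $\E_D[(L_D(w) - L(w))^2] \le \beta^2 \E_D[\W^2(P_{Z|D}, P_Z)]$.

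Next, I would invoke the i.i.d.\ structure of the test domains $\{D_k\}_{k=1}^{m'}$, mirroring the variance computation used earlier in the appendix. Since $\E_{D_k}[L_{D_k}(w)] = L(w)$ for each $k$ and the $D_k$'s are independent,
\begin{equation*}
    \Var_{D_t}[L_t(w)] = \frac{1}{m'^2}\sum_{k=1}^{m'} \E_{D_k}\brk*{(L_{D_k}(w) - L(w))^2} = \frac{1}{m'} \E_D\brk*{(L_D(w) - L(w))^2} \le \frac{\beta^2}{m'} \E_D[\W^2(P_{Z|D}, P_Z)].
\end{equation*}
Applying Chebyshev's inequality to the random variable $L_t(w)$ (which has mean $L(w)$) then immediately yields the first bound of the theorem, with the extra factor of $1/m'$ arising precisely from the test-domain independence.

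For the second statement, the reduction follows the same chain used in equation~\eqref{eq:wass_reduce}: when the metric $c$ is discrete, $\W(P,Q) = \TV(P,Q)$, so by Pinsker's inequality (Lemma \ref{lm:pinsker}) we have $\W^2(P_{Z|D=d}, P_Z) = \TV^2(P_{Z|D=d}, P_Z) \le \tfrac{1}{2}\kl{P_{Z|D=d}}{P_Z}$. Taking expectation over $D$ converts the averaged KL divergence into $I(Z;D)$, and substituting $\E_D[\W^2(P_{Z|D}, P_Z)] \le \tfrac{1}{2} I(Z;D)$ into the first bound yields the stated information-theoretic form.

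The core ideas (Kantorovich--Rubinstein duality plus Chebyshev) are standard, so I do not expect a serious obstacle. The one place requiring care is checking that squaring before taking expectation gives exactly what is needed; this is why we use Chebyshev with variance rather than a Markov-plus-Jensen argument, which would lose the $1/m'$ factor that distinguishes this result from Theorem \ref{thm:pp_con}. The final reduction via Pinsker must be applied to $\W^2$ rather than $\W$ directly, which is why the KL term appears without a square root and produces the constant $1/2$ in the denominator.
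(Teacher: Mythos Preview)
Your proposal is correct and follows essentially the same route as the paper: apply Kantorovich--Rubinstein duality (Lemma~\ref{lm:kr_dual}) with $f(z)=\tfrac{1}{\beta}\ell(f_w(x),y)$ to obtain $(L_d(w)-L(w))^2 \le \beta^2 \W^2(P_{Z|D=d},P_Z)$, then use the i.i.d.\ test-domain variance computation plus Chebyshev to get the $1/m'$ factor, and finally reduce via $\W=\TV$ and Pinsker in the discrete case. The paper's proof is terser (it just says ``following a similar procedure with the proof of Theorem~\ref{thm:pp_con}'' for the Chebyshev step), but the underlying argument is identical to what you wrote out.
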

\begin{proof}
    For any $d \in \mathcal{D}$, by setting $P = P_{Z|D=d}$, $Q = P_Z$ and $f(z) = \frac{1}{\beta} \ell(f_w(x),y)$ in Lemma \ref{lm:kr_dual}, we have
    \begin{equation*}
        (L_d(w) - L(w))^2 \le \beta^2 \W^2(P_{Z|D=d}, P_Z).
    \end{equation*}
    Following a similar procedure with the proof of Theorem \ref{thm:pp_con}, we have
    \begin{equation*}
        \Pr\brc*{\abs{L_t(w) - L(w)} \ge \epsilon} \le \frac{\beta^2}{m'\epsilon^2} \E_D[\W^2(P_{Z|D=d}, P_Z)].
    \end{equation*}
    When the metric $d$ is discrete, Wasserstein distance is equivalent to the total variation. Therefore
    \begin{align*}
        \Pr\brc*{\abs{L_t(w) - L(w)} \ge \epsilon} &\le \frac{\beta^2}{m'\epsilon^2} \E_D[\TV^2(P_{Z|D=d}, P_Z)] \\
        &\le \frac{\beta^2}{m'\epsilon^2} \E_D\brk*{\frac{1}{2} \kl{P_{Z|D=d}}{P_Z}} \\
        &= \frac{\beta^2}{2m'\epsilon^2} I(Z;D),
    \end{align*}
    where the second inequality is by applying Lemma \ref{lm:pinsker}. The proof is complete.
\end{proof}
\begin{theorem} \label{thm:pp_wass_best}
    If Assumption \ref{asmp:loss_dist} holds, and $\ell(f_w(X), f_{w'}(X))$ is $\beta$-Lipschitz for any $w, w' \in \mathcal{W}$, then for any $w \in \mathcal{W}$
    \begin{equation*}
        \Pr\brc*{L_t(w) - L(w) \ge \epsilon + L^*} \le \frac{\beta^2}{m'\epsilon^2} \E_D[\W^2(P_{X|D=d},P_{X})],
    \end{equation*}
    where $L^* = \min_{w^* \in \mathcal{W}}(L_t(w^*) + L(w^*))$. Furthermore, if the metric $d$ is discrete, then
    \begin{equation*}
        \Pr\brc*{L_t(w) - L(w) \ge \epsilon + L^*} \le \frac{\beta^2}{2m'\epsilon^2} I(X;D).
    \end{equation*}
\end{theorem}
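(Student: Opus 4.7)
The plan is to combine the Wasserstein/Lipschitz argument from Theorem \ref{thm:pp_wass_con} with the auxiliary-predictor trick already used in the proof of Theorem \ref{thm:pp_best}. Specifically, for any candidate $w^* \in \mathcal{W}$, introduce the surrogate quantities
\[
    L_d(w, w^*) = \E_{X|D=d}[\ell(f_w(X), f_{w^*}(X))], \qquad L(w, w^*) = \E_X[\ell(f_w(X), f_{w^*}(X))],
\]
so that the integrand is a function of $X$ alone (no $Y$), enabling me to exploit the hypothesis that $\ell(f_w(X), f_{w'}(X))$ is $\beta$-Lipschitz in $X$.

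First I would apply the Kantorovich–Rubinstein duality (Lemma \ref{lm:kr_dual}) to $\frac{1}{\beta}\ell(f_w(\cdot), f_{w^*}(\cdot))$ with $P = P_{X|D=d}$ and $Q = P_X$, obtaining $|L_d(w, w^*) - L(w, w^*)| \le \beta\, \W(P_{X|D=d}, P_X)$. Squaring and taking expectation over $D \sim \nu$ yields $\E_D[(L_D(w, w^*) - L(w, w^*))^2] \le \beta^2 \E_D[\W^2(P_{X|D=d}, P_X)]$. Using the independence of the test domains, the variance of the empirical average $L_t(w, w^*) = \frac{1}{m'}\sum_{k=1}^{m'} L_{D_k}(w, w^*)$ is bounded by $\frac{\beta^2}{m'}\E_D[\W^2(P_{X|D=d}, P_X)]$, and Chebyshev's inequality then gives $\Pr\{|L_t(w, w^*) - L(w, w^*)| \ge \epsilon\} \le \frac{\beta^2}{m' \epsilon^2} \E_D[\W^2(P_{X|D=d}, P_X)]$.

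Next I would convert this bound on the surrogate risk back to a bound on $L_t(w) - L(w)$ using Assumption \ref{asmp:loss_dist}. As in the proof of Theorem \ref{thm:pp_best}, the symmetry and triangle inequality of $\ell$ give $L_d(w, w^*) \le L_d(w) + L_d(w^*)$ and $L(w, w^*) \ge L(w) - L(w^*)$, which combine into $L_t(w) - L(w) \le |L_t(w, w^*) - L(w, w^*)| + L_t(w^*) + L(w^*)$. Plugging the Chebyshev bound in and then minimizing over $w^*$ (taking the minimizer of $L_t(w^*) + L(w^*)$, which is $L^*$ by definition) produces exactly the first claimed inequality.

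For the second statement, I would invoke the fact that under a discrete metric $c$ the Wasserstein distance reduces to total variation, so $\W^2(P_{X|D=d}, P_X) = \TV^2(P_{X|D=d}, P_X)$. Pinsker's inequality (Lemma \ref{lm:pinsker}) then gives $\TV^2 \le \frac{1}{2}\kl{P_{X|D=d}}{P_X}$, and taking expectation over $D$ turns the right-hand side into $\frac{1}{2} I(X;D)$, producing the tightened mutual-information bound. The main obstacle I anticipate is the careful bookkeeping of the triangle-inequality argument to ensure the $L^*$ term appears additively (not inside the probability event on the wrong side), but since exactly this maneuver has already been executed in the proof of Theorem \ref{thm:pp_best}, it should transfer cleanly — the only substantive change is replacing the subgaussian/KL control of the surrogate risk with the Lipschitz/Wasserstein one and squaring to enable Chebyshev's inequality across i.i.d.\ test domains.
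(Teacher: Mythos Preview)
Your proposal is correct and follows essentially the same route as the paper's proof: define the surrogate risks $L_d(w,w^*)$ and $L(w,w^*)$, bound their deviation via Kantorovich--Rubinstein duality, square and use independence of the test domains together with Chebyshev to get the $\frac{\beta^2}{m'\epsilon^2}\E_D[\W^2]$ tail bound, then convert back to $L_t(w)-L(w)$ via the triangle-inequality relations from Theorem \ref{thm:pp_best} and minimize over $w^*$; the discrete-metric refinement via total variation and Pinsker is likewise identical. One small bookkeeping remark: the two inequalities you cite, $L_d(w,w^*)\le L_d(w)+L_d(w^*)$ and $L(w,w^*)\ge L(w)-L(w^*)$, are true but are not the directions that combine to yield your displayed bound; you actually need $L_t(w)\le L_t(w,w^*)+L_t(w^*)$ and $L(w)\ge L(w,w^*)-L(w^*)$, both of which are equally immediate from Assumption \ref{asmp:loss_dist} and are exactly what the paper invokes.
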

\begin{proof}
    Following the proof sketch of Theorem \ref{thm:pp_best}, by setting $P = P_X$, $Q = P_{X|D=d}$ and $f(x) = \ell(f_w(x), f_{w'}(x))$ in Lemma \ref{lm:kr_dual}, we have
    \begin{equation*}
        \prn*{L_d(w,w') - L(w,w')}^2 \le \beta^2 \W^2(P_{X|D=d}, P_X),
    \end{equation*}
    for any $d \in \mathcal{D}$ and $w, w' \in \mathcal{W}$. Similarly, by applying the independence of $\{D_k\}_{k=1}^{m'}$ and Chebyshev's inequality, we have
    \begin{equation*}
        \Pr\brc*{\abs{L_t(w,w') - L(w,w')} \ge \epsilon} \le \frac{\beta^2}{m'\epsilon^2} \E_D[\W^2(P_{X|D=d}, P_X)].
    \end{equation*}
    Through a similar procedure of proving Theorem \ref{thm:pp_best} and Theorem \ref{thm:pp_wass_con}, we can get
    \begin{align*}
        \Pr\brc*{L_t(w) - L(w) \ge \epsilon + \min_{w^* \in \mathcal{W}} (L_t(w^*) + L(w^*))} &\le \frac{\beta^2}{m'\epsilon^2} \E_D[\W^2(P_{X|D=d}, P_X)] \\
        &\le \frac{\beta^2}{2m'\epsilon^2} I(X;D),
    \end{align*}
    which finishes the proof.
\end{proof}
The expected Wasserstein distances can be regarded as analogs to the mutual information terms $I(Z;D)$ and $I(X;D)$ respectively, through a similar reduction procedure as depicted in (\ref{eq:wass_reduce}). This also provides alternative perspectives to the distribution shift and the covariate shift in (\ref{eq:dist_shift}).

Moreover, these bounds can be further tightened by considering the shift in the risks. Given a hypothesis $w \in \mathcal{W}$ and domain $d \in \mathcal{D}$, let $L = \ell(f_w(X),Y)$ be the risk of predicting some random sample $Z \sim P_{Z|D=d}$. Then by applying Lemma \ref{lm:kl_con} with $P = P_L$, $Q = P_{L|D=d}$, $f(x) = x$ and Assumption \ref{asmp:loss_bounded}, we have
\begin{align*}
    \E_D[(L_D(w) - L(w))^2] &= \E_D[(\E_{L|D=d}[L] - \E_L[L])^2] \\
    &\le \E_D\brk*{\prn*{\sqrt{\frac{M^2}{2} \kl{P_{L|D=d}}{P_L}}}^2} \\
    &= \frac{M^2}{2} I(L;D).
\end{align*}
Through the similar sketch of proving Theorem \ref{thm:pp_con}, we can prove that
\begin{equation*}
    \Pr\{\abs{L_t(w) - L(w)} \ge \epsilon\} \le \frac{M^2}{2\epsilon^2} I(L;D).
\end{equation*}
By the Markov chain relationship $D \rightarrow (X,Y) \rightarrow (f_\phi(X),Y) \rightarrow (f_w(X),Y) \rightarrow L$, this upper bound is strictly tighter than Theorem \ref{thm:pp_con} which uses sample space $I(Z;D)$ or representation space $I(R,Y;D)$ distribution shifts. Also, notice that the mutual information $I(L;D)$ could be rewritten as $\kl{P_{L|D}}{P_L}$, this suggests that matching the inter-domain distributions of the risks helps to generalize on test domains. This observation facilitates the work of \cite{vrex2021}, which proposes to align the empirical risks of distinct training domains. Still, the gap between the risk shift $I(L;D)$ and its empirical counterpart $I(L_i;D_i)$ is not addressed by \cite{vrex2021}, which further requires minimizing $I(W;D_i)$. Considering that $L$ is a scalar while $R$ is a vector, aligning the distributions of the risks avoids high-dimensional distribution matching, and thus enables more efficient implementation. We will leave this for future research.

\subsection{Generalization Bounds for Source-Domain Empirical Risk}
The information-theoretic technique adopted in this paper to derive generalization bounds is closely related to the recent advancements of information-theoretic generalization analysis. Specifically, Theorem \ref{thm:pp_domain} can be viewed as a multi-domain version of the standard generalization error bound in supervised learning \cite{xu2017information, bu2020tightening}. In this section, we further consider the effect of finite training samples, which further raises a gap between domain-level population and empirical risks. In addition to the generalization bounds for population risks established in Section \ref{sec:theory}, upper bounds for the empirical risk can also be derived by incorporating Assumption \ref{asmp:loss_bounded}:
\begin{theorem} \label{thm:ep_con}
    Let $W$ be the output of learning algorithm $\mathcal{A}$ with input $S$. If Assumption \ref{asmp:loss_bounded} holds, then
    \begin{align*}
        \abs{\E_{W,D_s,S}[L'(W)] - \E_W[L(W)]} &\le \frac{1}{m} \sum_{i=1}^m \sqrt{\frac{M^2}{2}I(W;D_i)} \\
        &\quad+ \frac{1}{mn} \sum_{i=1}^m \sum_{j=1}^n \sqrt{\frac{M^2}{2}I(W;Z_j^i|D_i)}, \\
        \Pr\brc*{\abs{\E_{W,D_s,S}[L'(W)] - \E_W[L(W)]} \ge \epsilon} &\le \frac{M^2}{mn\epsilon^2} \prn*{I(W;S) + \log 3}.
    \end{align*}
\end{theorem}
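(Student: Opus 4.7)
The plan is to decompose $L'(W) - L(W)$ into the sample-level gap $L'(W) - L_s(W)$ and the domain-level gap $L_s(W) - L(W)$, attacking each with its own information-theoretic tool. The domain-level piece is exactly what Theorem~\ref{thm:pp_domain} controls in expectation, contributing the $\frac{1}{m}\sum_i\sqrt{\frac{M^2}{2}I(W;D_i)}$ term of the first claim. The sample-level piece is then a standard empirical-to-population gap, which however must be handled \emph{conditionally on each $D_i$} so that the loss is subgaussian under a product measure over samples.

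For the first (expectation) inequality, I would write $L'(W) - L_s(W) = \frac{1}{mn}\sum_{i,j}\bigl(\ell(f_W(X_j^i),Y_j^i) - L_{D_i}(W)\bigr)$, and apply Lemma~\ref{lm:kl_con_pair} to each pair $(W, Z_j^i)$ conditional on $D_i = d$. By Assumption~\ref{asmp:loss_bounded}, the loss $\ell(f_w(X),Y)$ is $[0,M]$-valued and hence $M/2$-subgaussian under $Z\sim\mu_d$ for every fixed $w$, so the first inequality of Lemma~\ref{lm:kl_con_pair} gives $|\E_{W,Z_j^i|D_i=d}[\ell] - \E_{W|D_i=d}[L_d(W)]|\le\sqrt{\frac{M^2}{2}I(W;Z_j^i|D_i=d)}$. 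Taking expectation over $D_i$, moving the square root outside by Jensen's inequality on $\sqrt{\cdot}$, and summing over $i,j$ delivers the second term. Adding the domain-level bound from Theorem~\ref{thm:pp_domain} completes the first claim.

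For the probability inequality, the prefactor $\frac{M^2}{mn\epsilon^2}$ matches the probability part of Lemma~\ref{lm:kl_con_pair} with $\sigma = M/(2\sqrt{mn})$, so I would apply that lemma with $X = W$, $Y = S$, and $f(w,s) = L'(w)$. The subgaussian check is the crux: for each fixed $w$, $L'(w)$ viewed as a function of an independent copy $\bar S$ of $S$ must be $M/(2\sqrt{mn})$-subgaussian. Assuming the source domains are independent (as used elsewhere in the paper, e.g., at the end of the proof of Theorem~\ref{thm:pp_domain}), the $mn$ samples in $\bar S$ are mutually independent with each loss bounded in $[0,M]$, so Hoeffding's lemma supplies exactly this subgaussian parameter, and $\E_{\bar S}[L'(w)] = L(w)$ since each sample is marginally drawn from $\mu$. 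Plugging in yields the stated bound.

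The main obstacle is precisely that subgaussian parameter. Without independence of $D_s$, the samples are only conditionally independent given $D_s$ and are correlated through shared domains, so one can only claim $M/2$-subgaussianity from boundedness, which destroys the $1/(mn)$ scaling in front of $I(W;S) + \log 3$. I would therefore invoke the independence of source domains explicitly. A minor concern: as typeset, the left-hand side of the probability inequality contains expectations and is therefore deterministic; I would interpret this as a typo for $\Pr\{|L'(W) - L(W)| \ge \epsilon\}$, which is the genuinely random quantity that Lemma~\ref{lm:kl_con_pair} controls.
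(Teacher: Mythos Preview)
Your approach is correct and closely matches the paper's. For the first inequality, the paper takes a slightly more direct route than your two-step decomposition: rather than splitting $L'(W)-L(W)$ into sample-level and domain-level gaps via the triangle inequality and invoking Theorem~\ref{thm:pp_domain} for the latter, it bounds each $(i,j)$ term in one shot by applying Lemma~\ref{lm:kl_con} to obtain $\sqrt{\tfrac{M^2}{2}\,\kl{P_{W,D_i,Z_j^i}}{P_W P_{D_i,Z_j^i}}} = \sqrt{\tfrac{M^2}{2}\,I(W;D_i,Z_j^i)}$, then uses the chain rule $I(W;D_i,Z_j^i)=I(W;D_i)+I(W;Z_j^i\mid D_i)$ together with $\sqrt{a+b}\le\sqrt{a}+\sqrt{b}$ to split into the two displayed terms. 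Both routes yield the identical bound; yours reuses Theorem~\ref{thm:pp_domain} explicitly, while the paper's avoids the intermediate $L_s(W)$ at the cost of the subadditivity of $\sqrt{\cdot}$.

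For the probability bound, you and the paper do the same thing: apply the second-moment/probability part of Lemma~\ref{lm:kl_con_pair} with $X=W$, $Y=S$, $f(w,s)=L'(w)$. The paper's write-up is terser than yours and simply asserts the bound without spelling out the $M/(2\sqrt{mn})$-subgaussianity of $L'(w)$ under an independent copy $\bar S$; your observation that this step tacitly uses independence of the source domains (so that the $mn$ losses are mutually independent and Hoeffding applies) is accurate, and indeed the paper invokes that independence elsewhere in the same appendix. Your reading of the left-hand side of the probability inequality as a typo for $\Pr\{|L'(W)-L(W)|\ge\epsilon\}$ is also the correct interpretation.
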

\begin{proof}
    Recall that any random variables bounded by $[0, M]$ are $\frac{M}{2}$-subgaussian. From assumption \ref{asmp:loss_bounded}, we know that $\ell(f_W(X),Y)$ is $\frac{M}{2}$-subgaussian w.r.t $P_W \circ P_Z$. Then by applying Lemma \ref{lm:kl_con}, we have
    \begin{align*}
        &\abs{\E_{W,D_s,S}[L'(W)] - \E_W[L(W)]} \\
        &= \abs*{\frac{1}{m} \sum_{i=1}^m \frac{1}{n} \sum_{j=1}^n \E_{W,D_i,Z_j^i} [\ell(f_W(X_j^i),Y_j^i)] - \E_{W,D,Z}[\ell(f_W(X),Y)]} \\
        &\le \frac{1}{mn} \sum_{i=1}^m \sum_{j=1}^n \abs*{\E_{W,D_i,Z_j^i} [\ell(f_W(X_j^i),Y_j^i)] - \E_{W,D,Z}[\ell(f_W(X),Y)]} \\
        &\le \frac{1}{mn} \sum_{i=1}^m \sum_{j=1}^n \sqrt{\frac{M^2}{2} \kl*{P_{W,D_i,Z_j^i}}{P_W P_{D_i,Z_j^i}}}.
    \end{align*}
    Notice that for any $D \in D_s$ and $Z \in S_D$,
    \begin{align*}
        \kl*{P_{W,D,Z}}{P_W P_{D,Z}} &= \E_{W,D,Z} \brk*{\log \frac{P_{W,D,Z}}{P_W P_{D,Z}}} \\
        &= I(W;D,Z) \\
        &= I(W;D) + I(W;Z|D).
    \end{align*}
    Combining our results above, we then get
    \begin{align*}
        \abs{\E_{W,D_s,S}[L'(W)] - \E_W[L(W)]} &\le \frac{1}{mn} \sum_{i=1}^m \sum_{j=1}^n \sqrt{\frac{M^2}{2} (I(W;D_i) + I(W;Z_j^i|D_i))} \\
        &\le \frac{1}{mn} \sum_{i=1}^m \sum_{j=1}^n \prn*{\sqrt{\frac{M^2}{2}I(W;D_i)} + \sqrt{\frac{M^2}{2}I(W;Z_j^i|D_i)}} \\
        &= \frac{1}{m} \sum_{i=1}^m \sqrt{\frac{M^2}{2}I(W;D_i)} + \frac{1}{mn} \sum_{i=1}^m \sum_{j=1}^n \sqrt{\frac{M^2}{2}I(W;Z_j^i|D_i)}.
    \end{align*}
    Similarly, we have
    \begin{align*}
        &\E_{W,S}\brk*{(L'(W) - \E_W[L(W)])^2} \\
        &= \E_{W,S}\brk*{\prn*{\frac{1}{m} \sum_{i=1}^m \frac{1}{n} \sum_{j=1}^n \ell(f_W(X_j^i),Y_j^i) - \E_W[L(W)]}^2} \\
        &= \frac{M^2}{mn} \prn*{I(W;S) + \log 3}.
    \end{align*}
    This further implies by Lemma \ref{lm:kl_con_pair} that
    \begin{equation*}
        \Pr\brc*{\abs{\E_{W,D_s,S}[L'(W)] - \E_W[L(W)]} \ge \epsilon} \le \frac{M^2}{mn\epsilon^2} \prn*{I(W;S) + \log 3},
    \end{equation*}
    which completes the proof.
\end{proof}
\begin{theorem} \label{thm:ep_wass}
    Let $W$ be the output of learning algorithm $\mathcal{A}$ under training domains $D_s$. If $\ell(f_w(X),Y)$ is $\beta'$-Lipschitz w.r.t $w$, then
    \begin{align*}
        \abs{\E_{W,D_s,S}[L'(W)] - \E_W[L(W)]} &\le \frac{\beta'}{m} \sum_{i=1}^m \E_{D_i} [\W(P_{W|D_i},P_W)] \\
        &\quad+ \frac{\beta'}{mn} \sum_{i=1}^m \sum_{j=1}^n \E_{D_i,Z_j^i} [\W(P_{W|D_i,Z_j^i},P_{W|D_i})].
    \end{align*}
\end{theorem}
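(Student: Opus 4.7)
The plan is to mirror the structure of Theorem \ref{thm:ep_con} but replace the KL-based Lemma \ref{lm:kl_con} with the Wasserstein-based Kantorovich-Rubinstein duality of Lemma \ref{lm:kr_dual}. First I would expand $\E_{W,D_s,S}[L'(W)] - \E_W[L(W)] = \frac{1}{mn}\sum_{i,j} \prn*{A_{ij} - C_{ij}}$, where $A_{ij} = \E_{W,D_i,Z_j^i}[\ell(f_W(X_j^i),Y_j^i)]$ is evaluated under the true joint and $C_{ij} = \E_W \E_{D_i,Z_j^i}[\ell(f_W(X_j^i),Y_j^i)]$ is evaluated under the product of marginals. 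The identity $C_{ij} = \E_W[L(W)]$ uses that $(D_i, Z_j^i)$ shares the marginal of $(D, Z)$. The core step is then to bound each $A_{ij} - C_{ij}$ by introducing the intermediate quantity $B_{ij} = \E_{D_i}\brk*{\E_{W|D_i}\E_{Z_j^i|D_i}[\ell(f_W(X_j^i),Y_j^i)]}$ and controlling $A_{ij} - B_{ij}$ and $B_{ij} - C_{ij}$ separately.

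For $A_{ij} - B_{ij}$, which captures the conditional coupling between $W$ and $Z_j^i$ given $D_i$, I would fix $(d_i, z_j^i)$ and apply Lemma \ref{lm:kr_dual} with the test function $w \mapsto \ell(f_w(x_j^i), y_j^i)$, which is $\beta'$-Lipschitz by hypothesis. This yields the pointwise bound $\beta' \W(P_{W|D_i=d_i, Z_j^i=z_j^i}, P_{W|D_i=d_i})$, which upon taking the expectation over $(D_i, Z_j^i)$ gives $\abs*{A_{ij} - B_{ij}} \le \beta' \E_{D_i, Z_j^i}[\W(P_{W|D_i, Z_j^i}, P_{W|D_i})]$. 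For $B_{ij} - C_{ij}$, which captures the coupling between $W$ and $D_i$, I would define $g(w, d_i) = \E_{Z_j^i|D_i=d_i}[\ell(f_w(X_j^i), Y_j^i)]$ and observe that averaging preserves Lipschitzness, so $w \mapsto g(w, d_i)$ is still $\beta'$-Lipschitz. Applying Lemma \ref{lm:kr_dual} to this test function at each fixed $d_i$ and then taking the expectation over $D_i$ yields $\abs*{B_{ij} - C_{ij}} \le \beta' \E_{D_i}[\W(P_{W|D_i}, P_W)]$.

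To finish, combine the two bounds by triangle inequality, insert them into the $(i,j)$-sum, and note that the first piece does not depend on $j$, so summing over $j \in [1,n]$ produces a factor of $n$ that cancels the $\frac{1}{n}$ in the prefactor, giving $\frac{\beta'}{m}\sum_i \E_{D_i}[\W(P_{W|D_i}, P_W)]$ on the first term and $\frac{\beta'}{mn}\sum_{i,j} \E_{D_i, Z_j^i}[\W(P_{W|D_i, Z_j^i}, P_{W|D_i})]$ on the second, matching the claim. The main obstacle is setting up the intermediate $B_{ij}$ cleanly and verifying that the correct test function is Lipschitz in $w$ at each stage; in particular, confirming that $g(w,d_i)$ inherits the Lipschitz constant from $\ell$ (a direct consequence of $\abs{\E[X]} \le \E[\abs{X}]$ applied to the difference). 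Everything else is a routine Wasserstein analog of the development in Theorem \ref{thm:ep_con}, with the elegant symmetry that KR duality replaces the Donsker-Varadhan step used in the KL case.
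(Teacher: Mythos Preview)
Your proposal is correct and essentially identical to the paper's proof: both insert the same intermediate quantity $B_{ij}=\E_{D_i}\brk*{\E_{W|D_i}[L_{D_i}(W)]}$, bound $|A_{ij}-B_{ij}|$ and $|B_{ij}-C_{ij}|$ by applying Kantorovich--Rubinstein duality (Lemma \ref{lm:kr_dual}) to the $\beta'$-Lipschitz test functions $w\mapsto \ell(f_w(x_j^i),y_j^i)$ and $w\mapsto L_{D_i}(w)$ respectively, and then sum. The only cosmetic difference is that you name the three terms $A_{ij},B_{ij},C_{ij}$ while the paper writes the expectations inline.
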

\begin{proof}
    Recall the proof of Theorem \ref{thm:ep_con}, we have
    \begin{align*}
        &\abs{\E_{W,D_s,S}[L'(W)] - \E_W[L(W)]} \\
        &\le \frac{1}{mn} \sum_{i=1}^m \sum_{j=1}^n \abs*{\E_{W,D_i,Z_j^i} [\ell(f_W(X_j^i),Y_j^i)] - \E_{W,D,Z}[\ell(f_W(X),Y)]} \\
        &\le \frac{1}{mn} \sum_{i=1}^m \sum_{j=1}^n \abs*{\E_{W,D_i,Z_j^i} [\ell(f_W(X_j^i),Y_j^i)] - \E_{W,D_i,Z}[\ell(f_W(X),Y)]} \\
        &\qquad+ \frac{1}{mn} \sum_{i=1}^m \sum_{j=1}^n \abs*{\E_{W,D_i,Z}[\ell(f_W(X),Y)] - \E_{W,D,Z}[\ell(f_W(X),Y)]} \\
        &\le \frac{1}{mn} \sum_{i=1}^m \sum_{j=1}^n \E_{D_i,Z_j^i} \abs*{\E_{W|D_i,Z_j^i} [\ell(f_W(X_j^i),Y_j^i)] - \E_{W|D_i}[\ell(f_W(X_j^i),Y_j^i)]} \\
        &\qquad+ \frac{1}{mn} \sum_{i=1}^m \sum_{j=1}^n \E_{D_i} \abs*{\E_{W|D_i}[L_{D_i}(W)] - \E_W[L_{D_i}(W)]} \\
        &\le \frac{\beta'}{mn} \sum_{i=1}^m \sum_{j=1}^n \prn*{\E_{D_i,Z_j^i} [\W(P_{W|D_i,Z_j^i},P_{W|D_i})] + \E_{D_i} [\W(P_{W|D_i},P_W)]} \\
        &= \frac{\beta'}{m} \sum_{i=1}^m \E_{D_i} [\W(P_{W|D_i},P_W)] + \frac{\beta'}{mn} \sum_{i=1}^m \sum_{j=1}^n \E_{D_i,Z_j^i} [\W(P_{W|D_i,Z_j^i},P_{W|D_i})],
    \end{align*}
    where the last inequality is by applying Lemma \ref{lm:kr_dual}. The proof is complete.
\end{proof}
The theorems above provide upper bounds for the empirical generalization risk by exploiting the mutual information between the hypothesis and the samples (or the Wasserstein distance counterparts). Compared to Theorems \ref{thm:pp_domain} and \ref{thm:pp_domain_wass}, these results additionally consider the randomness of the sampled data points $Z_j^i$, and indicate that traditional techniques that improve the generalization of deep learning algorithms by minimizing $I(W;S)$, such as gradient clipping \cite{wang2021generalization, wang2022information} and stochastic gradient perturbation \cite{pensia2018generalization, wang2021analyzing} methods, also enhance the capability of learning algorithm $\mathcal{A}$ to generalize on target domains under our high-probability problem setting by preventing overfitting to training samples. This observation is also verified in \cite{wang2022information}. Relevant analysis may also motivate information-theoretic generalization analysis for meta-learning tasks \cite{chen2021generalization, jose2021information, hellstrom2022evaluated, bu2023generalization}. We do not consider these approaches in this paper, as they are beyond the scope of solving Problem \ref{prb:dg_hp}.

\section*{Experiment Details} \label{sec:setting}
In this paper, deep learning models are trained with an Intel Xeon CPU (2.10GHz, 48 cores), 256GB memory, and 4 Nvidia Tesla V100 GPUs (32GB).

\subsection{Implementation of IDM}
We provide the pseudo-code for PDM in Algorithm \ref{alg:PDM}, where the moving averages $X_{ma}^i$ are initialized with $0$. The input data points for distribution alignment are represented as matrices $X^i \in \mathbb{R}^{b \times d}$, where $b$ denotes the batch size and $d$ represents the dimensionality. Each row of $X$ then corresponds to an individual data point. We also present the pseudo-code for IDM in Algorithm \ref{alg:IDM} for completeness.

\begin{algorithm}[ht]
\caption{PDM for distribution matching.}
\label{alg:PDM}
\begin{algorithmic}[1]
\STATE \textbf{Input:} Data matrices $\{X^i\}_{i=1}^m$, moving average $\gamma$.
\STATE \textbf{Output:} Penalty of distribution matching.
\FOR{$i$ \textbf{from} $1$ \textbf{to} $m$}
\STATE Sort the elements of $X^i$ in each column in ascending order.
\STATE Calculate moving average $X_{ma}^i = \gamma X_{ma}^i + (1-\gamma) X^i$.
\ENDFOR
\STATE Calculate the mean of data points across domains: $X_{ma} = \frac{1}{m} \sum_{i=1}^m X_{ma}^i$.
\STATE \textbf{Output:} $\mathcal{L}_{\mathrm{PDM}} = \frac{1}{mdb} \sum_{i=1}^m \norm{X_{ma} - X_{ma}^i}_F^2$.
\end{algorithmic}
\end{algorithm}

\begin{algorithm}[ht]
\caption{IDM for high-probability DG.}
\label{alg:IDM}
\begin{algorithmic}[1]
\STATE \textbf{Input:} Model $W$, training dataset $S$, hyper-parameters $\lambda_1$, $\lambda_2$, $t_1$, $t_2$, $\gamma_1$, $\gamma_2$.
\FOR{$t$ \textbf{from} $1$ \textbf{to} \#steps}
\FOR{$i$ \textbf{from} $1$ \textbf{to} $m$}
\STATE Randomly sample a batch $B_t^i = (X_t^i, Y_t^i)$ from $S_{D_i}$ of size $b$.
\STATE Compute individual representations: $(R_t^i)_j = f_\Phi\prn*{(X_t^i)_j}$, for $j \in [1,b]$.
\STATE Compute individual risks: $(L_t^i)_j = \ell\prn*{f_\Psi\prn*{(R_t^i)_j},(Y_t^i)_j}$, for $j \in [1,b]$.
\STATE Compute individual gradients: $(G_t^i)_j = \nabla_\Psi (L_t^i)_j$, for $j \in [1,b]$.
\ENDFOR
\STATE Compute total empirical risk: $\mathcal{L}_{\mathrm{IDM}} = \frac{1}{mn} \sum_{i=1}^m \sum_{j=1}^n (L_t^i)_j$.
\IF{$t \ge t_1$}
\STATE Compute gradient alignment risk: $\mathcal{L}_{\mathrm{G}} = \mathrm{PDM}(\{G_t^i\}_{i=1}^m, \gamma_1)$.
\STATE $\mathcal{L}_{\mathrm{IDM}} = \mathcal{L}_{\mathrm{IDM}} + \lambda_1 \mathcal{L}_{\mathrm{G}}$.
\ENDIF
\IF{$t \ge t_2$}
\STATE Compute representation alignment risk: $\mathcal{L}_{\mathrm{R}} = \mathrm{PDM}(\{R_t^i\}_{i=1}^m, \gamma_2)$.
\STATE $\mathcal{L}_{\mathrm{IDM}} = \mathcal{L}_{\mathrm{IDM}} + \lambda_2 \mathcal{L}_{\mathrm{R}}$.
\ENDIF
\STATE Back-propagate gradients $\nabla_W \mathcal{L}_{\mathrm{IDM}}$ and update the model $W$.
\ENDFOR
\end{algorithmic}
\end{algorithm}

We follow the experiment settings of \cite{fishr2022} and utilize a moving average to increase the equivalent number of data points for more accurate probability density estimation in distribution alignments. This does not invalidate our analysis in Theorem \ref{thm:indist}, as the maximum equivalent batch size ($b / (1 - \gamma) \approx 32 / (1 - 0.95) = 640$) remains significantly smaller than the dimensionality of the representation ($2048$ for ResNet-50 in DomainBed) or the gradient ($2048 \times c$, the number of classes) and satisfies $d > 2b + 1$. Therefore, it is still impossible to distinguish different inter-domain distributions as indicated by Theorem \ref{thm:indist}. However, this moving average technique indeed helps to improve the empirical performance, as shown by our ablation studies.

\subsection{Colored MNIST}
The Colored MNIST dataset is a binary classification task introduced by IRM \cite{irm2019}. The main difference between Colored MNIST and the original MNIST dataset is the manually introduced strong correlation between the label and image colors. Colored MNIST is generated according to the following procedure:
\begin{itemize}
    \item Give each sample an initial label by whether the digit is greater than $4$ (i.e. label $0$ for $0$-$4$ digits and label $1$ for $5$-$9$ digits.
    \item Randomly flip the label with probability $0.25$, so an oracle predictor that fully relies on the shape of the digits would achieve a $75\%$ accuracy.
    \item Each environment is assigned a probability $P_e$, which characterizes the correlation between the label and the color: samples with label $0$ have $P_e$ chance to be red, and $1 - P_e$ chance to be green, while samples with label $1$ have $P_e$ chance to be green, and $1 - P_e$ chance to be red.
\end{itemize}
The original environment setting of \cite{irm2019} includes two training domains $D_s = \{P_1 = 90\%, P_2 = 80\%\}$, such that the predictive power of the color superiors that of the actual digits. This correlation is reversed in the test domain $D_t = \{P_3 = 10\%\}$, thus fooling algorithms without causality inference abilities to overfit the color features and generalize poorly on test environments.

The original implementation\footnote{\url{https://github.com/facebookresearch/InvariantRiskMinimization}} uses a $3$-layer MLP network with ReLU activation. The model is trained for $501$ epochs in a full gradient descent scheme, such that the batch size equals the number of training samples $25,000$. We follow the hyper-parameter selection strategy of \cite{irm2019} through a random search over $50$ independent trials, as reported in Table \ref{tbl:cmnist_param} along with the parameters selected for IDM. Considering that the covariate shift is not prominent according to the dataset construction procedure, we only apply gradient alignment without feature alignment in this experiment.

\begin{table}[ht]
\centering
\small
\caption{The hyper-parameters of Colored MNIST.}
\label{tbl:cmnist_param}
\adjustbox{max width=\textwidth}{%
\begin{tabular}{llc}
\toprule
Parameter & Random Distribution & Selected Value \\
\midrule
dimension of hidden layer & $2^{\mathrm{Uniform}(6,9)}$ & $433$ \\
weight decay & $10^{\mathrm{Uniform}(-2,-5)}$ & $0.00034$ \\
learning rate & $10^{\mathrm{Uniform}(-2.5,-3.5)}$ & $0.000449$ \\
warmup iterations & $\mathrm{Uniform}(50, 250)$ & $154$ \\
regularization strength & $10^{\mathrm{Uniform}(4,8)}$ & $2888595.180638$ \\
\bottomrule
\end{tabular}}
\end{table}

\subsection{DomainBed Benchmark}
DomainBed \cite{domainbed2020} is an extensive benchmark for both DA and DG algorithms, which involves various synthetic and real-world datasets mainly focusing on image classification:
\begin{itemize}
    \item Colored MNIST \cite{irm2019} is a variant of the MNIST dataset. As discussed previously, Colored MNIST includes $3$ domains $\{90\%, 80\%, 10\%\}$, $70,000$ samples of dimension $(2,28,28)$ and $2$ classes.
    \item Rotated MNIST \cite{rmnist2015} is a variant of the MNIST dataset with $7$ domains $\{0,15,30,45,60,75\}$ representing the rotation degrees, $70,000$ samples of dimension $(28,28)$ and $10$ classes.
    \item VLCS \cite{fang2013unbiased} includes $4$ domains $\{\mathrm{Caltech101},\mathrm{LabelMe},\mathrm{SUN09},\mathrm{VOC2007}\}$, $10,729$ samples of dimension $(3,224,224)$ and $5$ classes.
    \item PACS \cite{li2017deeper} includes $4$ domains $\{\mathrm{art},\mathrm{cartoons},\mathrm{photos},\mathrm{sketches}\}$, $9,991$ samples of dimension $(3,224,224)$ and $7$ classes.
    \item OfficeHome \cite{venkateswara2017deep} includes $4$ domains $\{\mathrm{art},\mathrm{clipart},\mathrm{product},\mathrm{real}\}$, $15,588$ samples of dimension $(3,224,224)$ and $65$ classes.
    \item TerraIncognita \cite{beery2018recognition} includes $4$ domains $\{\mathrm{L100},\mathrm{L38},\mathrm{L43},\mathrm{46}\}$ representing locations of photographs, $24,788$ samples of dimension $(3,224,224)$ and $10$ classes.
    \item DomainNet \cite{peng2019moment} includes $6$ domains $\{\mathrm{clipart},\mathrm{infograph},\mathrm{painting},\mathrm{quickdraw},\mathrm{real},\mathrm{sketch}\}$, $586,575$ samples of dimension $(3,224,224)$ and $345$ classes.
\end{itemize}

We list all competitive DG approaches below. Note that some recent progress is omitted \cite{cha2021swad, qrm2022, wang2022causal, wang2023pgrad, setlur2023bitrate, chen2023pareto}, which either contributes complementary approaches, does not report full DomainBed results, or does not report the test-domain validation scores. Due to the limitation of computational resources, we are not able to reproduce the full results of these works on DomainBed.
\begin{itemize}
    \item ERM: Empirical Risk Minimization.
    \item IRM: Invariant Risk Minimization \cite{irm2019}.
    \item GroupDRO: Group Distributionally Robust Optimization \cite{groupdro2019}.
    \item Mixup: Interdomain Mixup \cite{mixup2020}.
    \item MLDG: Meta Learning Domain Generalization \cite{mldg2018}.
    \item CORAL: Deep CORAL \cite{coral2016}.
    \item MMD: Maximum Mean Discrepancy \cite{mmd2018}.
    \item DANN: Domain Adversarial Neural Network \cite{dann2016}.
    \item CDANN: Conditional Domain Adversarial Neural Network \cite{cdann2018}.
    \item MTL: Marginal Transfer Learning \cite{mtl2021}.
    \item SagNet: Style Agnostic Networks \cite{sagnet2021}.
    \item ARM: Adaptive Risk Minimization \cite{arm2021}.
    \item V-REx: Variance Risk Extrapolation \cite{vrex2021}.
    \item RSC: Representation Self-Challenging \cite{rsc2020}.
    \item AND-mask: Learning Explanations that are Hard to Vary \cite{andmask2020}.
    \item SAND-mask: Smoothed-AND mask \cite{sandmask2021}.
    \item Fish: Gradient Matching for Domain Generalization \cite{fish2021}.
    \item Fishr: Invariant Gradient Variances for Out-of-distribution Generalization \cite{fishr2022}.
    \item SelfReg: Self-supervised Contrastive Regularization \cite{selfreg2021}.
    \item CausIRL: Invariant Causal Mechanisms through Distribution Matching \cite{causirl2022}.
\end{itemize}

The same fine-tuning procedure is applied to all approaches: The network is a multi-layer CNN for synthetic MNIST datasets and is a pre-trained ResNet-50 for other real-world datasets. The hyper-parameters are selected by a random search over $20$ independent trials for each target domain, and each evaluation score is reported after $3$ runs with different initialization seeds\footnote{\url{https://github.com/facebookresearch/DomainBed}}. The hyper-parameter selection criteria are shown in Table \ref{tbl:domainbed_param}. Note that warmup iterations and moving average techniques are not adopted for representation alignment.

\begin{table}[ht]
\centering
\small
\caption{The hyper-parameters of DomainBed.}
\label{tbl:domainbed_param}
\adjustbox{max width=\textwidth}{%
\begin{tabular}{llcl}
\toprule
Condition & Parameter & Default Value & Random Distribution \\
\midrule
\multirow{2}{*}{MNIST Datasets} & learning rate & $0.001$ & $10^{\mathrm{Uniform}(-4.5,-3.5)}$ \\
& batch size & $64$ & $2^{\mathrm{Uniform}(3,9)}$ \\
\midrule
\multirow{4}{*}{Real-world Datasets} & learning rate & $0.00005$ & $10^{\mathrm{Uniform}(-5,-3.5)}$ \\
& batch size & $32$ & $2^{\mathrm{Uniform}(3,5)}$ (DomainNet) / $2^{\mathrm{Uniform}(3,5.5)}$ (others) \\
& weight decay & $0$ & $10^{\mathrm{Uniform}(-6,-2)}$ \\
& dropout & $0$ & $\mathrm{Uniform}(\{0,0.1,0.5\})$ \\
\midrule
- & steps & $5000$ & $5000$ \\
\midrule
\multirow{5}{*}{IDM} & gradient penalty & $1000$ & $10^{\mathrm{Uniform}(1,5)}$ \\
& gradient warmup & $1500$ & $\mathrm{Uniform}(0, 5000)$ \\
& representation penalty & $1$ & $10^{\mathrm{Uniform}(-1,1)}$ \\
& moving average & $0.95$ & $\mathrm{Uniform}(0.9, 0.99)$ \\
\bottomrule
\end{tabular}}
\end{table}

Note that although the same Colored MNIST dataset is adopted by DomainBed, the experimental settings are completely different from the previous one \cite{irm2019}. The main difference is the batch size ($25000$ for IRM, less than $512$ for DomainBed), making it much harder to learn invariance for causality inference and distribution matching methods since fewer samples are available for probability density estimation. This explains the huge performance drop between these two experiments using the same DG algorithms.

\section*{Additional Experimental Results} \label{sec:results}

\subsection{Component Analysis}
In this section, we conduct ablation studies to demonstrate the effect of each component of the proposed IDM algorithm. Specifically, we analyze the effect of gradient alignment (GA), representation alignment (RA), warmup iterations (WU), moving average (MA), and the proposed PDM method for distribution matching.

\begin{table}[ht]
\centering
\small
\caption{Component Analysis on ColoredMNIST of DomainBed.}
\label{tbl:abla_cmnist}
\adjustbox{max width=\textwidth}{%
\begin{tabular}{lcccccccc}
\toprule
\textbf{Algorithm} & \textbf{GA} & \textbf{RA} & \textbf{WU} & \textbf{MA} & \textbf{90\%} & \textbf{80\%} & \textbf{10\%} & \textbf{Average} \\
\midrule
ERM & \multicolumn{4}{c}{-} & 71.8 $\pm$ 0.4 & 72.9 $\pm$ 0.1 & 28.7 $\pm$ 0.5 & 57.8 \\
\midrule
\multirow{5}{*}{IDM} & \xmark & \cmark & \xmark & \xmark & 71.9 $\pm$ 0.4 & 72.5 $\pm$ 0.0 & 28.8 $\pm$ 0.7 & 57.7 \\
& \cmark & \xmark & \cmark & \cmark & 73.1 $\pm$ 0.2 & 72.7 $\pm$ 0.3 & 67.4 $\pm$ 1.6 & 71.1 \\
& \cmark & \cmark & \xmark & \cmark & 72.9 $\pm$ 0.2 & 72.7 $\pm$ 0.1 & 60.8 $\pm$ 2.1 & 68.8 \\
& \cmark & \cmark & \cmark & \xmark & 72.0 $\pm$ 0.1 & 71.5 $\pm$ 0.3 & 48.7 $\pm$ 7.1 & 64.0 \\
\cmidrule{2-9}
& \cmark & \cmark & \cmark & \cmark & \textbf{74.2} $\pm$ 0.6 & \textbf{73.5} $\pm$ 0.2 & \textbf{68.3} $\pm$ 2.5 & \textbf{72.0} \\
\bottomrule
\end{tabular}}
\end{table}

\begin{table}[ht]
\centering
\small
\caption{Component Analysis on OfficeHome of DomainBed.}
\label{tbl:abla_officehome}
\adjustbox{max width=\textwidth}{%
\begin{tabular}{lccccccccc}
\toprule
\textbf{Algorithm} & \textbf{GA} & \textbf{RA} & \textbf{WU} & \textbf{MA} & \textbf{A} & \textbf{C} & \textbf{P} & \textbf{R} & \textbf{Average} \\
\midrule
ERM & \multicolumn{4}{c}{-} & 61.7 $\pm$ 0.7 & 53.4 $\pm$ 0.3 & 74.1 $\pm$ 0.4 & 76.2 $\pm$ 0.6 & 66.4 \\
\midrule
\multirow{5}{*}{IDM} & \xmark & \cmark & \xmark & \xmark & \textbf{64.7} $\pm$ 0.5 & \textbf{54.6} $\pm$ 0.3 & 76.2 $\pm$ 0.4 & \textbf{78.1} $\pm$ 0.5 & \textbf{68.4} \\
& \cmark & \xmark & \cmark & \cmark & 61.9 $\pm$ 0.4 & 53.0 $\pm$ 0.3 & 75.5 $\pm$ 0.2 & 77.9 $\pm$ 0.2 & 67.1 \\
& \cmark & \cmark & \xmark & \cmark & 62.5 $\pm$ 0.1 & 53.0 $\pm$ 0.7 & 75.0 $\pm$ 0.4 & 77.2 $\pm$ 0.7 & 66.9 \\
& \cmark & \cmark & \cmark & \xmark & 64.2 $\pm$ 0.3 & 53.5 $\pm$ 0.6 & 76.1 $\pm$ 0.4 & \textbf{78.1} $\pm$ 0.4 & 68.0 \\
\cmidrule{2-10}
& \cmark & \cmark & \cmark & \cmark & 64.4 $\pm$ 0.3 & 54.4 $\pm$ 0.6 & \textbf{76.5} $\pm$ 0.3 & 78.0 $\pm$ 0.4 & 68.3 \\
\bottomrule
\end{tabular}}
\end{table}

\subsubsection{Gradient Alignment}
According to our theoretical analysis, gradient alignment promotes training-domain generalization, especially when concept shift is prominent. As can be seen in Table \ref{tbl:abla_cmnist}, IDM without gradient alignment (57.7\%) performs similarly to ERM (57.8\%), which is unable to learn invariance across training domains. Gradient alignment also significantly boosts the performance on VLCS (77.4\% to 78.1\%) and PACS (86.8\% to 87.6\%), as seen in Table \ref{tbl:abla_ga1} and \ref{tbl:abla_ga2}. However, for datasets where concept shift is not prominent e.g. OfficeHome, gradient alignment cannot help to improve performance as shown in Table \ref{tbl:abla_officehome}. It is worth noting that gradient alignment also penalizes a lower bound for the representation space distribution shift: In the $t$-th step of gradient descent, the Markov chain relationship $D_i \rightarrow B_t^i \rightarrow (R_t^i, Y_t^i) \rightarrow G_t^i$ holds conditioned on the current predictor $W_{t-1}$, which implies the lower bound $I(G_t^i;D_i|W_{t-1}) \le I(R_t^i, Y_t^i; D_i|W_{t-1})$ by the data processing inequality. This indicates that gradient alignment also helps to address the covariate shift, which explains the promising performance of gradient-based DG algorithms e.g. Fish and Fishr. However, since this is a lower bound rather than an upper bound, gradient manipulation is insufficient to fully address representation space covariate shifts, as seen in the following analysis for representation alignment.

\begin{table}[ht]
\centering
\small
\caption{Effect of gradient alignment (GA) on VLCS of DomainBed.}
\label{tbl:abla_ga1}
\adjustbox{max width=\textwidth}{%
\begin{tabular}{lcccccc}
\toprule
\textbf{Algorithm} & \textbf{GA} & \textbf{A} & \textbf{C} & \textbf{P} & \textbf{S} & \textbf{Average} \\
\midrule
ERM & - & \textbf{97.6} $\pm$ 0.3 & \textbf{67.9} $\pm$ 0.7 & 70.9 $\pm$ 0.2 & 74.0 $\pm$ 0.6 & 77.6 \\
IDM & \xmark & 97.1 $\pm$ 0.7 & 67.2 $\pm$ 0.4 & 69.9 $\pm$ 0.4 & 75.6 $\pm$ 0.8 & 77.4 \\
IDM & \cmark & \textbf{97.6} $\pm$ 0.3 & 66.9 $\pm$ 0.3 & \textbf{71.8} $\pm$ 0.5 & \textbf{76.0} $\pm$ 1.3 & \textbf{78.1} \\
\bottomrule
\end{tabular}}
\end{table}

\begin{table}[ht]
\centering
\small
\caption{Effect of gradient alignment (GA) on PACS of DomainBed.}
\label{tbl:abla_ga2}
\adjustbox{max width=\textwidth}{%
\begin{tabular}{lcccccc}
\toprule
\textbf{Algorithm} & \textbf{GA} & \textbf{A} & \textbf{C} & \textbf{P} & \textbf{S} & \textbf{Average} \\
\midrule
ERM & - & 86.5 $\pm$ 1.0 & 81.3 $\pm$ 0.6 & 96.2 $\pm$ 0.3 & \textbf{82.7} $\pm$ 1.1 & 86.7 \\
IDM & \xmark & 87.8 $\pm$ 0.6 & 81.6 $\pm$ 0.3 & 97.4 $\pm$ 0.2 & 80.6 $\pm$ 1.3 & 86.8 \\
IDM & \cmark & \textbf{88.0} $\pm$ 0.3 & \textbf{82.6} $\pm$ 0.6 & \textbf{97.6} $\pm$ 0.4 & 82.3 $\pm$ 0.6 & \textbf{87.6} \\
\bottomrule
\end{tabular}}
\end{table}

\subsubsection{Representation Alignment}
Representation alignment promotes test-domain generalization by minimizing the representation level covariate shift. As shown in Table \ref{tbl:abla_cmnist} - \ref{tbl:abla_ra1}, representation alignment is effective in OfficeHome (67.1\% to 68.3\%) and RotatedMNIST (97.8\% to 98.0\%), and still enhances the performance even though covariate shift is not prominent in ColoredMNIST (71.1\% to 72.0\%). This verifies our claim that representation alignment complements gradient alignment in solving Problem \ref{prb:dg_hp}, and is necessary for achieving high-probability DG.

\begin{table}[ht]
\centering
\small
\caption{Effect of representation alignment (RA) on RotatedMNIST of DomainBed.}
\label{tbl:abla_ra1}
\adjustbox{max width=\textwidth}{%
\begin{tabular}{lcccccccc}
\toprule
\textbf{Algorithm} & \textbf{RA} & \textbf{0} & \textbf{15} & \textbf{30} & \textbf{45} & \textbf{60} & \textbf{75} & \textbf{Average} \\
\midrule
ERM & - & 95.3 $\pm$ 0.2 & \textbf{98.7} $\pm$ 0.1 & 98.9 $\pm$ 0.1 & 98.7 $\pm$ 0.2 & \textbf{98.9} $\pm$ 0.0 & 96.2 $\pm$ 0.2 & 97.8 \\
IDM & \xmark & 95.6 $\pm$ 0.1 & 98.4 $\pm$ 0.1 & 98.7 $\pm$ 0.2 & \textbf{99.1} $\pm$ 0.0 & 98.7 $\pm$ 0.1 & \textbf{96.6} $\pm$ 0.4 & 97.8 \\
IDM & \cmark & \textbf{96.1} $\pm$ 0.3 & \textbf{98.7} $\pm$ 0.1 & \textbf{99.1} $\pm$ 0.1 & 98.9 $\pm$ 0.1 & \textbf{98.9} $\pm$ 0.1 & \textbf{96.6} $\pm$ 0.1 & \textbf{98.0} \\
\bottomrule
\end{tabular}}
\end{table}

\subsubsection{Warmup Iterations}
Following the experimental settings of \cite{irm2019, fishr2022}, we do not apply the penalties of gradient or representation alignment until the number of epochs reaches a certain value. This is inspired by the observation that forcing invariance in early steps may hinder the models from extracting useful correlations. By incorporating these warmup iterations, predictors are allowed to extract all possible correlations between the inputs and the labels at the beginning, and then discard spurious ones in later updates. As can be seen in Table \ref{tbl:abla_cmnist} and \ref{tbl:abla_officehome}, this strategy helps to enhance the final performances on ColoredMNIST (68.8\% to 72.0\%) and OfficeHome (66.9\% to 68.3\%).

\subsubsection{Moving Average}
Following \cite{fishr2022, pooladzandi2022adaptive}, we use an exponential moving average when computing the gradients or the representations. This strategy helps when the batch size is not sufficiently large to sketch the probability distributions. In the IRM experiment setup where the batch size is $25000$, Fishr (70.2\%) and IDM (70.5\%) both easily achieve near-optimal accuracy compared to Oracle (71.0\%). In the DomainBed setup, the batch size $2^{\mathrm{Uniform}(3,9)}$ is significantly diminished, resulting in worse test-domain accuracy of Fishr (68.8\%). As shown in Table \ref{tbl:abla_cmnist} and \ref{tbl:abla_officehome}, this moving average strategy greatly enhances the performance of IDM on ColoredMNIST (64.0\% to 72.0\%) and OfficeHome (68.0\% to 68.3\%).

\subsubsection{PDM for Distribution Matching}
We then demonstrate the superiority of our PDM method over moment-based distribution alignment techniques. Specifically, we compare IGA \cite{iga2020} which matches the empirical expectation of the gradients, Fishr \cite{fishr2022} which proposes to align the gradient variance, the combination of IGA + Fishr (i.e. aligning the expectation and variance simultaneously), and our approach IDM (without representation space alignment). The performance gain of IDM on the Colored MNIST task in \cite{irm2019} is not significant, since it is relatively easier to learn invariance with a large batch size ($25000$). In the DomainBed setting, the batch size is significantly reduced ($8$-$512$), making this learning task much harder. The results are reported in Table \ref{tbl:abla_pdm}.

\begin{table}[ht]
\centering
\small
\caption{Superiority of PDM on Colored MNIST of DomainBed.}
\label{tbl:abla_pdm}
\adjustbox{max width=\textwidth}{%
\begin{tabular}{lcccc}
\toprule
\textbf{Algorithm} & \textbf{90\%} & \textbf{80\%} & \textbf{10\%} & \textbf{Average} \\
\midrule
ERM & 71.8 $\pm$ 0.4 & 72.9 $\pm$ 0.1 & 28.7 $\pm$ 0.5 & 57.8 \\
IGA & 72.6 $\pm$ 0.3 & 72.9 $\pm$ 0.2 & 50.0 $\pm$ 1.2 & 65.2 \\
Fishr & 74.1 $\pm$ 0.6 & 73.3 $\pm$ 0.1 & 58.9 $\pm$ 3.7 & 68.8 \\
IGA + Fishr & 73.3 $\pm$ 0.0 & 72.6 $\pm$ 0.5 & 66.3 $\pm$ 2.9 & 70.7 \\
\midrule
IDM & \textbf{74.2} $\pm$ 0.6 & \textbf{73.5} $\pm$ 0.2 & \textbf{68.3} $\pm$ 2.5 & \textbf{72.0} \\
\bottomrule
\end{tabular}}
\end{table}

As can be seen, IDM achieves significantly higher performance on Colored MNIST (72.0\%) even compared to the combination of IGA + Fishr (70.7\%). This verifies our conclusion that matching the expectation and the variance is not sufficient for complex probability distributions, and demonstrates the superiority of the proposed PDM method for distribution alignment.

\subsection{Running Time Comparison}
Since IDM only stores historical gradients and representations for a single batch from each training domain, the storage and computation overhead is marginal compared to training the entire network. As shown in Table \ref{tbl:running_time}, the training time is only 5\% longer compared to ERM on the largest DomainNet dataset.

\begin{table}
\small
    \centering
    \caption{Computational overhead of IDM using default batch size.}
    \label{tbl:running_time}
    \begin{tabular}{lcccccc}
        \toprule
        \multirow{2}{*}{Dataset} & \multicolumn{3}{c}{Training Time (h)} & \multicolumn{3}{c}{Memory Requirement (GB)} \\
        \cmidrule(l{2pt}r{2pt}){2-4} \cmidrule(l{2pt}r{2pt}){5-7}
        & ERM & IDM & Overhead & ERM & IDM & Overhead \\
        \midrule
        ColoredMNIST & 0.076 & 0.088 & 14.6\% & 0.138 & 0.139 & 0.2\% \\
        RotatedMNIST & 0.101 & 0.110 & 9.3\% & 0.338 & 0.342 & 1.0\%\\
        VLCS & 0.730 & 0.744 & 2.0\% & 8.189 & 8.199 & 0.1\% \\
        PACS & 0.584 & 0.593 & 1.5\% & 8.189 & 8.201 & 0.1\% \\
        OfficeHome & 0.690 & 0.710 & 2.9\% & 8.191 & 8.506 & 3.8\% \\
        TerraIncognita & 0.829 & 0.840 & 1.3\% & 8.189 & 8.208 & 0.2\% \\
        DomainNet & 2.805 & 2.947 & 5.0\% & 13.406 & 16.497 & 23.1\% \\
        \bottomrule
    \end{tabular}
\end{table}

\subsection{Full DomainBed Results} \label{sec:full_domainbed}

In this paper, we focus on the test-domain model selection criterion, where the validation set follows the same distribution as the test domains. Our choice is well-motivated for the following reasons:
\begin{itemize}
    \item Test-domain validation is provided by the DomainBed benchmark as one of the default model-selection methods, and is also widely adopted in the literature in many significant works like IRM \cite{irm2019}, V-Rex \cite{vrex2021}, and Fishr \cite{fishr2022}.
    \item As suggested by Proposition \ref{prop:concept_shift}, any algorithm that fits well on training domains will suffer from strictly positive risks in test domains once concept shift is induced. Therefore, training-domain validation would result in sub-optimal selection results.
    \item Training-domain validation may render efforts to address concept shift useless, as spurious features are often more predictive than invariant ones. This is particularly unfair for algorithms that aim to tackle the concept shift. As shown in Table 9 in \cite{fishr2022}, no algorithm can significantly outperform ERM on Colored MNIST using training-domain validation (an exception is ARM which uses test-time adaptation, and thus cannot be directly compared), even though ERM is shown to perform much worse than random guessing (10\% v.s. 50\% accuracy) for the last domain (see Table 1 in \cite{irm2019} and Appendix D.4.1 in \cite{fishr2022}). As a result, models selected by training-domain validation may not generalize well when concept shift is substantial.
    \item As mentioned by \cite{d2022underspecification}, training-domain validation suffers from underspecification, where predictors with equivalently strong performances in training domains may behave very differently during testing. It is also emphasized by \cite{teney2022evading} that OOD performance cannot, by definition, be performed with a validation set from the same distribution as the training data. This further raises concerns about the validity of using training-domain accuracies for validation purposes.
    \item Moreover, test-domain validation is also applicable in practice, as it is feasible to label a few test-domain samples for validation purposes. It is also unrealistic to deploy models in target environments without any form of verification, making such efforts necessary in practice.
\end{itemize}

Due to the reasons listed above, we believe that the test-domain validation results are sufficient to demonstrate the effectiveness of our approach in real-world learning scenarios. We report detailed results of IDM for each domain in each dataset of the DomainBed benchmark under test-domain model selection for a complete evaluation in Table \ref{tbl:domainbed_cmnist} - \ref{tbl:domainbed_domainnet}. Note that detailed scores of certain algorithms (Fish, CausIRL) are not available.

\begin{table}[htbp]
\centering
\small
\caption{Detailed results on Colored MNIST in DomainBed.}
\label{tbl:domainbed_cmnist}
\adjustbox{max width=\textwidth}{%
\begin{tabular}{lcccc}
\toprule
\textbf{Algorithm} & \textbf{90\%} & \textbf{80\%} & \textbf{10\%} & \textbf{Average} \\
\midrule
ERM & 71.8 $\pm$ 0.4 & 72.9 $\pm$ 0.1 & 28.7 $\pm$ 0.5 & 57.8 \\
IRM & 72.0 $\pm$ 0.1 & 72.5 $\pm$ 0.3 & 58.5 $\pm$ 3.3 & 67.7 \\
GroupDRO & 73.5 $\pm$ 0.3 & 73.0 $\pm$ 0.3 & 36.8 $\pm$ 2.8 & 61.1 \\
Mixup & 72.5 $\pm$ 0.2 & 73.9 $\pm$ 0.4 & 28.6 $\pm$ 0.2 & 58.4 \\
MLDG & 71.9 $\pm$ 0.3 & 73.5 $\pm$ 0.2 & 29.1 $\pm$ 0.9 & 58.2 \\
CORAL & 71.1 $\pm$ 0.2 & 73.4 $\pm$ 0.2 & 31.1 $\pm$ 1.6 & 58.6 \\
MMD & 69.0 $\pm$ 2.3 & 70.4 $\pm$ 1.6 & 50.6 $\pm$ 0.2 & 63.3 \\
DANN & 72.4 $\pm$ 0.5 & 73.9 $\pm$ 0.5 & 24.9 $\pm$ 2.7 & 57.0 \\
CDANN & 71.8 $\pm$ 0.5 & 72.9 $\pm$ 0.1 & 33.8 $\pm$ 6.4 & 59.5 \\
MTL & 71.2 $\pm$ 0.2 & 73.5 $\pm$ 0.2 & 28.0 $\pm$ 0.6 & 57.6 \\
SagNet & 72.1 $\pm$ 0.3 & 73.2 $\pm$ 0.3 & 29.4 $\pm$ 0.5 & 58.2 \\
ARM & 84.9 $\pm$ 0.9 & 76.8 $\pm$ 0.6 & 27.9 $\pm$ 2.1 & 63.2 \\
V-REx & 72.8 $\pm$ 0.3 & 73.0 $\pm$ 0.3 & 55.2 $\pm$ 4.0 & 67.0 \\
RSC & 72.0 $\pm$ 0.1 & 73.2 $\pm$ 0.1 & 30.2 $\pm$ 1.6 & 58.5 \\
AND-mask & 71.9 $\pm$ 0.6 & 73.6 $\pm$ 0.5 & 30.2 $\pm$ 1.4 & 58.6 \\
SAND-mask & 79.9 $\pm$ 3.8 & 75.9 $\pm$ 1.6 & 31.6 $\pm$ 1.1 & 62.3 \\
Fishr & 74.1 $\pm$ 0.6 & 73.3 $\pm$ 0.1 & 58.9 $\pm$ 3.7 & 68.8 \\
SelfReg & 71.3 $\pm$ 0.4 & 73.4 $\pm$ 0.2 & 29.3 $\pm$ 2.1 & 58.0 \\
\midrule
IDM & 74.2 $\pm$ 0.6 & 73.5 $\pm$ 0.2 & 68.3 $\pm$ 2.5 & 72.0 \\
\bottomrule
\end{tabular}}
\end{table}

\begin{table}[htbp]
\centering
\small
\caption{Detailed results on Rotated MNIST in DomainBed.}
\label{tbl:domainbed_rmnist}
\adjustbox{max width=\textwidth}{%
\begin{tabular}{lccccccc}
\toprule
\textbf{Algorithm} & \textbf{0} & \textbf{15} & \textbf{30} & \textbf{45} & \textbf{60} & \textbf{75} & \textbf{Average} \\
\midrule
ERM & 95.3 $\pm$ 0.2 & 98.7 $\pm$ 0.1 & 98.9 $\pm$ 0.1 & 98.7 $\pm$ 0.2 & 98.9 $\pm$ 0.0 & 96.2 $\pm$ 0.2 & 97.8 \\
IRM & 94.9 $\pm$ 0.6 & 98.7 $\pm$ 0.2 & 98.6 $\pm$ 0.1 & 98.6 $\pm$ 0.2 & 98.7 $\pm$ 0.1 & 95.2 $\pm$ 0.3 & 97.5 \\
GroupDRO & 95.9 $\pm$ 0.1 & 99.0 $\pm$ 0.1 & 98.9 $\pm$ 0.1 & 98.8 $\pm$ 0.1 & 98.6 $\pm$ 0.1 & 96.3 $\pm$ 0.4 & 97.9 \\
Mixup & 95.8 $\pm$ 0.3 & 98.7 $\pm$ 0.0 & 99.0 $\pm$ 0.1 & 98.8 $\pm$ 0.1 & 98.8 $\pm$ 0.1 & 96.6 $\pm$ 0.2 & 98.0 \\
MLDG & 95.7 $\pm$ 0.2 & 98.9 $\pm$ 0.1 & 98.8 $\pm$ 0.1 & 98.9 $\pm$ 0.1 & 98.6 $\pm$ 0.1 & 95.8 $\pm$ 0.4 & 97.8 \\
CORAL & 96.2 $\pm$ 0.2 & 98.8 $\pm$ 0.1 & 98.8 $\pm$ 0.1 & 98.8 $\pm$ 0.1 & 98.9 $\pm$ 0.1 & 96.4 $\pm$ 0.2 & 98.0 \\
MMD & 96.1 $\pm$ 0.2 & 98.9 $\pm$ 0.0 & 99.0 $\pm$ 0.0 & 98.8 $\pm$ 0.0 & 98.9 $\pm$ 0.0 & 96.4 $\pm$ 0.2 & 98.0 \\
DANN & 95.9 $\pm$ 0.1 & 98.9 $\pm$ 0.1 & 98.6 $\pm$ 0.2 & 98.7 $\pm$ 0.1 & 98.9 $\pm$ 0.0 & 96.3 $\pm$ 0.3 & 97.9 \\
CDANN & 95.9 $\pm$ 0.2 & 98.8 $\pm$ 0.0 & 98.7 $\pm$ 0.1 & 98.9 $\pm$ 0.1 & 98.8 $\pm$ 0.1 & 96.1 $\pm$ 0.3 & 97.9 \\
MTL & 96.1 $\pm$ 0.2 & 98.9 $\pm$ 0.0 & 99.0 $\pm$ 0.0 & 98.7 $\pm$ 0.1 & 99.0 $\pm$ 0.0 & 95.8 $\pm$ 0.3 & 97.9 \\
SagNet & 95.9 $\pm$ 0.1 & 99.0 $\pm$ 0.1 & 98.9 $\pm$ 0.1 & 98.6 $\pm$ 0.1 & 98.8 $\pm$ 0.1 & 96.3 $\pm$ 0.1 & 97.9 \\
ARM & 95.9 $\pm$ 0.4 & 99.0 $\pm$ 0.1 & 98.8 $\pm$ 0.1 & 98.9 $\pm$ 0.1 & 99.1 $\pm$ 0.1 & 96.7 $\pm$ 0.2 & 98.1 \\
V-REx & 95.5 $\pm$ 0.2 & 99.0 $\pm$ 0.0 & 98.7 $\pm$ 0.2 & 98.8 $\pm$ 0.1 & 98.8 $\pm$ 0.0 & 96.4 $\pm$ 0.0 & 97.9 \\
RSC & 95.4 $\pm$ 0.1 & 98.6 $\pm$ 0.1 & 98.6 $\pm$ 0.1 & 98.9 $\pm$ 0.0 & 98.8 $\pm$ 0.1 & 95.4 $\pm$ 0.3 & 97.6 \\
AND-mask & 94.9 $\pm$ 0.1 & 98.8 $\pm$ 0.1 & 98.8 $\pm$ 0.1 & 98.7 $\pm$ 0.2 & 98.6 $\pm$ 0.2 & 95.5 $\pm$ 0.2 & 97.5 \\
SAND-mask & 94.7 $\pm$ 0.2 & 98.5 $\pm$ 0.2 & 98.6 $\pm$ 0.1 & 98.6 $\pm$ 0.1 & 98.5 $\pm$ 0.1 & 95.2 $\pm$ 0.1 & 97.4 \\
Fishr & 95.8 $\pm$ 0.1 & 98.3 $\pm$ 0.1 & 98.8 $\pm$ 0.1 & 98.6 $\pm$ 0.3 & 98.7 $\pm$ 0.1 & 96.5 $\pm$ 0.1 & 97.8 \\
SelfReg & 96.0 $\pm$ 0.3 & 98.9 $\pm$ 0.1 & 98.9 $\pm$ 0.1 & 98.9 $\pm$ 0.1 & 98.9 $\pm$ 0.1 & 96.8 $\pm$ 0.1 & 98.1 \\
\midrule
IDM & 96.1 $\pm$ 0.3 & 98.7 $\pm$ 0.1 & 99.1 $\pm$ 0.1 & 98.9 $\pm$ 0.1 & 98.9 $\pm$ 0.1 & 96.6 $\pm$ 0.1 & 98.0 \\
\bottomrule
\end{tabular}}
\end{table}

\begin{table}[htbp]
\centering
\small
\caption{Detailed results on VLCS in DomainBed.}
\label{tbl:domainbed_vlcs}
\adjustbox{max width=\textwidth}{%
\begin{tabular}{lccccc}
\toprule
\textbf{Algorithm} & \textbf{C} & \textbf{L} & \textbf{S} & \textbf{V} & \textbf{Average} \\
\midrule
ERM & 97.6 $\pm$ 0.3 & 67.9 $\pm$ 0.7 & 70.9 $\pm$ 0.2 & 74.0 $\pm$ 0.6 & 77.6 \\
IRM & 97.3 $\pm$ 0.2 & 66.7 $\pm$ 0.1 & 71.0 $\pm$ 2.3 & 72.8 $\pm$ 0.4 & 76.9 \\
GroupDRO & 97.7 $\pm$ 0.2 & 65.9 $\pm$ 0.2 & 72.8 $\pm$ 0.8 & 73.4 $\pm$ 1.3 & 77.4 \\
Mixup & 97.8 $\pm$ 0.4 & 67.2 $\pm$ 0.4 & 71.5 $\pm$ 0.2 & 75.7 $\pm$ 0.6 & 78.1 \\
MLDG & 97.1 $\pm$ 0.5 & 66.6 $\pm$ 0.5 & 71.5 $\pm$ 0.1 & 75.0 $\pm$ 0.9 & 77.5 \\
CORAL & 97.3 $\pm$ 0.2 & 67.5 $\pm$ 0.6 & 71.6 $\pm$ 0.6 & 74.5 $\pm$ 0.0 & 77.7 \\
MMD & 98.8 $\pm$ 0.0 & 66.4 $\pm$ 0.4 & 70.8 $\pm$ 0.5 & 75.6 $\pm$ 0.4 & 77.9 \\
DANN & 99.0 $\pm$ 0.2 & 66.3 $\pm$ 1.2 & 73.4 $\pm$ 1.4 & 80.1 $\pm$ 0.5 & 79.7 \\
CDANN & 98.2 $\pm$ 0.1 & 68.8 $\pm$ 0.5 & 74.3 $\pm$ 0.6 & 78.1 $\pm$ 0.5 & 79.9 \\
MTL & 97.9 $\pm$ 0.7 & 66.1 $\pm$ 0.7 & 72.0 $\pm$ 0.4 & 74.9 $\pm$ 1.1 & 77.7 \\
SagNet & 97.4 $\pm$ 0.3 & 66.4 $\pm$ 0.4 & 71.6 $\pm$ 0.1 & 75.0 $\pm$ 0.8 & 77.6 \\
ARM & 97.6 $\pm$ 0.6 & 66.5 $\pm$ 0.3 & 72.7 $\pm$ 0.6 & 74.4 $\pm$ 0.7 & 77.8 \\
V-REx & 98.4 $\pm$ 0.2 & 66.4 $\pm$ 0.7 & 72.8 $\pm$ 0.1 & 75.0 $\pm$ 1.4 & 78.1 \\
RSC & 98.0 $\pm$ 0.4 & 67.2 $\pm$ 0.3 & 70.3 $\pm$ 1.3 & 75.6 $\pm$ 0.4 & 77.8 \\
AND-mask & 98.3 $\pm$ 0.3 & 64.5 $\pm$ 0.2 & 69.3 $\pm$ 1.3 & 73.4 $\pm$ 1.3 & 76.4 \\
SAND-mask & 97.6 $\pm$ 0.3 & 64.5 $\pm$ 0.6 & 69.7 $\pm$ 0.6 & 73.0 $\pm$ 1.2 & 76.2 \\
Fishr & 97.6 $\pm$ 0.7 & 67.3 $\pm$ 0.5 & 72.2 $\pm$ 0.9 & 75.7 $\pm$ 0.3 & 78.2 \\
SelfReg & 97.9 $\pm$ 0.4 & 66.7 $\pm$ 0.1 & 73.5 $\pm$ 0.7 & 74.7 $\pm$ 0.7 & 78.2 \\
\midrule
IDM & 97.6 $\pm$ 0.3 & 66.9 $\pm$ 0.3 & 71.8 $\pm$ 0.5 & 76.0 $\pm$ 1.3 & 78.1 \\
\bottomrule
\end{tabular}}
\end{table}

\begin{table}[htbp]
\centering
\small
\caption{Detailed results on PACS in DomainBed.}
\label{tbl:domainbed_pacs}
\adjustbox{max width=\textwidth}{%
\begin{tabular}{lccccc}
\toprule
\textbf{Algorithm} & \textbf{A} & \textbf{C} & \textbf{P} & \textbf{S} & \textbf{Average} \\
\midrule
ERM & 86.5 $\pm$ 1.0 & 81.3 $\pm$ 0.6 & 96.2 $\pm$ 0.3 & 82.7 $\pm$ 1.1 & 86.7 \\
IRM & 84.2 $\pm$ 0.9 & 79.7 $\pm$ 1.5 & 95.9 $\pm$ 0.4 & 78.3 $\pm$ 2.1 & 84.5 \\
GroupDRO & 87.5 $\pm$ 0.5 & 82.9 $\pm$ 0.6 & 97.1 $\pm$ 0.3 & 81.1 $\pm$ 1.2 & 87.1 \\
Mixup & 87.5 $\pm$ 0.4 & 81.6 $\pm$ 0.7 & 97.4 $\pm$ 0.2 & 80.8 $\pm$ 0.9 & 86.8 \\
MLDG & 87.0 $\pm$ 1.2 & 82.5 $\pm$ 0.9 & 96.7 $\pm$ 0.3 & 81.2 $\pm$ 0.6 & 86.8 \\
CORAL & 86.6 $\pm$ 0.8 & 81.8 $\pm$ 0.9 & 97.1 $\pm$ 0.5 & 82.7 $\pm$ 0.6 & 87.1 \\
MMD & 88.1 $\pm$ 0.8 & 82.6 $\pm$ 0.7 & 97.1 $\pm$ 0.5 & 81.2 $\pm$ 1.2 & 87.2 \\
DANN & 87.0 $\pm$ 0.4 & 80.3 $\pm$ 0.6 & 96.8 $\pm$ 0.3 & 76.9 $\pm$ 1.1 & 85.2 \\
CDANN & 87.7 $\pm$ 0.6 & 80.7 $\pm$ 1.2 & 97.3 $\pm$ 0.4 & 77.6 $\pm$ 1.5 & 85.8 \\
MTL & 87.0 $\pm$ 0.2 & 82.7 $\pm$ 0.8 & 96.5 $\pm$ 0.7 & 80.5 $\pm$ 0.8 & 86.7 \\
SagNet & 87.4 $\pm$ 0.5 & 81.2 $\pm$ 1.2 & 96.3 $\pm$ 0.8 & 80.7 $\pm$ 1.1 & 86.4 \\
ARM & 85.0 $\pm$ 1.2 & 81.4 $\pm$ 0.2 & 95.9 $\pm$ 0.3 & 80.9 $\pm$ 0.5 & 85.8 \\
V-REx & 87.8 $\pm$ 1.2 & 81.8 $\pm$ 0.7 & 97.4 $\pm$ 0.2 & 82.1 $\pm$ 0.7 & 87.2 \\
RSC & 86.0 $\pm$ 0.7 & 81.8 $\pm$ 0.9 & 96.8 $\pm$ 0.7 & 80.4 $\pm$ 0.5 & 86.2 \\
AND-mask & 86.4 $\pm$ 1.1 & 80.8 $\pm$ 0.9 & 97.1 $\pm$ 0.2 & 81.3 $\pm$ 1.1 & 86.4 \\
SAND-mask & 86.1 $\pm$ 0.6 & 80.3 $\pm$ 1.0 & 97.1 $\pm$ 0.3 & 80.0 $\pm$ 1.3 & 85.9 \\
Fishr & 87.9 $\pm$ 0.6 & 80.8 $\pm$ 0.5 & 97.9 $\pm$ 0.4 & 81.1 $\pm$ 0.8 & 86.9 \\
SelfReg & 87.5 $\pm$ 0.1 & 83.0 $\pm$ 0.1 & 97.6 $\pm$ 0.1 & 82.8 $\pm$ 0.2 & 87.7 \\
\midrule
IDM & 88.0 $\pm$ 0.3 & 82.6 $\pm$ 0.6 & 97.6 $\pm$ 0.4 & 82.3 $\pm$ 0.6 & 87.6 \\
\bottomrule
\end{tabular}}
\end{table}

\begin{table}[htbp]
\centering
\small
\caption{Detailed results on OfficeHome in DomainBed.}
\label{tbl:domainbed_officehome}
\adjustbox{max width=\textwidth}{%
\begin{tabular}{lccccc}
\toprule
\textbf{Algorithm} & \textbf{A} & \textbf{C} & \textbf{P} & \textbf{R} & \textbf{Average} \\
\midrule
ERM & 61.7 $\pm$ 0.7 & 53.4 $\pm$ 0.3 & 74.1 $\pm$ 0.4 & 76.2 $\pm$ 0.6 & 66.4 \\
IRM & 56.4 $\pm$ 3.2 & 51.2 $\pm$ 2.3 & 71.7 $\pm$ 2.7 & 72.7 $\pm$ 2.7 & 63.0 \\
GroupDRO & 60.5 $\pm$ 1.6 & 53.1 $\pm$ 0.3 & 75.5 $\pm$ 0.3 & 75.9 $\pm$ 0.7 & 66.2 \\
Mixup & 63.5 $\pm$ 0.2 & 54.6 $\pm$ 0.4 & 76.0 $\pm$ 0.3 & 78.0 $\pm$ 0.7 & 68.0 \\
MLDG & 60.5 $\pm$ 0.7 & 54.2 $\pm$ 0.5 & 75.0 $\pm$ 0.2 & 76.7 $\pm$ 0.5 & 66.6 \\
CORAL & 64.8 $\pm$ 0.8 & 54.1 $\pm$ 0.9 & 76.5 $\pm$ 0.4 & 78.2 $\pm$ 0.4 & 68.4 \\
MMD & 60.4 $\pm$ 1.0 & 53.4 $\pm$ 0.5 & 74.9 $\pm$ 0.1 & 76.1 $\pm$ 0.7 & 66.2 \\
DANN & 60.6 $\pm$ 1.4 & 51.8 $\pm$ 0.7 & 73.4 $\pm$ 0.5 & 75.5 $\pm$ 0.9 & 65.3 \\
CDANN & 57.9 $\pm$ 0.2 & 52.1 $\pm$ 1.2 & 74.9 $\pm$ 0.7 & 76.2 $\pm$ 0.2 & 65.3 \\
MTL & 60.7 $\pm$ 0.8 & 53.5 $\pm$ 1.3 & 75.2 $\pm$ 0.6 & 76.6 $\pm$ 0.6 & 66.5 \\
SagNet & 62.7 $\pm$ 0.5 & 53.6 $\pm$ 0.5 & 76.0 $\pm$ 0.3 & 77.8 $\pm$ 0.1 & 67.5 \\
ARM & 58.8 $\pm$ 0.5 & 51.8 $\pm$ 0.7 & 74.0 $\pm$ 0.1 & 74.4 $\pm$ 0.2 & 64.8 \\
V-REx & 59.6 $\pm$ 1.0 & 53.3 $\pm$ 0.3 & 73.2 $\pm$ 0.5 & 76.6 $\pm$ 0.4 & 65.7 \\
RSC & 61.7 $\pm$ 0.8 & 53.0 $\pm$ 0.9 & 74.8 $\pm$ 0.8 & 76.3 $\pm$ 0.5 & 66.5 \\
AND-mask & 60.3 $\pm$ 0.5 & 52.3 $\pm$ 0.6 & 75.1 $\pm$ 0.2 & 76.6 $\pm$ 0.3 & 66.1 \\
SAND-mask & 59.9 $\pm$ 0.7 & 53.6 $\pm$ 0.8 & 74.3 $\pm$ 0.4 & 75.8 $\pm$ 0.5 & 65.9 \\
Fishr & 63.4 $\pm$ 0.8 & 54.2 $\pm$ 0.3 & 76.4 $\pm$ 0.3 & 78.5 $\pm$ 0.2 & 68.2 \\
SelfReg & 64.2 $\pm$ 0.6 & 53.6 $\pm$ 0.7 & 76.7 $\pm$ 0.3 & 77.9 $\pm$ 0.5 & 68.1 \\
\midrule
IDM & 64.4 $\pm$ 0.3 & 54.4 $\pm$ 0.6 & 76.5 $\pm$ 0.3 & 78.0 $\pm$ 0.4 & 68.3 \\
\bottomrule
\end{tabular}}
\end{table}

\begin{table}[htbp]
\centering
\small
\caption{Detailed results on TerraIncognita in DomainBed.}
\label{tbl:domainbed_terra}
\adjustbox{max width=\textwidth}{%
\begin{tabular}{lccccc}
\toprule
\textbf{Algorithm} & \textbf{L100} & \textbf{L38} & \textbf{L43} & \textbf{L46} & \textbf{Average} \\
\midrule
ERM & 59.4 $\pm$ 0.9 & 49.3 $\pm$ 0.6 & 60.1 $\pm$ 1.1 & 43.2 $\pm$ 0.5 & 53.0 \\
IRM & 56.5 $\pm$ 2.5 & 49.8 $\pm$ 1.5 & 57.1 $\pm$ 2.2 & 38.6 $\pm$ 1.0 & 50.5 \\
GroupDRO & 60.4 $\pm$ 1.5 & 48.3 $\pm$ 0.4 & 58.6 $\pm$ 0.8 & 42.2 $\pm$ 0.8 & 52.4 \\
Mixup & 67.6 $\pm$ 1.8 & 51.0 $\pm$ 1.3 & 59.0 $\pm$ 0.0 & 40.0 $\pm$ 1.1 & 54.4 \\
MLDG & 59.2 $\pm$ 0.1 & 49.0 $\pm$ 0.9 & 58.4 $\pm$ 0.9 & 41.4 $\pm$ 1.0 & 52.0 \\
CORAL & 60.4 $\pm$ 0.9 & 47.2 $\pm$ 0.5 & 59.3 $\pm$ 0.4 & 44.4 $\pm$ 0.4 & 52.8 \\
MMD & 60.6 $\pm$ 1.1 & 45.9 $\pm$ 0.3 & 57.8 $\pm$ 0.5 & 43.8 $\pm$ 1.2 & 52.0 \\
DANN & 55.2 $\pm$ 1.9 & 47.0 $\pm$ 0.7 & 57.2 $\pm$ 0.9 & 42.9 $\pm$ 0.9 & 50.6 \\
CDANN & 56.3 $\pm$ 2.0 & 47.1 $\pm$ 0.9 & 57.2 $\pm$ 1.1 & 42.4 $\pm$ 0.8 & 50.8 \\
MTL & 58.4 $\pm$ 2.1 & 48.4 $\pm$ 0.8 & 58.9 $\pm$ 0.6 & 43.0 $\pm$ 1.3 & 52.2 \\
SagNet & 56.4 $\pm$ 1.9 & 50.5 $\pm$ 2.3 & 59.1 $\pm$ 0.5 & 44.1 $\pm$ 0.6 & 52.5 \\
ARM & 60.1 $\pm$ 1.5 & 48.3 $\pm$ 1.6 & 55.3 $\pm$ 0.6 & 40.9 $\pm$ 1.1 & 51.2 \\
V-REx & 56.8 $\pm$ 1.7 & 46.5 $\pm$ 0.5 & 58.4 $\pm$ 0.3 & 43.8 $\pm$ 0.3 & 51.4 \\
RSC & 59.9 $\pm$ 1.4 & 46.7 $\pm$ 0.4 & 57.8 $\pm$ 0.5 & 44.3 $\pm$ 0.6 & 52.1 \\
AND-mask & 54.7 $\pm$ 1.8 & 48.4 $\pm$ 0.5 & 55.1 $\pm$ 0.5 & 41.3 $\pm$ 0.6 & 49.8 \\
SAND-mask & 56.2 $\pm$ 1.8 & 46.3 $\pm$ 0.3 & 55.8 $\pm$ 0.4 & 42.6 $\pm$ 1.2 & 50.2 \\
Fishr & 60.4 $\pm$ 0.9 & 50.3 $\pm$ 0.3 & 58.8 $\pm$ 0.5 & 44.9 $\pm$ 0.5 & 53.6 \\
SelfReg & 60.0 $\pm$ 2.3 & 48.8 $\pm$ 1.0 & 58.6 $\pm$ 0.8 & 44.0 $\pm$ 0.6 & 52.8 \\
\midrule
IDM & 60.1 $\pm$ 1.4 & 48.8 $\pm$ 1.9 & 57.9 $\pm$ 0.2 & 44.3 $\pm$ 1.2 & 52.8 \\
\bottomrule
\end{tabular}}
\end{table}

\begin{table}[htbp]
\centering
\small
\caption{Detailed results on DomainNet in DomainBed.}
\label{tbl:domainbed_domainnet}
\adjustbox{max width=\textwidth}{%
\begin{tabular}{lccccccc}
\toprule
\textbf{Algorithm} & \textbf{clip} & \textbf{info} & \textbf{paint} & \textbf{quick} & \textbf{real} & \textbf{sketch} & \textbf{Average} \\
\midrule
ERM & 58.6 $\pm$ 0.3 & 19.2 $\pm$ 0.2 & 47.0 $\pm$ 0.3 & 13.2 $\pm$ 0.2 & 59.9 $\pm$ 0.3 & 49.8 $\pm$ 0.4 & 41.3 \\
IRM & 40.4 $\pm$ 6.6 & 12.1 $\pm$ 2.7 & 31.4 $\pm$ 5.7 & 9.8 $\pm$ 1.2 & 37.7 $\pm$ 9.0 & 36.7 $\pm$ 5.3 & 28.0 \\
GroupDRO & 47.2 $\pm$ 0.5 & 17.5 $\pm$ 0.4 & 34.2 $\pm$ 0.3 & 9.2 $\pm$ 0.4 & 51.9 $\pm$ 0.5 & 40.1 $\pm$ 0.6 & 33.4 \\
Mixup & 55.6 $\pm$ 0.1 & 18.7 $\pm$ 0.4 & 45.1 $\pm$ 0.5 & 12.8 $\pm$ 0.3 & 57.6 $\pm$ 0.5 & 48.2 $\pm$ 0.4 & 39.6 \\
MLDG & 59.3 $\pm$ 0.1 & 19.6 $\pm$ 0.2 & 46.8 $\pm$ 0.2 & 13.4 $\pm$ 0.2 & 60.1 $\pm$ 0.4 & 50.4 $\pm$ 0.3 & 41.6 \\
CORAL & 59.2 $\pm$ 0.1 & 19.9 $\pm$ 0.2 & 47.4 $\pm$ 0.2 & 14.0 $\pm$ 0.4 & 59.8 $\pm$ 0.2 & 50.4 $\pm$ 0.4 & 41.8 \\
MMD & 32.2 $\pm$ 13.3 & 11.2 $\pm$ 4.5 & 26.8 $\pm$ 11.3 & 8.8 $\pm$ 2.2 & 32.7 $\pm$ 13.8 & 29.0 $\pm$ 11.8 & 23.5 \\
DANN & 53.1 $\pm$ 0.2 & 18.3 $\pm$ 0.1 & 44.2 $\pm$ 0.7 & 11.9 $\pm$ 0.1 & 55.5 $\pm$ 0.4 & 46.8 $\pm$ 0.6 & 38.3 \\
CDANN & 54.6 $\pm$ 0.4 & 17.3 $\pm$ 0.1 & 44.2 $\pm$ 0.7 & 12.8 $\pm$ 0.2 & 56.2 $\pm$ 0.4 & 45.9 $\pm$ 0.5 & 38.5 \\
MTL & 58.0 $\pm$ 0.4 & 19.2 $\pm$ 0.2 & 46.2 $\pm$ 0.1 & 12.7 $\pm$ 0.2 & 59.9 $\pm$ 0.1 & 49.0 $\pm$ 0.0 & 40.8 \\
SagNet & 57.7 $\pm$ 0.3 & 19.1 $\pm$ 0.1 & 46.3 $\pm$ 0.5 & 13.5 $\pm$ 0.4 & 58.9 $\pm$ 0.4 & 49.5 $\pm$ 0.2 & 40.8 \\
ARM & 49.6 $\pm$ 0.4 & 16.5 $\pm$ 0.3 & 41.5 $\pm$ 0.8 & 10.8 $\pm$ 0.1 & 53.5 $\pm$ 0.3 & 43.9 $\pm$ 0.4 & 36.0 \\
V-REx & 43.3 $\pm$ 4.5 & 14.1 $\pm$ 1.8 & 32.5 $\pm$ 5.0 & 9.8 $\pm$ 1.1 & 43.5 $\pm$ 5.6 & 37.7 $\pm$ 4.5 & 30.1 \\
RSC & 55.0 $\pm$ 1.2 & 18.3 $\pm$ 0.5 & 44.4 $\pm$ 0.6 & 12.5 $\pm$ 0.1 & 55.7 $\pm$ 0.7 & 47.8 $\pm$ 0.9 & 38.9 \\
AND-mask & 52.3 $\pm$ 0.8 & 17.3 $\pm$ 0.5 & 43.7 $\pm$ 1.1 & 12.3 $\pm$ 0.4 & 55.8 $\pm$ 0.4 & 46.1 $\pm$ 0.8 & 37.9 \\
SAND-mask & 43.8 $\pm$ 1.3 & 15.2 $\pm$ 0.2 & 38.2 $\pm$ 0.6 & 9.0 $\pm$ 0.2 & 47.1 $\pm$ 1.1 & 39.9 $\pm$ 0.6 & 32.2 \\
Fishr & 58.3 $\pm$ 0.5 & 20.2 $\pm$ 0.2 & 47.9 $\pm$ 0.2 & 13.6 $\pm$ 0.3 & 60.5 $\pm$ 0.3 & 50.5 $\pm$ 0.3 & 41.8 \\
SelfReg & 60.7 $\pm$ 0.1 & 21.6 $\pm$ 0.1 & 49.5 $\pm$ 0.1 & 14.2 $\pm$ 0.3 & 60.7 $\pm$ 0.1 & 51.7 $\pm$ 0.1 & 43.1 \\
\midrule
IDM & 58.8 $\pm$ 0.3 & 20.7 $\pm$ 0.2 & 48.3 $\pm$ 0.1 & 13.7 $\pm$ 0.4 & 59.1 $\pm$ 0.1 & 50.2 $\pm$ 0.3 & 41.8 \\
\bottomrule
\end{tabular}}
\end{table}

\bibliography{references}
\bibliographystyle{IEEEtran}

\end{document}